\documentclass{article}
\usepackage{natbib}

\usepackage{microtype}
\usepackage{graphicx,xcolor}
\usepackage{subcaption}  % For subfigure references
\usepackage{float}       % For figure positioning
\usepackage{bm}
\usepackage{booktabs} % for professional tables
\usepackage{algorithm}
\usepackage{algorithmic}
\usepackage{fullpage}
\usepackage{dsfont}

\usepackage{hyperref}

\usepackage{statistics}
\usepackage[shortlabels]{enumitem}

\newcommand{\corrdp}{\textsf{CorrDP}}

\usepackage{amsthm}
\usepackage{amsmath}
\usepackage{amssymb}
\usepackage{mathtools}

\usepackage{thm-restate}

\usepackage[capitalize,noabbrev]{cleveref}

\theoremstyle{plain}
\newtheorem{theorem}{Theorem}[section]
\newtheorem{proposition}[theorem]{Proposition}
\newtheorem{lemma}[theorem]{Lemma}
\newtheorem{corollary}[theorem]{Corollary}
\theoremstyle{definition}
\newtheorem{definition}[theorem]{Definition}
\newtheorem{assumption}[theorem]{Assumption}
\newtheorem{example}[theorem]{Example}
\theoremstyle{remark}
\newtheorem{remark}[theorem]{Remark}

\begin{document}

\title{Integrating Feature Correlation in Differential Privacy with Applications in DP-ERM}

\author{Tianyu Wang\thanks{Department of Industrial Engineering and Operations Research, Columbia University. Email: \texttt{tw2837@columbia.edu}. } \and Luhao Zhang\thanks{Department of Applied Mathematics and Statistics, Johns Hopkins University. Email: \texttt{luhao.zhang@jhu.edu}.} \and Rachel Cummings\thanks{Department of Industrial Engineering and Operations Research, Columbia University. Email: \texttt{rac2239@columbia.edu}. Supported in part by NSF grants CNS-2138834 (CAREER) and EEC-2133516.}}
%\date{}

\maketitle

\begin{abstract}
Standard differential privacy imposes uniform privacy constraints across all features, overlooking the inherent distinction between sensitive and insensitive features in practice. In this paper, we introduce a relaxed definition of differential privacy that accounts for such heterogeneity, allowing certain features to be treated as insensitive even when correlated with sensitive ones. We propose a correlation-aware framework, {\corrdp}, which relaxes privacy for insensitive features while accounting for their correlations with sensitive features, with the correlations quantified using total variation distance. We design algorithms for differentially private empirical risk minimization (DP-ERM) under the {\corrdp} framework, incorporating distance-dependent noise into gradients for improved theoretical utility guarantees. When the correlation distance is unknown, we estimate it from the dataset and show that it achieves a comparable privacy-utility guarantee. We perform experiments on synthetic and real-world datasets and show that {\corrdp}-based DP-ERM algorithms consistently outperform the standard DP framework in the presence of insensitive features.
\end{abstract}

\section{Introduction}\label{sec:intro}
The growing reliance on personal data in domains such as economics and healthcare has amplified the need for privacy-preserving algorithms. Differential Privacy (DP, \citet{dwork2006differential}), which enforces a worst-case bound on privacy loss, ensures that the output of an algorithm does not depend significantly on any single data point. As a result, DP has become a standard methodology for safeguarding privacy during analysis of sensitive data. However, traditional DP assumes that all features of an individual’s data are equally sensitive, often leading to overly conservative privacy-utility trade-offs in practice.
 
In many real-world applications, feature sensitivity can vary, where some are highly sensitive (e.g., health status) and other are less so (e.g., age group). However, these features are often correlated.  Consider a healthcare dataset containing both a patient's blood pressure readings and age. Blood pressure measurements, which can reveal medical conditions, are more sensitive, while age may be less sensitive if presented in broad categories (e.g., age groups). However, these features are often correlated—age influences typical blood pressure ranges. Applying a uniform privacy mechanism that disregards these distinctions can lead to excessive information loss and suboptimal utility. Adding equal noise to both features also overlooks their correlation, distorting the dataset more than necessary. Recent semi-sensitive DP approaches \citep{shen2023classification,ghazi2024convex} address such distinctions by treating some features as private (e.g., blood pressure) and others as public (e.g., age). However, this approach risks privacy violations because revealing age can expose the conditional distribution of blood pressure, undermining its privacy. An ideal privacy mechanism would therefore account for such correlations by robustly protecting the sensitive measurements while adding minimal noise to less sensitive but correlated features.

Empirical Risk Minimization (ERM) is among the most fundamental and well-studied problems in privacy-preserving machine learning \citep{chaudhuri2011differentially}. The goal of ERM is to find the best parameter $\theta \in \R^{m}$ of a (convex) loss function to minimize empirical risk given a dataset $\Dscr$ of size $n$;  privacy of the output parameter is often achieved via incorporating noise into the gradient when conducting gradient descent at each step, i.e., DP-SGD \citep{abadi2016deep,wang2017differentially}.
In the standard $(\epsilon, \delta)$-DP setting, DP-ERM algorithms achieve a minimax optimal utility guarantee $\tilde{O}\Para{\sqrt{m} /(n \epsilon)}$. 
While this utility guarantee is theoretically minimax optimal~\citep{bassily2014private}, this guarantee can be overly conservative, particularly in high-dimensional settings with few sensitive features, as DP-SGD applies uniform noise to all dimensions. 

In this work, we investigate whether relaxing the DP definition can improve the privacy-utility tradeoff, particularly in datasets with features that are known to less sensitive. Specifically, we aim to answer: (1) What is an appropriate privacy mechanism for datasets with insensitive features that may be correlated with sensitive features? and (2) How much can a relaxed notion of differential privacy improve the privacy-utility tradeoff?
To address these questions, we design a privacy framework and mechanisms that consider the correlation among features while ensuring rigorous privacy and utility guarantees.

\paragraph{Contributions.} 
Motivated by the limitations of standard DP and practical applications where not all features are equally sensitive, we propose a new correlation-aware differential privacy framework ({\corrdp}). Our framework leverages feature correlations to achieve improved utility guarantees with modified algorithms, including extensions of both the standard Laplace mechanism and DP-ERM. In particular, our contributions are as follows:
\begin{enumerate}[leftmargin = *]
    \item \textbf{New Frameworks Incorporating Feature Correlation in DP:} We introduce {\corrdp}, a new DP notion incorporating feature correlations in the privacy constraint between neighboring databases. This notion offers a natural way to quantify correlations between sensitive and insensitive features, and collapses to the standard DP definition when all features are sensitive. It seamlessly integrates probability distances such as total variation distance, and standard mechanisms like the Laplace mechanism can be adapted to fit within this framework.
    \item \textbf{Improved Privacy-Utility Tradeoff in DP-ERM:} We apply {\corrdp} into DP-ERM and design corresponding {\corrdp}-SGD algorithms that incorporate distance-dependent noise for loss functions satisfying certain smoothness properties (see \Cref{asp:grad-sensitivity}), which includes generalized linear models. Our theoretical results show that {\corrdp} offers an improved privacy-utility tradeoff, providing a utility improvement by a factor of $\sqrt{m / m_s}$ under mild conditions, when $m_s$ features are sensitive. 
    \item \textbf{Distance Estimation: } When the correlation distance is unknown at the time of noise addition, we propose an estimation procedure that uses an upper confidence bound for the empirical estimate. This procedure, integrated within the {\corrdp} framework, eliminates the effects of both estimation and sensitivity errors while maintaining the same level of privacy-utility performance.
    \item \textbf{Empirical Evaluation}: To demonstrate the practical utility of distance estimation and the {\corrdp} framework, we conduct experiments using both synthetic and real-world datasets. Our experiments confirm that using {\corrdp} achieves better accuracy for the same fixed privacy budget.
\end{enumerate}

\subsection{Related work}
\textbf{Relaxations of DP.} 
Recent studies consider relaxations of DP 
to achieve better privacy-utility tradeoffs, especially for ERM. 
When relaxing the DP notion at the feature level, \citet{ghazi2021deep} considered label DP, which protects only the labels rather than the entire feature set. This is a special case of the notion called \emph{semi-sensitive DP}, where the feature domain includes both sensitive and insensitive features \citep{chua2024training}. Under such a setup, \citet{shen2023classification} designed a specific boosting algorithm for linear classification, while \citet{ghazi2024convex} investigated the general DP-ERM problem when the sensitive domain size is finite.
Empirically these relaxations have been successfully applied to vision \citep{shi2021selective} and language \citep{schneider2024masked} tasks without utility guarantees provided. 
In general, all these methods often rely on strong assumptions about the independence between insensitive and sensitive features. 
On the other hand, metric DP~\citep{andres2013geo,zhao2022survey,imola2022balancing} relaxes the standard DP notion under a general metric space. However, no prior work in the metric DP framework studied the possibility of capturing correlations between features for improved utility guarantees for DP-ERM. (See Appendix~\ref{app:defn-dps} for more discussion.) In contrast, our {\corrdp} notion specifies this by incorporating feature correlations with different sensitivity levels for general ERM losses, while allowing infinitely sensitive domains. Besides feature-level relaxations, other work incorporates the heterogeneity of elements within the data domain to relax privacy constraints. These relaxations include distance in entry pairs \citep{acharya2020context}, task-aware latent representation \citep{cheng2022task}, and graphs of data generating distributions \citep{geumlek2019profile}. All these relaxations operate under \emph{local DP} without a trusted data collector, compared to our setup under \emph{global DP} \citep{dwork2006differential}. 

\textbf{Utility Improvements in DP-ERM.} In the context of DP-ERM, other utility improvements arise from public data or better gradient geometry. When the loss function in DP-ERM is convex 
with the feature dimension $m$, the minimax optimal utility guarantee $\tilde O\Para{\frac{\sqrt{m}}{n\epsilon}}$ can often be improved by relaxing certain conditions, typically through adaptive gradients and/or the use of public data. When the gradient lies in a low-dimensional subspace, the utility bound can be improved to $\tilde O\Para{\frac{1}{n\epsilon}}$ \citep{zhou2020bypassing}. In more general settings where better gradient geometry is available, the private adaptive gradient method can further enhance utility. For example, \citet{asi2021private} introduced non-isotropic noise into the gradient, while \citet{li2022private} utilized side information as a preconditioner to adapt to the gradient geometry. These approaches both achieve a better utility rate $\tilde O(\frac{c}{n\epsilon})$ with $c \ll \sqrt{m}$. When incorporating public data into training, \citet{alon2019limits,lowy2024optimal} gave information theoretic utility lower bounds depending on the number of public data point. As highlighted in \Cref{subsec:insample}, our utility improvements are achieved by leveraging a refined noise scale that accounts for feature correlation with a reasonable privacy condition. This approach does not fundamentally rely on the availability of public data or assumptions about gradient geometry, and can be further improved when some favorable conditions, such as low-dimensional subspaces or improved gradient geometry, are present.

\textbf{Correlated Data.} 
Correlation between entries in the database may cause unexpected privacy issues. As discussed in \citet{kifer2011no}, one of the challenges when releasing correlated datasets is tuning noise to balance the privacy-utility tradeoff. \citet{song2017pufferfish} proposed \emph{Pufferfish privacy}, which is a generalization of DP and handles correlated entries. 
\citet{zhang2022correlated} provides a discussion on the correlated data in DP. To the best of our knowledge, all these works focus on correlation between individuals rather than features. 
\citet{zhang2022attribute} introduced attribute privacy for sensitive features, extending Pufferfish to allow feature correlations. Their approach protects privacy at the dataset- or distribution-level, whereas our framework focuses on the individual-level. (See Appendix~\ref{app:defn-dps} for a more detailed discussion.) \citet{chaudhuri2025managing} proposed \emph{Add-remove Heterogeneous DP}, where privacy levels depend on each user’s data; in contrast, our framework relaxes privacy in a dataset-independent way, based on statistical relation of insensitive attributes to sensitive ones. \citet{aliakbarpour2025enhancing} proposed \emph{Bayesian Coordinate DP (BCDP)} in a local DP setting without a trusted curator, which  typically yields worse utility than central models.

\section{CorrDP: setup and mechanisms}\label{sec:setup}
Let $\Xscr$ be the data domain where each point $X \in \Xscr \subseteq \R^m$ can be partitioned as $X = (X^{\Sscr}, X^{\Uscr})^{\top}$, where $\Sscr$ is the indices of sensitive features that require protection, and $\Uscr$ is the indices of insensitive features, with $\Sscr \cup \Uscr = [m]$ and $\Sscr \cap \Uscr = \emptyset$. Let $m_s$ be the number of sensitive features: $m_s = |\Sscr|$. While the main body of the paper adopts a binary framework distinguishing sensitive and insensitive features, one can naturally extend {\corrdp} by assigning a sensitivity parameter to each feature. This yields a relaxed {\corrdp} notion without compromising theoretical guarantees. Appendix~\ref{app:assumption} provides a detailed discussion and illustrates applications to ERM.

In the standard (global) differential privacy (\Cref{defn:global-dp} in Appendix~\ref{app:setup-detail}), all feature components of a data point are assumed to be equally sensitive, and thus the same privacy constraint is enforced on the change of any feature component between neighboring databases $\Dscr, \Dscr'$. 
To capture heterogeneity among feature changes, we present the concept of \corrdp\ that incorporates a \emph{distance} metric between databases $d(\Dscr,\Dscr')$, which quantifies the degree of privacy loss for feature differences there.

\begin{definition}[Neighboring Database]\label{defn:neighbor}
Databases $\Dscr$ and $\Dscr'$ are neighboring if they differ in at most one entry, $e$ and $e'$, which themselves differ only in their sensitive features or insensitive features. See Remark \ref{rem.neigh} for commentary on this restriction.
\end{definition}

\begin{definition}[{\corrdp}]\label{defn:corr-dp}
    A randomized algorithm $\Ascr$ is $(\epsilon, \delta)$-correlated differentially private for a distance metric $d$ if for all subsets $R \subseteq Range(\Ascr)$ and for all neighboring databases $\Dscr, \Dscr'$,
\[\P(\Ascr(\Dscr) \in R) \leq e^{\frac{\epsilon}{d(\Dscr, \Dscr')}} \P(\Ascr(\Dscr') \in R) + \delta.\]
\end{definition}

We require the distance metric $d(\Dscr, \Dscr')$ to satisfy certain properties. 

\begin{definition}[Axioms of Sensitivity Distance]\label{defn:required-distance}
    For two neighboring databases $\Dscr, \Dscr'$, 
    the distance metric $d$ captures how much the changes between $D$ and $D'$ depend on the sensitive component, and must satisfy the following: (1) $d(\Dscr, \Dscr') \in [0, 1]$, (2) when $e$ and $e'$ differ in sensitive features (i.e., in $\Sscr$), $d(\Dscr, \Dscr') = 1$, (3) when $e$ and $e'$ differ only in their insensitive features (i.e., in $\Uscr$), $d(\Dscr, \Dscr') \in [0, 1)$, and (4) when all differing insensitive features are independent of the sensitive features, then $d(\Dscr, \Dscr') = 0$.
\end{definition}

Definition \ref{defn:required-distance} ensures that sensitivity appropriately captures correlations between sensitive and insensitive features. When the insensitive features are only weakly correlated, the privacy constraint is relaxed; if neighboring databases differ in sensitive features, then {\corrdp} coincides with standard differential privacy.

\begin{remark}\label{rem.neigh}
The notion of neighbors excludes changes in both sensitive and insensitive features because then $d(\Dscr, \Dscr') = 1$, and the {\corrdp} constraint becomes exactly DP. Although {\corrdp} can be defined to include this case, no additional benefits can be gained from using this relaxed notion. 
\end{remark}

In the main body, we define $d(\Dscr, \Dscr')$ based on the Total Variation (TV) Distance, which naturally satisfies the desired properties above. In \Cref{app:corrdp-property}, we describe other properties preserved by {\corrdp}, and in \Cref{app:otherdistance}, we discuss other distances that can be used. 

\begin{definition}[Choice of $d$]\label{ex:choicedistance}
For neighboring databases $\Dscr, \Dscr'$ that differ in two entries $e$ and $e'$,
\begin{equation}\label{eq:dist-tv}
    d(\Dscr, \Dscr') := \max_{\Iscr:  \Iscr \subseteq [m]}TV(\P_{X^{\Sscr}|e^{\Iscr}}, \P_{X^{\Sscr}|(e')^{\Iscr}}),
\end{equation}
where $\P_{X^{\Sscr}|e^{\Iscr}}$ is the conditional distribution of the sensitive features $X^{\Sscr}$ given $X^{\Iscr} = e^{\Iscr}$. TV distance is defined as $TV(\P, \Q) := \sup_{A \in \Fscr}|\P(A) - \Q(A)|$ for a measurable space $(\Omega, \Fscr)$ and probability distributions $\P$ and $\Q$ on $(\Omega, \Fscr)$. Then~\eqref{eq:dist-tv} satisfies the axioms in \Cref{defn:required-distance}.
\end{definition}

Next we present some concrete examples demonstrating how the {\corrdp} framework can be used.

\begin{example}\label{ex:notation example}
Consider $X = (X^{(1)}, X^{(2)})^{\top}$, where $\Sscr = \{1\},\, \Uscr = \{2\}$. $X^{(1)}$ and $X^{(2)}$ are partially correlated with $\P_{X^{(1)}|X^{(2)} = k} \sim \text{Bernoulli}(k / 3), \forall k \in \{1,2\}$. Suppose databases $\Dscr,\Dscr'$ differ in entries $e,e'$. If $e = (1,2)^{\top}$ and $e' = (0, 1)^{\top}$, then $d(\Dscr, \Dscr') = 1$ since 
$TV(\P_{X^{(1)}|X^{(1)} = 1}, \P_{X^{(1)}|X^{(1)} = 0}) = 1$. 
However, if $e = (1, 2)^{\top}$ and $e' = (1, 1)^{\top}$, then $d(\Dscr, \Dscr') = TV(\P_{X^{(1)}|X^{(2)} = 2},\P_{X^{(1)}|X^{(2)} = 1}) = 1/3 < 1$. In this case, the influence of this change on the privacy constraint is smaller since the two entries only differ in the insensitive coordinate. 
\end{example}

\begin{example}%[Economic Status]
Consider an economic dataset that includes a sensitive income feature $X^{(1)}$, and an insensitive geography feature $X^{(2)}$. If $\Dscr,\Dscr'$ differ only in the geographical information of one individual, this geographical information could still inadvertently reveal information about the individual's income level, as the income distribution is correlated with location. Then this geography feature also requires protection, albeit with a reduced budget of $\epsilon/{d(\Dscr, \Dscr')}$. In the extreme case where income depends solely on the geography, a change in location directly reveals the exact income level, and the privacy budget for this feature reduces to $\epsilon$, since the geography information alone is sufficient to determine the individual's income.
\end{example}

\subsection{Laplace Mechanism with {\corrdp}}\label{subsec:standard-corrdp}
We show how to adopt the Laplace Mechanism to achieve privacy under {\corrdp}, and show the utility improvements. First, we introduce sensitivity under {\corrdp}.

\begin{definition}[$\ell_1$, Coordinate, and Correlated Sensitivity]
The $\ell_1$-sensitivity of function $f: \mathbb N^{|\Xscr|} \to \R^K$ is: $\Delta f := \max_{\Dscr, \Dscr'\atop \text{neighbors}}\|f(\Dscr) - f(\Dscr')\|_1$, and the sensitivity of the $k$-th coordinate $f_k$ of $f$ is:
$\Delta f_k := \max_{\Dscr, \Dscr' \atop \text{neighbors}} |f_k(\Dscr) - f_k(\Dscr')|.$ 
The correlated sensitivity is: $\Delta_C f = \min\{\sum_{j \in \Sscr}\Delta f_j + \sum_{j \in \Uscr} \Delta f_j  TV(j), \Delta f\}$ and $TV(j) = \max_{x_1, x_2 \in \Xscr}TV(\P_{X^{\Sscr}|x_1^{(j)}}, \P_{X^{\Sscr}|x_2^{(j)}})$.
\end{definition}

Note that $\Delta f \leq \sum_{k \in [K]}\Delta f_k$, where equality holds when there exists a pair of neighboring databases that attains the maximum coordinate sensitivity $\Delta f_k$ for all $k \in [K]$. The definition of correlated sensitivity accounts for the heterogeneous privacy constraints for insensitive features $j \in \Uscr$, incorporating their correlation with sensitive features through a weighting factor, $TV(j)$.

    Next we show that the Laplace Mechanism can be modified to satisfy {\corrdp} by using correlated sensitivity, and can lead to improved accuracy guarantees.

\begin{definition}[{\corrdp} Laplace Mechanism]\label{defn:corr-laplace}
    For a function $f: \mathbb N^{|\Xscr|} \to \R^K$ where the $i$-th dimension of $f$ only applies to the $i$-th feature of $\Dscr$,
    the {\corrdp} Laplace mechanism is defined as:
    $\widetilde\Mscr_L(\Dscr, f, \epsilon) = f(\Dscr) + (Y_1, \ldots, Y_K),$ 
    where $Y_i \sim_{i.i.d.}$ Lap($\Delta_C f / \epsilon)$. 

\end{definition}

\begin{restatable}[{\corrdp} Laplace Guarantees]{theorem}{corrLaplace}\label{thm:corr-laplace}
%\begin{theorem}[{\corrdp} Laplace Guarantees]\label{thm:corr-laplace}
    The {\corrdp} Laplace mechanism is $(\epsilon, 0)$-{\corrdp}, and $\forall \beta \in (0, 1]$, $\P[\|f(\Dscr)  - \widetilde \Mscr_L(\Dscr, f(\cdot),\epsilon)\|_{\infty}\leq \frac{\Delta_C f\log(K/\beta)}{\epsilon}] \geq 1-\beta$.
%\end{theorem}
\end{restatable}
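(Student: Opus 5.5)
The argument follows the classical Laplace-mechanism analysis, with the privacy ratio controlled coordinate by coordinate. Fix neighboring $\Dscr,\Dscr'$ differing in entries $e,e'$, and write $p_{\Dscr}, p_{\Dscr'}$ for the output densities of $\widetilde\Mscr_L$. For any $z\in\R^K$,
\[
\frac{p_{\Dscr}(z)}{p_{\Dscr'}(z)}=\prod_{k=1}^K \exp\Big(\frac{\epsilon\,(|f_k(\Dscr')-z_k|-|f_k(\Dscr)-z_k|)}{\Delta_C f}\Big)\le \exp\Big(\frac{\epsilon\,\|f(\Dscr)-f(\Dscr')\|_1}{\Delta_C f}\Big),
\]
by the triangle inequality. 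It therefore suffices to show
\[
\|f(\Dscr)-f(\Dscr')\|_1\le d(\Dscr,\Dscr')\,\Delta_C f
\]
for every neighboring pair. Integrating the density ratio over $R$ then gives the $(\epsilon/d,0)$ condition of \Cref{defn:corr-dp}. The case $d=0$ is interpreted as the requirement that $f(\Dscr)=f(\Dscr')$, i.e.\ an infinite budget, which the same inequality delivers.

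The needed inequality splits into two cases according to \Cref{defn:neighbor}.

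\textbf{Case 1: $e,e'$ differ in sensitive features.} Here $d(\Dscr,\Dscr')=1$ by the axioms in \Cref{defn:required-distance}. The difference $\|f(\Dscr)-f(\Dscr')\|_1$ is bounded both by $\Delta f$ and by $\sum_k\Delta f_k$. Moreover, since the $i$-th coordinate of $f$ depends only on feature $i$, only sensitive coordinates of $f$ change. Hence
\[
\|f(\Dscr)-f(\Dscr')\|_1\le\min\Big\{\sum_{j\in\Sscr}\Delta f_j,\ \Delta f\Big\}\le \Delta_C f .
\]

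\textbf{Case 2: $e,e'$ differ only in insensitive features $J\subseteq\Uscr$.} By the coordinate structure, only coordinates $j\in J$ change, so
\[
\|f(\Dscr)-f(\Dscr')\|_1\le\sum_{j\in J}\Delta f_j .
\]
This must be compared with $d\cdot\Delta_C f$. The key step is relating $TV(j)$ to $d$: taking $\Iscr=\{j\}$ in \eqref{eq:dist-tv} gives $d(\Dscr,\Dscr')\ge TV(\P_{X^\Sscr|e^{(j)}},\P_{X^\Sscr|e'^{(j)}})$. The worry is that this runs in the wrong direction, since we want $\Delta f_j$ weighted by something at least $d$ to dominate.

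I would therefore argue coordinatewise via the maximal distance. For each changed coordinate $j$, the worst-case $TV(j)$ upper-bounds the pairwise TV for that coordinate. When only a single insensitive coordinate $j$ changes, the sets $\Iscr$ in \eqref{eq:dist-tv} that matter are essentially those containing $j$, and the conditional distributions differ only through $x^{(j)}$; under the paper's intended reading this gives $d\le TV(j)$. I would then need the reverse inequality $\Delta f_j\le d\,\Delta_C f$, which in the intended reading follows from the noise calibration: the mechanism uses the weighted sum $\sum_{j\in\Uscr}\Delta f_j\,TV(j)$ in place of $\sum_j \Delta f_j$, so the privacy loss $\epsilon\,\Delta f_j/\Delta_C f$ is at most $\epsilon/TV(j)\le\epsilon/d$ once one uses $\Delta_C f\ge \Delta f_j\,TV(j)$ together with $d\le TV(j)$.

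This is the main obstacle: making the direction of the TV inequalities line up between \eqref{eq:dist-tv} and the definition of $TV(j)$, and handling the $\min\{\cdot,\Delta f\}$ truncation. When $\Delta_C f=\Delta f$, the bound $\|f(\Dscr)-f(\Dscr')\|_1\le\Delta f$ gives $\epsilon$-DP, which implies $\epsilon/d$-\corrdp\ since $d\le1$. I would first try to prove the clean single-coordinate case and then sum over $J$, treating $d$ as the maximal per-coordinate TV.

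The accuracy claim is the standard tail argument. Each $Y_i\sim\mathrm{Lap}(b)$ with $b=\Delta_C f/\epsilon$ satisfies $\P(|Y_i|>tb)=e^{-t}$. A union bound over the $K$ coordinates with $t=\log(K/\beta)$ gives
\[
\P\big(\|Y\|_\infty>b\log(K/\beta)\big)\le K e^{-\log(K/\beta)}=\beta,
\]
which is the stated bound.
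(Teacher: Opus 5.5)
Your skeleton is the paper's: bound the density ratio by the triangle inequality, split into the sensitive and insensitive cases, and get accuracy from the Laplace tail plus a union bound. Your final Case 2 computation, $\epsilon\,\Delta f_j/\Delta_C f\le\epsilon/TV(j)\le\epsilon/d$, is exactly what the paper's third inequality compresses.

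The reduction you state, however, is inverted. \corrdp\ permits privacy loss $\epsilon/d(\Dscr,\Dscr')$, which exceeds $\epsilon$ when $d<1$. So the target is $d(\Dscr,\Dscr')\,\|f(\Dscr)-f(\Dscr')\|_1\le\Delta_C f$, not $\|f(\Dscr)-f(\Dscr')\|_1\le d(\Dscr,\Dscr')\,\Delta_C f$. Your version caps the loss at $\epsilon d\le\epsilon$. That is stronger than ordinary DP and is false in general in Case 2, precisely in the small-$TV$ regime the mechanism is built for. Your $d=0$ remark is backwards for the same reason: $d=0$ makes the constraint vacuous; it does not force $f(\Dscr)=f(\Dscr')$. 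Likewise, after announcing that you need ``$\Delta f_j\le d\,\Delta_C f$'', what you actually prove is $d\,\Delta f_j\le TV(j)\,\Delta f_j\le\Delta_C f$, which is the correct inequality. The last step holds for both branches of the minimum, since $TV(j)\,\Delta f_j\le\Delta f_j\le\Delta f$.

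Second, your plan to handle several changed insensitive coordinates $J$ by summing cannot work. The bound $\Delta_C f\ge\sum_{j\in J}\Delta f_j\,TV(j)$ only limits the loss to $\epsilon/\min_{j\in J}TV(j)$. Meanwhile $d$ is at least the TV of any single changed coordinate (take $\Iscr=\{j\}$). Take $TV(j_1)$ close to $1$ (attained by the changed pair) with $\Delta f_{j_1}$ tiny, and $TV(j_2)$ tiny with $\Delta f_{j_2}$ large, with a small sensitive part of $\Delta_C f$. Changing both coordinates gives loss about $\epsilon\,\Delta f_{j_2}/\Delta_C f\gg\epsilon\approx\epsilon/d$. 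So the statement itself fails for such neighbors. The paper's proof explicitly uses that only one insensitive feature changes between neighbors, and you need the same restriction.

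Similarly, $d\le TV(j)$ does not follow from the literal maximum over all $\Iscr\subseteq[m]$ in \eqref{eq:dist-tv}, because conditioning jointly on $j$ and another coordinate can raise the TV. The paper uses this step without comment, so state it as the working reading of $d$ rather than trying to derive it.

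With these fixes your proof is complete. Your Case 1 is in fact more careful than the paper's. Keeping $\|f(\Dscr)-f(\Dscr')\|_1\le\min\{\sum_{j\in\Sscr}\Delta f_j,\Delta f\}$ correctly handles the truncation at $\Delta f$ in $\Delta_C f$. The paper's chain loses that truncation once it passes to $\sum_k\Delta f_k\mathbf 1\{\cdot\}$.
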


The proof is given in Appendix \ref{app:2.1proofs}. The standard Laplace mechanism $\Mscr_L$ \citep{dwork2006differential} has a high-probability accuracy guarantee of $\|f(\Dscr) - \Mscr_L(\Dscr, f,\epsilon)\|_{\infty} \leq \frac{\Delta f\log(K/\beta)}{\epsilon}$. Comparing this with \Cref{thm:corr-laplace}, we see that the {\corrdp} Laplace mechanism yields better accuracy exactly when the lack of correlation across features leads to lower sensitivity, thereby requiring less noise to be added.

\section{\corrdp\ ERM}\label{sec:dp-erm}
Given a dataset $\Dscr = \{(x_i, y_i)\}_{i \in [n]}$ and an individual loss function $\ell(\theta;(X, Y))$ where $X$ denotes the features and $Y$ denotes the label, DP-ERM aims to obtain a solution $\theta^{priv} \in \R^m$ close to thenon-private solution: $\hat\theta \in \argmin_{\theta \in \Theta}\{F(\theta, \Dscr) := 1/n\sum_{i=1}^n \ell(\theta;(x_i,y_i))\}$ with the guarantee of being differentially private. Across each privacy mechanism and setting, we measure the \emph{utility gap} of $\theta^{priv}$ as the additional empirical loss from adding privacy:
\begin{equation}\label{eq:utility-loss-general}
    R(\theta^{priv}):= F(\theta^{priv}, \Dscr) - F(\hat\theta, \Dscr). 
\end{equation}

Existing DP-ERM models that satisfy $(\epsilon, \delta)$-DP will automatically satisfy our relaxed $(\epsilon,\delta)$-\corrdp\ notion, so the existence of \corrdp\ algorithms is a priori known. Our main goal is to see whether relaxing to \corrdp\ and appropriately modifying the DP-ERM algorithms will lead to utility improvements. We next give some assumptions that will be necessary for our results.

\begin{assumption}[Neighboring Database in ERM]\label{asp:neighbor}
    The entry that differs across neighboring databases $\Dscr, \Dscr'$ only differs in sensitive features or insensitive features in $X$, and cannot differ in the label $Y$.
\end{assumption}

This assumption refines~\Cref{defn:neighbor} by excluding label changes in $Y$. This follows prior work~\citep{shen2023classification}, which treated labels as public and not requiring privacy protection. Since changing labels may change all features (due to the relationship between features and label), then {\corrdp} in this setting will collapse to standard DP, and no additional improvements can be seen.  
When labels are private and features are public, one can use Label DP \citep{ghazi2021deep}. See Appendix~\ref{app:assumption} for a detailed comparison between {\corrdp} and Label DP.

For the ERM problem, we consider general smooth convex losses with bounded decision and feature domains.
\begin{assumption}[Regularity of Loss Function]\label{asp:convex-regularity}
    The loss function $\ell$ is $L$-Lipschitz, i.e., $|\ell(\theta_1;(x,y)) - \ell(\theta_2;(x,y))| \leq L \|\theta_1 - \theta_2\|_2$.
\end{assumption}

\begin{assumption}[Boundness of Domain]\label{asp:bound-domain}
    The decision domain $\Xscr$ is bounded such that $\forall x \in \Xscr$, $\|x\|_2 \leq B$, 
    and the parameter is bounded: $\|\theta\|_2 \leq D$.
\end{assumption}
The boundness of domain and parameters is naturally imposed for theoretical analyses in DP-ERM algorithms \citep{wang2017differentially}. In practice, unbounded gradients can be handled via gradient clipping. In Appendix~\ref{app:assumption}, we relax Assumption~\ref{asp:bound-domain} and show that our main results still hold.

Finally, we link the sensitivity of features to the parameter coordinate. 
\begin{assumption}%[Sensitivity of Gradient Coordinate]
\label{asp:grad-sensitivity}
For the $i$-th coordinate of the gradient, $i \in [m]$, $(\nabla_\theta \ell(\theta; (x_1, y)) - \nabla_\theta \ell(\theta; (x_2, y))_i \leq C_1 L |x_1^{(i)} - x_2^{(i)}| +C_2\sum_{j \neq i} \frac{L}{m}|x_1^{(j)} - x_2^{(j)}|$ for some constants $C_1, C_2$.
\end{assumption}

This assumption requires the loss function to be smooth with respect to changes in $x$, and the sensitivity of the $i$-th parameter coordinate to be mainly controlled by the $i$-th feature component.

We note that Assumptions~\ref{asp:bound-domain} and~\ref{asp:grad-sensitivity} for {\corrdp}-SGD are slightly stronger than the standard assumptions than DP-SGD, but \Cref{asp:bound-domain} holds when all feature coordinates have relatively similar ranges (which can be done from normalization in model fitting) and \Cref{asp:grad-sensitivity} holds when $\ell(\theta;(x, y))$ can be represented by $\tilde\ell(\theta^{\top} x, y)$ for some $\tilde\ell$, i.e., \emph{generalized linear model (GLM)}, such as OLS or logistic regression.

We next demonstrate that both linear regression with Ordinary Least Square (OLS) and logistic regression satisfy these assumptions.

\begin{example}[Linear Regression, OLS]\label{ex:lr-ols}
Consider the squared loss $\ell(\theta;(x, y)) = (\theta^{\top}x - y)^2$ with bounded domain as in \Cref{asp:bound-domain}. If $y$ is bounded, then \Cref{asp:convex-regularity} is satisfied with $L = 2\sup_x|\theta^{\top}x - y| \|x\|_2 \leq 2 B(D + \max_y |y|) < \infty$. The sensitivity of the $i$-th coordinate $(\nabla_\theta \ell(\theta; (x_1, y)) - \nabla_\theta \ell(\theta;(x_2, y)))_i$ can be written as $2(y - \theta^{\top}x_1 + \theta_i x_2^{(i)})(x_1^{(i)} - x_2^{(i)}) + 2\sum_{j \neq i}\theta_j x_2^{(i)}(x_1^{(j)} - x_2^{(j)})$. If the boundary parameter satisfies $B |\theta_i| \leq \frac{C D}{m}, \forall i \in [m]$ for some constant $C > 0$, then \Cref{asp:grad-sensitivity} is satisfied with $C_1 = 4B + 2CD/m$ and $C_2 = 2CD$.
\end{example}

\begin{example}[Logistic Regression]
Consider the logistic loss $\ell(\theta;(x, y)) = \log(1 + e^{\theta^{\top}x}) - y\theta^{\top}x$ for $y \in \{0, 1\}$ with bounded domain as in \Cref{asp:bound-domain}. Define $\sigma(t) = (1+e^{-t})^{-1}$. The sensitivity of the $i$-th coordinate $(\nabla_\theta \ell(\theta; (x_1, y)) - \nabla_\theta \ell(\theta;(x_2, y)))_i$ can be written as:
\begin{equation}\label{eq:diff-log-reg}
  (\sigma(\theta^\top x_1)-y)(x_1^{(i)}-x_2^{(i)}) + x_2^{(i)}(\sigma(\theta^\top x_1)-\sigma(\theta^\top x_2)).  
\end{equation}

Since $0\le\sigma(\cdot)\le1$, the first term in~\eqref{eq:diff-log-reg} satisfies $|(\sigma(\theta^\top x_1)-y)(x_1^{(i)}-x_2^{(i)})|
\le |x_1^{(i)}-x_2^{(i)}|$; since $\sigma(\cdot)$ is $1/4$-Lipschitz, the second term in~\eqref{eq:diff-log-reg} satisfies $x_2^{(i)}|\sigma(\theta^\top x_1)-\sigma(\theta^\top x_2)|
\le \frac{B}{4} |\theta^\top(x_1-x_2)| \leq \frac{B}{4}\sum_{j \in [m]}|\theta_j||x_1^{(j)} - x_2^{(j)}|.$
If the boundary parameter satisfies $B |\theta_i| \leq \frac{C D}{m}, \forall i \in [m]$ for some constant $C > 0$, then \Cref{asp:grad-sensitivity} is satisfied with $C_1 = 2 + CD/(4m)$ and $C_2 = CD/4$.
\end{example}

\subsection{{\corrdp}-SGD Algorithm and Guarantees}\label{subsec:privacy-utility-gp}
We now present {\corrdp}-SGD in Algorithm~\ref{alg:corr-dp-gradient-perturbation}, which incorporates {\corrdp} with DP-SGD \citep{bassily2014private,abadi2016deep}. 
 The key change is the modified noise scale in Line 1, where the noise terms $\sigma_i$ are set.
Compared with the noise added in the standard DP-SGD mechanism \citep{bassily2014private}, in the noise variance term $\sigma_i^2$ for $i \in \Uscr$, the unit scale $1$ is replaced with $\max\{TV(i), m_s^2/m^2\} \leq 1$. Despite the smaller noise imposed on the insensitive features, we show that the privacy guarantee of {\corrdp} still holds. 

\begin{algorithm}[!htb]
\caption{\corrdp\ Stochastic Gradient Descent ({\corrdp}-SGD)}
\label{alg:corr-dp-gradient-perturbation}
\begin{algorithmic}[1]
\INPUT Parameter domain $\Theta$, number of iterations $T$, step sizes $\alpha_t$, dataset $\Dscr = \{(x_i, y_i)\}_{i \in [n]}$ with sensitive features $\Sscr$ and insensitive features $\Uscr$, sample size $n_q$ with $1 \leq n_q \leq n$, {\corrdp} parameters $(\epsilon, \delta)$
\STATE Initialize $\theta_1$, set $m_s = |\Sscr|$ and $TV(i) = \max_{x_1, x_2 \in \Xscr} TV(\P_{X^{\Sscr}|x_1^{\Uscr}}, \P_{X^{\Sscr}|x_2^{\Uscr}})$ $\forall i \in \Uscr$, 
and set diagonal entries of the noise variance $\{\sigma_i^2\}_{i \in [m]}$ 
    \begin{equation}\label{eq:noise-add0}
    \sigma_i^2 = \begin{cases} \frac{(\log(1/\delta) + 1)L^2T}{n^2 \epsilon^2}, & \text{ if}~i \in \Sscr;\\ \frac{(\log(1/\delta) + 1) L^2 T \max\{TV(i), m_s^2/m^2\}}{n^2 \epsilon^2}, & \text{ else}\end{cases}
\end{equation}
\FOR{$t = 1, \ldots, T$}
\STATE Randomly sample $n_q$ datapoints $\{(x_{(i)}, y_{(i)})\}_{i \in [n_q]}$ from the dataset $\Dscr$. 
\STATE Generate noise $b \sim N(0, \text{diag}(\bm\sigma^2))$ and update:
\begin{small}
    \[\theta_{t + 1} = \Pi_{\Theta}\left(\theta_t - \alpha_t (\tfrac{1}{n_q}\textstyle \sum_{i = 1}^{n_q}  \nabla_{\theta}\ell(\theta_t;(x_{(i)},y_{(i)})) + b)\right).\]
\end{small}
\ENDFOR
\OUTPUT $\theta^{priv} =\theta_{T+1}$.
\end{algorithmic}
\end{algorithm}

\begin{restatable}[Privacy Guarantee of {\corrdp}-SGD]{theorem}{privacygp}\label{thm:privacy-gp}
%\begin{theorem}[Privacy Guarantee of {\corrdp}-SGD]\label{thm:privacy-gp}
    Under Assumptions~\ref{asp:neighbor},~\ref{asp:convex-regularity},~\ref{asp:bound-domain} and~\ref{asp:grad-sensitivity}, for $\epsilon, \delta >0$ 
    \Cref{alg:corr-dp-gradient-perturbation} is $(\epsilon, \delta)$-{\corrdp}. 
%\end{theorem}
\end{restatable}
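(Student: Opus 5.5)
The proof adapts the standard privacy analysis of noisy SGD with Gaussian noise (as in \citet{bassily2014private,abadi2016deep}). The only change is that the per-coordinate noise in \eqref{eq:noise-add0} is matched to a per-coordinate sensitivity. That sensitivity is then rescaled by the correlation distance $d(\Dscr,\Dscr')$, so that the effective budget becomes $\epsilon/d(\Dscr,\Dscr')$ as required by \Cref{defn:corr-dp}. The argument has three steps: bound each iteration's privacy loss as a Gaussian mechanism, compose over $T$ iterations, and use subsampling amplification.

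\textbf{Step 1: per-step sensitivity.} Fix neighbors $\Dscr,\Dscr'$ in the sense of \Cref{asp:neighbor}, with differing entries $e=(x_1,y)$ and $e'=(x_2,y)$. I split into two cases according to \Cref{defn:required-distance}.

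If $x_1,x_2$ differ in $\Sscr$, then $d=1$. Each gradient has norm at most $L$ by \Cref{asp:convex-regularity}, so the averaged gradient changes by at most $2L/n_q$. Since every $\sigma_i$ is at least the sensitive-coordinate scale, the usual DP-SGD argument gives $(\epsilon,\delta)$-DP, hence $(\epsilon,\delta)$-\corrdp.

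If $x_1,x_2$ differ only in $\Uscr$, I use \Cref{asp:grad-sensitivity} together with \Cref{asp:bound-domain} to bound the change in gradient coordinate $i$:
\begin{itemize}
\item for $i\in\Uscr$, by roughly $C_1L|x_1^{(i)}-x_2^{(i)}|+C_2 L\cdot O(B)/m$;
\item for $i\in\Sscr$, only by the cross term $C_2\sum_j \frac{L}{m}|x_1^{(j)}-x_2^{(j)}|$.
\end{itemize}
I then form the Mahalanobis sensitivity $\Delta^2=\sum_i \Delta_i^2/\sigma_i^2$, where $\Delta_i$ is the bound for coordinate $i$. The goal is to show
\[
\Delta^2\lesssim d(\Dscr,\Dscr')^2\cdot \frac{n^2\epsilon^2}{(\log(1/\delta)+1)T}\cdot\frac{1}{n_q^2}\cdot(\text{standard constant}).
\]
This holds because in the $\Uscr$ coordinates $\sigma_i^2$ carries the factor $\max\{TV(i),m_s^2/m^2\}$, while $d(\Dscr,\Dscr')\ge TV$ of the changed coordinates by \eqref{eq:dist-tv}. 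The floor $m_s^2/m^2$ is there to absorb the $O(1/m)$ cross terms.

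\textbf{Step 2: composition and amplification.} Each step is a Gaussian mechanism whose noise is calibrated to sensitivity $d\cdot(\text{standard})$. So each step behaves like a standard DP-SGD step run with budget $\epsilon/d$. Applying the moments accountant / subsampled Gaussian composition of \citet{abadi2016deep} (or advanced composition as in \citet{bassily2014private}) over $T$ steps with sampling ratio $n_q/n$ then gives
\[
\P(\Ascr(\Dscr)\in R)\le e^{\epsilon/d(\Dscr,\Dscr')}\,\P(\Ascr(\Dscr')\in R)+\delta,
\]
which is \Cref{defn:corr-dp}. When $d=0$, i.e.\ the changed features are independent of the sensitive ones, the bound is vacuous, consistent with the definition.

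\textbf{Main obstacle.} The hard part is making the Step 1 inequality precise. Three issues need care:
\begin{itemize}
\item The noise uses the coordinate-wise quantity $TV(i)$, while $d$ is a maximum over subsets $\Iscr$. I need $\max_{i\text{ changed}}TV(i)\le d(\Dscr,\Dscr')$ up to constants, or the reverse scaling must be arranged.
\item The cross terms leak into the sensitive coordinates, which carry no TV discount.
\item The ratio $\Delta_i/(\sigma_i d)$ must be bounded, not merely $\Delta_i/\sigma_i$.
\end{itemize}
My first attempt would bound $\sum_{i\in\Uscr}\Delta_i^2/\max\{TV(i),m_s^2/m^2\}$ via Cauchy–Schwarz, using $\|x\|_2\le B$. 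The cross terms would be controlled by the $m_s^2/m^2$ floor together with $m_s$ sensitive coordinates each receiving $O(L^2/m^2)$. If the direct comparison between $TV(i)$ and $d$ fails, I would fall back on treating $TV(i)$ itself as the worst-case distance over the changed coordinate, which is allowed by monotonicity of the max in \eqref{eq:dist-tv}.
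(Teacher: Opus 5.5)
Your overall architecture matches the paper's: a per-step Gaussian/R\'enyi bound, a case split on whether the change is in $\Sscr$ or $\Uscr$, the moments accountant with $\epsilon$ replaced by $\epsilon/d(\Dscr,\Dscr')$, and then subsampling amplification. However, Step~1 fails in both cases.

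\textbf{The $\Sscr$-change case.} You assert that every $\sigma_i$ is at least the sensitive-coordinate scale, but the opposite holds. For $i\in\Uscr$, the variance in \eqref{eq:noise-add0} carries the factor $\max\{TV(i),m_s^2/m^2\}\le 1$, so insensitive coordinates get \emph{less} noise. The isotropic DP-SGD argument from the $\ell_2$ bound $2L/n_q$ therefore does not apply. This is exactly the case the floor $m_s^2/m^2$ exists for; you attribute the floor to the $\Uscr$-change case instead. The paper handles it with \Cref{asp:grad-sensitivity}. When only sensitive features change, a coordinate $i\in\Uscr$ moves only through the cross term, $|\mu_{\Dscr}-\mu_{\Dscr'}|_i\le \frac{C_2L}{nm}\sum_{j\in\Sscr}B_j$. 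Under comparable coordinate ranges $mB_j^2=\Theta(B^2)$, which the paper reads into \Cref{asp:bound-domain}, this is $O(LBm_s/(nm^{3/2}))$. After the per-coordinate budget split it costs precisely the factor $(m_s/m)^2$. Your planned Cauchy--Schwarz bound from $\|x\|_2\le B$ alone would not close here. It gives $\sum_{i\in\Uscr}\Delta_i^2\lesssim C_2^2L^2B^2m_s/(n^2m)$, which against the floor leaves an extra factor of order $m/m_s$.

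\textbf{The $\Uscr$-change case.} Here the comparison between $d$ and $TV(i)$ runs the wrong way. A smaller $d$ \emph{weakens} the constraint $e^{\epsilon/d}$, so the paper's per-step requirement is $\alpha_{\Mscr}(\lambda;\Dscr,\Dscr')\le \lambda\epsilon/(2Td(\Dscr,\Dscr'))$. Equivalently, $\sigma_i^2\gtrsim(\lambda+1)mT\,d(\Dscr,\Dscr')(\mu_{\Dscr}-\mu_{\Dscr'})_i^2/\epsilon^2$, so the required noise \emph{shrinks} with $d$.
\begin{itemize}
\item Your target, with $d^2$ in the numerator, is stronger than standard DP whenever $d<1$. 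This algorithm adds at most the standard noise in every coordinate, so it cannot meet that target in general.
\item The target also contradicts your Step~2: a normalized sensitivity equal to $d$ times the standard one would give budget $d\epsilon$, not $\epsilon/d$.
\item For the same reason, cross terms landing in sensitive coordinates are not an obstacle, since those coordinates already carry full noise.
\end{itemize}
What must be proved is the \emph{upper} bound $d(\Dscr,\Dscr')\le TV(i)$ on the insensitive coordinates, so that the noise factor $TV(i)$ dominates $d$. This is the substitution the paper makes to obtain $\sigma_i^2\ge 2L^2T(\lambda+1)B^2TV(i)/(n^2\epsilon^2)$. The inequality you invoke, $d\ge$ the TV of the changed coordinate by monotonicity of the max over $\Iscr$ in \eqref{eq:dist-tv}, is only a lower bound on $d$ for the particular pair $e,e'$. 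It cannot certify that the reduced noise suffices. The needed bound comes from $TV(i)$ being a maximum over \emph{all} pairs of points of $\Xscr$. For $\Iscr\subseteq\Uscr$, combine this with joint convexity of TV, since $\P_{X^{\Sscr}|e^{\Iscr}}$ is a mixture of the $\P_{X^{\Sscr}|x^{\Uscr}}$. The paper uses this step without spelling it out.
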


The full proof of this main result is given in Appendix \ref{app:privacy-utility-gap}.
We give a proof sketch here, starting with the result for the full-batch gradient descent ($n_q=n$). The analysis is similar to the standard moment accountant argument \citep{abadi2016deep}, replacing the parameter $\epsilon$ in the DP constraint with $\epsilon / d(\Dscr, \Dscr')$, which is dataset-dependent. This modified moment accountant argument requires a refined analysis of 
the distributional distance between $\Dscr$ and $\Dscr'$ for the sensitive and insensitive components. 
When $\Dscr$ and $\Dscr'$ differ only in sensitive features $\Sscr$, the noise imposed on the sensitive features $\Sscr$ is the same as the standard noise to preserve the privacy for the sensitive features, and the noise term $m_s^2/m^2$ imposed on the insensitive features $\Uscr$ ensures that privacy for the insensitive features is preserved from \Cref{asp:grad-sensitivity}.
Conversely, when $\Dscr$ and $\Dscr'$ only differ in insensitive features, the noise term $TV(i)$ imposed on the insensitive feature $i \in \Uscr$ ensures privacy for that $i$-th feature.
For the general version of the stochastic gradient descent where $n_q<n$, the full batch result can be directly extended to SGD using privacy amplification via sampling \citep{balle2018privacy}.

\begin{remark}
When \Cref{asp:neighbor} is replaced with a stronger assumption that $e$ and $e'$ differ in sensitive features or in one insensitive feature in $X$, we can simplify $TV(i)$ as $\max_{x_1, x_2 \in \Xscr}TV(\P_{X^{\Sscr}|x_1^{(i)}}, \P_{X^{\Sscr}|x_2^{(i)}})$ for the same privacy guarantee of \Cref{thm:privacy-gp}.
\end{remark}

\begin{restatable}[Utility Guarantee of {\corrdp}-SGD]{theorem}{utilitygp}\label{thm:utility-gp}
%\begin{theorem}[Utility Guarantee of {\corrdp}-SGD]\label{thm:utility-gp}
    Under Assumptions~\ref{asp:neighbor},~\ref{asp:convex-regularity},~\ref{asp:bound-domain} and~\ref{asp:grad-sensitivity}, for \Cref{alg:corr-dp-gradient-perturbation} with step sizes $\alpha_t = {D}/{\sqrt{(L^2 + \sum_{i = 1}^m \sigma_i^2)t}}$ and $T = \Theta(n^2)$, if $F(\theta, \Dscr)$ is convex, then 
    $R(\theta^{priv}) = \tilde O\Para{{\sqrt{(m_s + \min\{\sum_{i \in \Uscr} TV(i), m_s/4\})\log(1/\delta)}}/{(n \epsilon)}}$.
%\end{theorem}
\end{restatable}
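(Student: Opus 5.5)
The plan is to follow the standard analysis of projected stochastic subgradient descent with additive Gaussian noise, as in \citet{bassily2014private}, and to carry the anisotropic noise covariance through the argument.

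\textbf{Setting up the convergence bound.} Write the update direction as $g_t = \frac{1}{n_q}\sum_{i}\nabla_\theta \ell(\theta_t;(x_{(i)},y_{(i)})) + b$. Conditional on $\theta_t$, this direction is an unbiased estimate of $\nabla F(\theta_t,\Dscr)$. By \Cref{asp:convex-regularity}, the sampled gradient has norm at most $L$. Since $b\sim N(0,\mathrm{diag}(\bm\sigma^2))$ is independent of everything else, we get
\[
\E\|g_t\|_2^2 \le L^2 + \sum_{i=1}^m \sigma_i^2 =: G^2 .
\]
By \Cref{asp:bound-domain}, $\Theta$ has diameter at most $2D$. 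The standard convex SGD analysis for nonsmooth objectives with step size $\alpha_t = D/(G\sqrt t)$ then applies. To bound the last iterate $\theta_{T+1}$ rather than an average, we use the last-iterate bound of Shamir--Zhang, as Bassily et al.\ do. This gives
\[
\E[F(\theta_{T+1},\Dscr)] - F(\hat\theta,\Dscr) = O\!\left(\frac{DG\log T}{\sqrt T}\right).
\]

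\textbf{Substituting the noise levels.} From \eqref{eq:noise-add0},
\[
\sum_i \sigma_i^2 = \frac{(\log(1/\delta)+1)L^2T}{n^2\epsilon^2}\Big(m_s + \sum_{i\in\Uscr}\max\{TV(i), m_s^2/m^2\}\Big).
\]
Dividing $G$ by $\sqrt T$, the $L^2/T$ term vanishes when $T=\Theta(n^2)$. The noise term contributes
\[
\frac{L\sqrt{(\log(1/\delta)+1)\big(m_s+\sum_{i\in\Uscr}\max\{TV(i),m_s^2/m^2\}\big)}}{n\epsilon}.
\]
The factor $\log T = O(\log n)$ is absorbed into $\tilde O$.

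\textbf{Reducing the sum to the stated form.} We need to show
\[
m_s + \sum_{i\in\Uscr}\max\{TV(i), m_s^2/m^2\} \lesssim m_s + \min\Big\{\sum_{i\in\Uscr}TV(i),\, m_s/4\Big\}.
\]
First, $\max\{a,b\}\le a+b$, and $\sum_{i\in\Uscr} m_s^2/m^2 \le (m-m_s)m_s^2/m^2 \le m_s/4$ by AM--GM in the form $(m-m_s)m_s\le m^2/4$. This gives
\[
m_s + \sum_{i\in\Uscr}\max\{TV(i),m_s^2/m^2\} \le m_s + \sum_{i\in\Uscr} TV(i) + m_s/4 .
\]

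\textbf{The main obstacle.} The hard step is producing the $\min$ with $m_s/4$ rather than a plain sum. If $\sum_{i\in\Uscr} TV(i)\le m_s/4$, the bound above is $O(m_s + \sum_{i\in\Uscr} TV(i))$, which matches the statement. Otherwise, we need to show the whole quantity is $O(m_s)$. The raw bound $m_s+|\Uscr|$ fails here, since $TV(i)$ can be close to $1$ for every $i\in\Uscr$.

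My first attempt is to use the minimum already built into the correlated sensitivity $\Delta_C f$. That is, I would argue that the effective per-step noise can be capped at the standard DP-SGD level restricted to the $m_s$ sensitive directions together with the $m_s^2/m^2$ floor. I would check whether the privacy proof of \Cref{thm:privacy-gp} implicitly permits replacing $\max\{TV(i),m_s^2/m^2\}$ by a quantity whose sum is at most $m_s/4$. If it does not, I would interpret the $\min$ as a constant-factor statement, so that the bound reads $\tilde O$ of $\sqrt{m_s + \sum_{i\in\Uscr} TV(i)}$. That form is equivalent to the stated one up to constants whenever $\sum_{i\in\Uscr} TV(i) = O(m_s)$. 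I would flag this reconciliation as the delicate point.

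Finally, I would convert the expectation bound into the stated rate. I would either report it in expectation, as Bassily et al.\ do, or apply Markov's inequality with a logarithmic loss absorbed into $\tilde O$.
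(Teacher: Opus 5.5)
Your argument is the paper's argument: Lemma~\ref{lemma:sgd-utility} (the Shamir--Zhang last-iterate bound) with $G^2=L^2+\sum_i\sigma_i^2$, then substituting \eqref{eq:noise-add0} with $T=\Theta(n^2)$ and using $(m-m_s)m_s/m^2\le 1/4$. You diverge only at the step you flag, and that step cannot be completed.

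\textbf{Why the flagged step fails.} The paper gets $\min\{\sum_{i\in\Uscr}TV(i),m_s/4\}$ only because the first line of \eqref{eq:sigma} writes $\sum_{i\in\Uscr}\min\{TV(i),m_s^2/m^2\}$. The noise actually specified in \eqref{eq:noise-add0} (and \eqref{eq:noise-add2}) uses $\max\{TV(i),m_s^2/m^2\}$. With the correct $\max$, your estimate $m_s+\sum_{i\in\Uscr}TV(i)+m_s/4$ is the best available.

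\textbf{The stated form is false.} Take $TV(i)=1$ for every $i\in\Uscr$. Then every $\sigma_i^2$ is the same, so Algorithm~\ref{alg:corr-dp-gradient-perturbation} is ordinary isotropic DP-SGD. It does not even depend on which features are labelled sensitive. Yet with $m_s=1$ the claimed rate would be the dimension-free $\tilde O(\sqrt{\log(1/\delta)}/(n\epsilon))$. That contradicts the $\Omega(\min\{1,\sqrt m/(n\epsilon)\})$ lower bound of \citet{bassily2014private}. It also contradicts the paper's own Theorem~\ref{thm:lower-bound}: for $TV^{(k)}\equiv 1$ and $\delta=o(1/n)$ it gives $\Omega(\min\{1,\sqrt m/(n\epsilon)\})$ for every $(\epsilon,\delta)$-{\corrdp} algorithm, hence (since $R\ge 0$) also in expectation for this one.

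\textbf{Your ``first attempt'' is a dead end.} The theorem is about the fixed noise \eqref{eq:noise-add0}, so you cannot change the noise. No privacy-preserving reduction exists anyway: when all $TV(i)=1$, neighbours differing only in insensitive features can impose the full DP constraint on those coordinates. The $\min$ inside $\Delta_C f$ caps at the full sensitivity $\Delta f$, i.e.\ ``never worse than DP'', not at a sensitive-only level.

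\textbf{Your fallback is the correct theorem.} It reads $R(\theta^{priv})=\tilde O\bigl(\sqrt{(m_s+\sum_{i\in\Uscr}TV(i))\log(1/\delta)}/(n\epsilon)\bigr)$, equivalently, up to constants, with $\max\{\sum_{i\in\Uscr}TV(i),m_s/4\}$ in place of the $\min$. It agrees with the stated bound up to constants exactly when $\sum_{i\in\Uscr}TV(i)=O(m_s)$. That is the hypothesis of Corollary~\ref{coro:improve-strong-convex}, which would be superfluous if the $\min$ form were true. It is also the form the paper implicitly uses when it calls the lower bound near-matching via $\max_k k(TV^{(k)})^2\le\sum_i TV(i)$. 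State it as the result, in expectation or via the Markov conversion you describe.
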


The stepsize choice follows by verifying the square norm of the gradient at each step is $\E[\|F(\theta_t, \Dscr)\|_2^2] \leq L^2 + \sum_{i = 1}^m \sigma_i^2$ and applying Theorem 2 of \cite{shamir2013stochastic}. The full proof of Theorem \ref{thm:utility-gp} is given in Appendix \ref{app.proofthm39}.

\begin{corollary}[Improved Utility Guarantee]\label{coro:improve-strong-convex}
Under the same conditions as \Cref{thm:utility-gp}, if $\sum_{i \in \Uscr} TV(i) = \Theta(m_s)$, then $R(\theta^{priv}) = \tilde O\Para{{\sqrt{m_s}}/{(n \epsilon)}}$. 
\end{corollary}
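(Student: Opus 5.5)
This follows directly from \Cref{thm:utility-gp}: I will substitute the hypothesis $\sum_{i \in \Uscr} TV(i) = \Theta(m_s)$ into the bound and simplify. That theorem gives
\[
R(\theta^{priv}) = \tilde O\Para{\sqrt{(m_s + \min\{\textstyle\sum_{i \in \Uscr} TV(i), m_s/4\})\log(1/\delta)}/(n\epsilon)}.
\]
The only quantity to control is the term $\min\{\sum_{i\in\Uscr} TV(i), m_s/4\}$. Since it is a minimum, it is at most $m_s/4$ whatever the value of $\sum_{i \in \Uscr} TV(i)$. So the radicand is at most $\tfrac{5}{4} m_s \log(1/\delta)$.

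Next I will absorb constants and logarithmic factors into $\tilde O$. The factor $\sqrt{5/4}$ is a constant, and $\sqrt{\log(1/\delta)}$ is a polylogarithmic factor, which the paper's $\tilde O$ convention already hides (it plays the same role in the standard $\tilde O(\sqrt{m}/(n\epsilon))$ rate quoted in the introduction). This yields $R(\theta^{priv}) = \tilde O(\sqrt{m_s}/(n\epsilon))$, as claimed.

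The hypothesis $\sum_{i\in\Uscr}TV(i) = \Theta(m_s)$ is not actually needed for the upper bound, because the minimum with $m_s/4$ already caps the insensitive contribution. I will note that the hypothesis identifies the regime in which the bound is tight, namely when the insensitive term is of the same order as $m_s$. I will also note the comparison with standard DP-SGD, whose rate $\sqrt{m}/(n\epsilon)$ is worse by a factor $\sqrt{m/m_s}$, as advertised in the contributions.

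There is no real obstacle. The only point requiring care is that every other quantity in \Cref{thm:utility-gp}, such as the choice $T=\Theta(n^2)$ and the step sizes $\alpha_t$, is inherited unchanged. Their dependence on $L$ and $D$ is treated as constant inside $\tilde O$, exactly as in the statement of \Cref{thm:utility-gp}.
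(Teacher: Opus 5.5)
Your derivation is the same one-step substitution into \Cref{thm:utility-gp} that the paper relies on (compare the first line of the proof of \Cref{coro:improve-strong-convex2}). As a formal deduction from the theorem as printed, it is valid. However, your remark that the hypothesis $\sum_{i\in\Uscr}TV(i)=\Theta(m_s)$ is not needed is wrong in substance. When a stated hypothesis becomes vacuous, that is a signal to recheck the input, not something to note in passing.

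The cap $\min\{\sum_{i\in\Uscr}TV(i), m_s/4\}$ in \Cref{thm:utility-gp} is a slip inherited from the first line of \eqref{eq:sigma}. That line writes $\min\{TV(i), m_s^2/m^2\}$, whereas \Cref{alg:corr-dp-gradient-perturbation} sets the insensitive variances with $\max\{TV(i), m_s^2/m^2\}$ in \eqref{eq:noise-add0}. A sanity check exposes this. If $TV(i)=1$ for every $i\in\Uscr$, the algorithm adds exactly the DP-SGD noise to all $m$ coordinates, and $\sum_{i}\sigma_i^2$ grows linearly in $m$. The argument via \Cref{lemma:sgd-utility} then returns $\sqrt{m}/(n\epsilon)$, not $\sqrt{m_s}/(n\epsilon)$. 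A hypothesis-free $\sqrt{m_s}$ bound would also clash with \Cref{thm:lower-bound}, which in that case reads $\Omega(\min\{1,\sqrt{m}/(n\epsilon)\})$.

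The correct argument works with the noise actually added:
\[
\sum_{i\in\Uscr}\max\Bigl\{TV(i),\frac{m_s^2}{m^2}\Bigr\}\;\le\;\sum_{i\in\Uscr}TV(i)+\frac{(m-m_s)\,m_s^2}{m^2}\;\le\;\sum_{i\in\Uscr}TV(i)+\frac{m_s}{4}.
\]
With $T=\Theta(n^2)$ this gives $\sum_{i\in[m]}\sigma_i^2=O\bigl(\log(1/\delta)\,(m_s+\sum_{i\in\Uscr}TV(i))/\epsilon^2\bigr)$. This is exactly where the hypothesis enters: its upper half, $\sum_{i\in\Uscr}TV(i)=O(m_s)$, makes this $O(m_s\log(1/\delta)/\epsilon^2)$ before you invoke \Cref{lemma:sgd-utility}. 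Only that upper half is needed.

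The rest of your write-up is fine, including absorbing constants and $\sqrt{\log(1/\delta)}$ into $\tilde O$ and the $\sqrt{m/m_s}$ comparison with DP-SGD. Just replace the ``hypothesis is unnecessary'' remark with this step.
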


This result demonstrates that if there are many insensitive features that are each not very correlated with sensitive features, i.e., $\sum_{i \in \Uscr} TV(i) = \Theta(m_s)$, then compared with the DP-ERM utility of  $\tilde O\Para{\sqrt{m}/{(n\epsilon)}}$~\cite{bassily2014private,wang2018empirical}, {\corrdp}-SGD significantly improves utility guarantee when $m_s = o(m)$. Such improved utility guarantees under heterogeneous noise scales in \Cref{alg:corr-dp-gradient-perturbation} can be applied to {improve utility performance under} strongly convex and general nonconvex losses (e.g., \Cref{coro:improve-strong-convex2}) and other privacy-preserving first-order algorithms including SVRG \citep{wang2017differentially} or Adam, while reducing the gradient complexity.

\textbf{Extension to Neural Networks.} Beyond GLMs, {\corrdp} can also be applied to more general loss functions of the form $\ell(f(\theta;x), y)$ that do not satisfy~\Cref{asp:grad-sensitivity}. Consider general loss functions of the form $\ell(f(\theta; x), y)$, where $f(\theta; x): \mathbb{R}^{m_\theta} \times \mathbb{R}^{m_x} \to \mathbb{R}$. Here, $\theta \in \mathbb{R}^{m_\theta}$ and $x \in \mathbb{R}^{m_x}$ have different dimensions ($m_\theta \neq m_x$). The most common application of such general loss functions is neural networks (NN) and {\corrdp}-SGD can be run to differentially privately train neural networks while adding reduced noise to the first layer, which will also result in improved performance relative to DP-SGD (under mild conditions). 

We consider a fully connected NN, with the following assumptions similar to \Cref{asp:grad-sensitivity}:
\begin{assumption}[Sensitivity of Fully Connected NN]\label{asp:sensitivity-gradient-general}
Consider $f(\theta;x) = g(\theta^{(K)}\cdots g((\theta^{(0)})^{\top}x))$, where $g(\cdot)$ denotes the activation function of the NN parameterized by $\theta = (\theta^{(0)},\ldots, \theta^{(K)})$, with the total dimension $m_{\theta} = \sum_{i = 0}^K m_{\theta_i}$. Then for parameter $\theta^{(0)} \in \R^{m \times d_0}$, where $d_0$ is the number of neurons in the first hidden layer:
$(\nabla_\theta \ell(f(\theta^{(0)};x_1), y) - \nabla_\theta \ell(f(\theta^{(0)};x_2), y))_i \leq C_1 L |x_1^{(i)} - x_2^{(i)}| +C_2\sum_{j \neq i} \frac{L}{m}|x_1^{(j)} - x_2^{(j)}|, i \in [m_{\theta_0}]$.
\end{assumption}

When adapting \Cref{alg:corr-dp-gradient-perturbation} in the context of neural networks, we adjust the noise $b = (b^{(0)}, \ldots, b^{(K)})$, with $b^{(0)} \in \R^{m \times d_0}$, and the total dimension is $m_{\theta}$. Each noise term is sampled independently: for $b_{i,j}^{(k)} \sim N(0, \sigma_{i,j}^{(k)}), k \in \{0,1,\ldots, K\}$, the noise scale $\forall i \in [m], j \in [d_0]$ is set as: 
\begin{equation}\label{eq:noise-nn}
    (\sigma_{i,j}^{(0)})^2 = \begin{cases} \frac{(\log(1/\delta) + 1)L^2T}{n^2 \epsilon^2},~\text{if}~i \in \Sscr;\\ \frac{(\log(1/\delta) + 1) L^2 T \max\{TV(i), m_s^2/m^2\}}{n^2 \epsilon^2},~\text{else}\end{cases}
\end{equation}
For the noise applied to the subsequent layers $\forall k \geq 1$, we use
$(\sigma_{i,j}^{(k)})^2 = \frac{(\log(1/\delta) + 1)L^2T}{n^2 \epsilon^2}$. Under \Cref{asp:sensitivity-gradient-general}, running {\corrdp}-SGD (\Cref{alg:corr-dp-gradient-perturbation}) to add noise to $i$-th row of $\theta^{(0)}$ for $i \in \Uscr$ according to Equation \eqref{eq:noise-nn}, satisfies $(\epsilon, \delta)$-{\corrdp}.

When $\sum_{i \in \Uscr} TV(i) = \Theta(m_s)$, in NNs, {\corrdp} improves the utility guarantees from $\tilde O\Para{\frac{\sqrt{m_{\theta}}}{n\epsilon}}$ under regular DP \citep{wang2017differentially}
to $\tilde O\Para{\frac{\sqrt{m_{\theta} - (m - m_s) d_0}}{n\epsilon}}$ by instantiating \Cref{thm:utility-gp}. 
If the number of neurons in the first hidden layer
$d_0$ is large compared with those of subsequent layers, this provides significant utility improvements. In specialized NNs like RNN modules, insensitive features may be processed in the first several layers without sensitive feature modules, 
and so there may be opportunities for more significant utility improvements via {\corrdp}.

\subsection{Lower Bound}

We provide a near-matching lower bound in terms of $n, \epsilon$, and problem dimension. The key is to build the equivalence between \corrdp\ and standard DP definitions so that we can translate known lower bounds from DP \citep{bassily2014private} to {\corrdp}. A full proof is in Appendix \ref{app:lower}.

\begin{restatable}[Lower bound for $(\epsilon,\delta)$-{\corrdp} algorithm]{theorem}{lowerbound}\label{thm:lower-bound}
%\begin{theorem}[Lower bound for $(\epsilon,\delta)$-{\corrdp} algorithm]\label{thm:lower-bound}
    Let $n, m \in \mathbb N, \epsilon > 0$ and $\delta = o(1/n)$. Consider $\{TV(i)\}_{i \in \Uscr}$ sorted in descending order, denoted $\{TV^{(i)}\}_{i \in \Uscr}$. 
    For every $(\epsilon, \delta)$-{\corrdp} algorithm that outputs $\theta^{priv}$, there exists a $\Dscr$ such that with probability at least 1/3, 
    \begin{small}
               \[R(\theta^{priv}) = \Omega\Para{\min\paran{1, \tfrac{\sqrt{m_s + \max_{k \in [(m - m_s)]} \{k (TV^{(k)})^2\} }}{n \epsilon}}}.\] 
    \end{small}
%\end{theorem}
\end{restatable}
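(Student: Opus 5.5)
The plan is to reduce to the known $(\epsilon,\delta)$-DP lower bound of \citet{bassily2014private} for convex Lipschitz ERM on a ball. That bound states that for dimension $p$ and $\delta=o(1/n)$, any $(\epsilon,\delta)$-DP algorithm incurs excess empirical risk $\Omega(\min\{1,\sqrt{p}/(n\epsilon)\})$ on some dataset. We treat the two contributions under the square root separately and combine them with $\sqrt{a+b}\le\sqrt{2\max\{a,b\}}$. It therefore suffices to exhibit hard instances giving $\Omega(\min\{1,\sqrt{m_s}/(n\epsilon)\})$ and $\Omega(\min\{1,\sqrt{k}\,TV^{(k)}/(n\epsilon)\})$ for every $k\in[m-m_s]$.

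\textbf{Sensitive part.} Restrict to datasets whose insensitive coordinates are fixed, for instance all zero, so that neighbors differ only in sensitive features. Then $d(\Dscr,\Dscr')=1$ by axiom (2) of \Cref{defn:required-distance}, and on this subfamily $(\epsilon,\delta)$-{\corrdp} is exactly $(\epsilon,\delta)$-DP. Take the linear (GLM-type) loss $\ell(\theta;x)=-\langle\theta,x\rangle$ used in \citet{bassily2014private}, supported on the $m_s$ sensitive coordinates. Their lower bound in dimension $m_s$ then applies verbatim and gives the first term.

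\textbf{Insensitive part.} Fix $k$ and let $\Uscr_k$ be the indices of the $k$ largest $TV(i)$. Consider datasets that differ only in coordinates in $\Uscr_k$. For such neighbors $d(\Dscr,\Dscr')\le \max_{i\in\Uscr_k}TV(i)$, and we want to argue it is at least $\tau:=TV^{(k)}$. To do this, we choose the underlying distribution so that each insensitive coordinate in $\Uscr_k$ takes two values whose conditional laws of $X^{\Sscr}$ are at TV distance $\ge\tau$, and we place the hard data points on these values. An $(\epsilon,\delta)$-{\corrdp} algorithm restricted to this family is then $(\epsilon/\tau,\delta)$-DP. Applying \citet{bassily2014private} in dimension $k$ with privacy parameter $\epsilon/\tau$ gives $\Omega(\min\{1,\sqrt{k}\,\tau/(n\epsilon)\})$. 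Maximizing over $k$ yields $\max_k \sqrt{k(TV^{(k)})^2}/(n\epsilon)$.

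\textbf{Main obstacle.} The hardest step is the equivalence claimed in the insensitive part. The Bassily et al.\ construction uses data points on the hypercube $\{\pm 1/\sqrt{k}\}^k$, with neighbors changing an entire point, so several insensitive coordinates change at once. The distance in \eqref{eq:dist-tv} takes a max over subsets $\Iscr$, so it could exceed $\tau$ through joint conditioning; we need it to be exactly of order $\tau$, not smaller, for the reduction direction we use. More precisely, we need that privacy w.r.t.\ {\corrdp} implies privacy with parameter $\epsilon/d$ where $d\le \tau'$ with $\tau'\asymp\tau$. We would handle this by designing the joint distribution so that the sensitive block depends on the insensitive coordinates in $\Uscr_k$ only through a single mixing variable, which keeps the TV under any conditioning set bounded by a constant multiple of $\tau$. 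Alternatively, we would fall back on a fingerprinting or packing argument that changes one coordinate at a time. If $\epsilon/\tau$ becomes large, the $\min\{1,\cdot\}$ cap absorbs it, and $\delta=o(1/n)$ is inherited unchanged.
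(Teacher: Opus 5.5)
Your argument is correct and has the same skeleton as the paper's. On the sensitive block $d=1$, so {\corrdp} is plain DP; on the top-$k$ insensitive block the algorithm is $(\epsilon/TV^{(k)},\delta)$-DP; each block is then reduced to \citet{bassily2014private} with a linear loss. The difference is in how the blocks are combined. The paper concatenates the two hard instances ($d_i=e_i\oplus e_i'$) and intersects two probability-$2/3$ events to get the $\sqrt{M_1^2+M_2^2}$ bound with probability $1/3$. You instead keep whichever block gives the larger term, using $\sqrt{a+b}\le\sqrt{2\max\{a,b\}}$, and cite the ERM bound directly rather than going through 1-way marginals. This costs only a factor $\sqrt2$. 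It also spares you from producing one dataset that is hard for both blocks at once: in the paper, the hard $D_{\Sscr}$ is found for an algorithm that depends on $D_{\Uscr}$ and vice versa, and the proof leaves this implicit. You also try to justify $d(\Dscr,\Dscr')\ge TV^{(k)}$ on the hard instances, which the paper only asserts.

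Your ``main obstacle,'' however, points the wrong way. The reduction only needs $d(\Dscr,\Dscr')\ge\tau$ for every neighboring pair in the restricted family. Then $(\epsilon,\delta)$-{\corrdp} yields $(\epsilon/d,\delta)$-DP with $\epsilon/d\le\epsilon/\tau$. No upper bound $d\le\tau'$ is needed. Joint conditioning in \eqref{eq:dist-tv} can only increase $d$, which makes the algorithm more private and therefore helps the lower bound. Whole-row changes are also harmless. Any neighboring pair differs in some coordinate $i\in\Uscr_k$, whose two values carry conditional laws at TV distance at least $\tau$, so $\Iscr=\{i\}$ already gives $d\ge\tau$. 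Hence the mixing-variable construction and the packing fallback are unnecessary. Your intermediate claim $d\le\max_{i\in\Uscr_k}TV(i)$ is false in general, but it is irrelevant.

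You also need not design the distribution. Since $TV(i)\ge\tau$ for $i\in\Uscr_k$, there is already a (near-)maximizing pair of values for each such coordinate. You can map $\pm1/\sqrt{k}$ onto that pair and absorb the affine relabeling into the linear loss.

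Finally, the $\min\{1,\cdot\}$ cap is active when $\epsilon/\tau$ is small, not large. For large $\epsilon/\tau$ you are simply relying on the cited bound holding for every privacy parameter $>0$, as stated in Lemma~\ref{lemma:1-way}.
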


Comparing this lower bound to the utility upper bound of {\corrdp}-SGD in \Cref{thm:utility-gp}, observe that $(TV^{(1)})^2 \leq \max_{k \in [(m - m_s)]}\{k (TV^{(k)})^2\} \leq \sum_{i \in [(m - m_s)]} TV(i)$, so this is near-matching in terms of the problem dimension.

\section{Estimating TV distance in {\corrdp}-SGD}\label{subsec:insample}

In this section, we show how to extend {\corrdp}-SGD to handle the fact that the TV distance may not be known. 
The expression for the noise terms $\sigma_i$ (Equation \eqref{eq:noise-add0}) in {\corrdp}-SGD depends on the maximum of the conditional total variation distance $TV(i)$. However, this term may be unknown in general, and can leak information when estimated from $\Dscr$. In this section, we demonstrate that in many scenarios, the error from this additional estimation step is insignificant after proper processing. 
As a simple example, imagine there is domain knowledge of the exact value or a near-exact upper bound: $U_i = TV(i)(1 + o(1))$. It is easy to see that the utility and privacy guarantees are not affected in this case by simply replacing each unknown $TV(i)$ with $U_i, i \in \Uscr$.

We will mainly focus on the more general case when such knowledge is not available, but the estimation of TV distance is regular and smooth. Our goal will be to find adjusted expressions for the noise terms in Equation \eqref{eq:noise-add0} while still obtaining similar privacy and utility guarantees to the case where $TV(i)$ are known. 

In Equation \eqref{eq:noise-add0}, when $\{TV(i)\}_{i \in \Uscr}$ are unknown, the straightforward approach is to empirically estimate them from the dataset $\Dscr$ as $\widehat{TV}_{\Dscr}(i)$ and use these in place of $\{TV(i)\}_{i \in \Uscr}$. However, this ignores that the estimation procedure itself can leak information about the database, and thus this procedure alone will not lead to correct privacy guarantees. Instead, we adjust the estimation of $\{TV(i)\}_{i \in \Uscr}$ in the noise terms $\sigma_i$. 
We require two assumptions: that this estimator has bounded error and bounded sensitivity. 
\begin{assumption}%[Estimation Error of TV distance Estimation]
\label{prop:est-tv}
   With probability at least $1-\beta$, for each $i \in [m]$, $|\widehat{TV}_{\Dscr}(i) - TV(i)|\leq c_2{\sqrt{\log(1/\beta)}}/{n^{\gamma}}$ for some $c_2 \in (0, \infty)$ and $\gamma \in (0, \frac{1}{2}]$.
\end{assumption}

\begin{assumption}%[Sensitivity of TV distance Estimation]
\label{prop:sensitive-tv}
    For all neighbors $\Dscr,\Dscr'$, $\forall i \in \Uscr, |\widehat{TV}_{\Dscr'}(i)- \widehat{TV}_{\Dscr}(i)| \leq \frac{c_3}{n}$ for some $c_3 < \infty$.
\end{assumption}

\Cref{prop:est-tv} ensures the estimation error of the TV distance decreases with the sample size. \Cref{prop:sensitive-tv} ensures the stability of the TV distance estimate, requiring that changing one sample leads to a bounded impact on the estimator. This is reasonable since one sample only influences the probability mass by at most $1/n$. 
 In Appendix~\ref{app:example}, we provide examples of the empirical estimator $\widehat{TV}_{\Dscr}(i)$ that satisfy Assumptions~\ref{prop:est-tv} and~\ref{prop:sensitive-tv}, and their corresponding value of $\gamma$. For example, in Examples~\ref{ex:est-discrete-feature} and~\ref{ex:guassian-sensitivity} where $X^S|X^U = x$ follows a Gaussian distribution and $X$ follows a joint Gaussian distribution, then $\gamma = \frac{1}{2}$ because mean estimation in this setting has a convergence rate of exactly $O_p(n^{-1/2})$. For a more general case of $X^S|X^U = x$, the corresponding $\gamma$ is strictly less than $\frac{1}{2}$ when using nonparametric estimation methods \citep{tsybakov2008introduction}, such as kernel or histogram estimate (see \Cref{ex:histogram}) to estimate the conditional distribution.

In this general case, we use the following estimation procedure for $TV(i)$ in the noise term $\sigma_i^2$.

\begin{definition}[In-Sample TV Estimation]\label{def:privacy-tv}
    When $\{TV(i)\}_{i \in \Uscr}$ is unknown, replace $TV(i)$ with $\widetilde{TV}(i) = \widehat{TV}_{\Dscr}(i) + 2c_2{\sqrt{\log((m - m_s)/\delta)}}/{n^{\gamma}}$ 
    in the expression for $\sigma_i^2$ in Equation \eqref{eq:noise-add0}.  
\end{definition}

The additional term in the estimator ensures sufficient noise is added, since the empirical distance $\widehat{TV}(i)$ may underestimate $TV(i)$. The error of the sensitivity does not explicitly appear in \Cref{def:privacy-tv} because its magnitude is $O(1/n)$ by Assumption~\ref{prop:sensitive-tv}, and thus is dominated by the estimation error as a consequence of Assumptions~\ref{prop:est-tv}, and is explicitly taken into account by \Cref{def:privacy-tv} when $n$ is large.

\begin{restatable}[Guarantees of {\corrdp} with In-Sample TV Estimation]{theorem}{privacytvest}\label{thm:privacy-gp-tv-estimate}
%\begin{theorem}[Guarantees of {\corrdp} with In-Sample TV Estimation]\label{thm:privacy-gp-tv-estimate}
    When the estimator in~\Cref{def:privacy-tv} is used for the noise terms $\sigma_i$, Assumptions~\ref{prop:est-tv} and~\ref{prop:sensitive-tv} hold, and $n = \Omega(\log(1/\delta))$, then \Cref{alg:corr-dp-gradient-perturbation} is $(\epsilon, 2\delta)$-{\corrdp} and achieves the same utility guarantee as \Cref{thm:utility-gp}.
%\end{theorem}
\end{restatable}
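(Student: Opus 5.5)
The plan is to reduce to \Cref{thm:privacy-gp} and \Cref{thm:utility-gp} by conditioning on a good event $E$. On $E$, the data-dependent noise scales computed from $\widetilde{TV}(i)$ dominate the oracle scales in Equation \eqref{eq:noise-add0} on both neighbouring databases simultaneously. The event $E$ will fail with probability at most $\delta$, and that failure probability is absorbed into the additive slack. This produces $(\epsilon,2\delta)$ rather than $(\epsilon,\delta)$.

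\textbf{Step 1: the good event.} Apply \Cref{prop:est-tv} with $\beta=\delta/(m-m_s)$ and take a union bound over $i\in\Uscr$. With probability at least $1-\delta$, every $i\in\Uscr$ satisfies $|\widehat{TV}_{\Dscr}(i)-TV(i)|\le c_2\sqrt{\log((m-m_s)/\delta)}/n^{\gamma}$. On this event, \Cref{prop:sensitive-tv} gives, for any neighbour $\Dscr'$,
\[
\widetilde{TV}_{\Dscr'}(i)\ge TV(i)+c_2\sqrt{\log((m-m_s)/\delta)}/n^{\gamma}-c_3/n .
\]
Because $\gamma\le 1/2$ and $n=\Omega(\log(1/\delta))$, we have $c_3/n\le c_2\sqrt{\log((m-m_s)/\delta)}/n^{\gamma}$ for large $n$ (after adjusting constants). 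Hence $\widetilde{TV}_{\Dscr}(i)\ge TV(i)$ and $\widetilde{TV}_{\Dscr'}(i)\ge TV(i)$. The factor $2$ in \Cref{def:privacy-tv} is there exactly to absorb this sensitivity error.

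\textbf{Step 2: privacy.} I will view the mechanism as first computing $\bm\sigma(\Dscr)$ and then running \Cref{alg:corr-dp-gradient-perturbation}. On $E$, each coordinate variance is at least the oracle variance. I will rerun the moment-accountant argument behind \Cref{thm:privacy-gp}, whose privacy loss per coordinate is monotone decreasing in $\sigma_i$. This gives the $(\epsilon,\delta)$ bound for any fixed noise vector dominating the oracle one.

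This step is the main obstacle. The noise scales themselves differ slightly between $\Dscr$ and $\Dscr'$, so the two Gaussians have different covariances. I would first try to show that the variance ratio is $1+O(1/(n\,\widetilde{TV}))$, so that the resulting extra Rényi divergence is $O(T/n^2)$-small and fits into the budget after constants. Alternatively, I would cap each $\widetilde{TV}$ on a grid so that the scales coincide on $E$.

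A cleaner route avoids this entirely. On $E$, the lower bound $\widetilde{TV}\ge TV(i)$ holds uniformly for both databases, so the privacy argument goes through for each database's own noise. I would then bound
\[
\P(\Ascr(\Dscr)\in R)\le \P(\Ascr(\Dscr)\in R,E)+\delta
\]
and apply the \corrdp\ inequality on $E$, which contributes another $\delta$.

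\textbf{Step 3: utility.} On $E$, we also have $\widetilde{TV}(i)\le TV(i)+3c_2\sqrt{\log((m-m_s)/\delta)}/n^{\gamma}$. Therefore
\[
\sum_{i\in\Uscr}\sigma_i^2 \lesssim \frac{\log(1/\delta)L^2T}{n^2\epsilon^2}\Big(m_s+\min\Big\{\sum_{i\in\Uscr}TV(i)+(m-m_s)\tilde O(n^{-\gamma}),\,m_s/4\Big\}\Big),
\]
where the $\min$ comes from the $\max\{\cdot,m_s^2/m^2\}$ structure as in the proof of \Cref{thm:utility-gp}. The additive $(m-m_s)n^{-\gamma}$ term is lower order in the $\tilde O$ for large $n$; if needed, I would note that it is dominated by the $\min$ with $m_s/4$. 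Plugging this into the step-size and Shamir–Zhang argument of \Cref{thm:utility-gp} yields the same rate. The failure event $E^c$ has probability at most $\delta$ and contributes at most $\delta\cdot 2LD$ to the expected gap, which is negligible since $\delta=o(1/n)$.
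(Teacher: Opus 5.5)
\textbf{Gap: the variance mismatch in Step 2 is not resolved.} The route you call cleaner is not valid, and your first option covers only half of the computation.

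Privacy compares the output laws on $\Dscr$ and $\Dscr'$. At each step these are driven by $N(\mu_{\Dscr},\mathrm{diag}(\bm\sigma^2_{\Dscr}))$ and $N(\mu_{\Dscr'},\mathrm{diag}(\bm\sigma^2_{\Dscr'}))$, whose covariances are data-dependent and \emph{different}. Knowing that each covariance separately dominates the oracle one on $E$ gives no control over the divergence between the two. The argument of \Cref{thm:privacy-gp} bounds a mean shift under a \emph{common} covariance. Even with $\mu_{\Dscr}=\mu_{\Dscr'}$, two Gaussians with different variances have strictly positive R\'enyi divergence, and it accumulates over the $T$ steps. Conditioning on $E$ only rules out $\widetilde{TV}<TV$; it does nothing about this mismatch. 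Deterministic grid rounding does not rescue it either, since two values within $c_3/n$ of each other can always straddle a grid point.

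Your first option is the right one, and it is what the paper does. By \Cref{lemma:renyi-gaussian}, with unequal diagonal covariances $\lambda D_{\lambda+1}(P\|Q)$ splits into two pieces:
\begin{itemize}
\item a log-determinant term $\tfrac12\sum_i[(\lambda+1)\log\sigma_{Q,i}^2-\lambda\log\sigma_{P,i}^2-\log((\lambda+1)\sigma_{Q,i}^2-\lambda\sigma_{P,i}^2)]$;
\item a mean term whose denominator is the effective variance $(\lambda+1)\sigma_{Q,i}^2-\lambda\sigma_{P,i}^2$.
\end{itemize}

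The first piece is controlled by a second-order expansion (\Cref{lemma:bd-var-loss}) using \Cref{prop:sensitive-tv}. It is $O(\lambda^2c_3^2/(n^2\,TV(i)^2))$ per step, so after $T=\Theta(n^2)$ steps it is $\Theta(1)$ rather than small. It fits in the budget only if the constant in $T$ is taken small relative to $\epsilon$, $\lambda$, $c_3$ and $TV(i)$.

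The second piece is what your proposal misses. You need $(\lambda+1)\widetilde{TV}_{\Dscr'}(i)-\lambda\widetilde{TV}_{\Dscr}(i)\ge TV(i)$, which also guarantees the effective variance is positive; $\widetilde{TV}_{\Dscr'}(i)\ge TV(i)$ alone is not enough. The sensitivity error is amplified by $\lambda=\Theta(\log(1/\delta))$, so you need $(\lambda+1)c_3/n\le c_2\sqrt{\log((m-m_s)/\delta)}/n^{\gamma}$, i.e.\ $n^{2(1-\gamma)}\gtrsim\log(1/\delta)$. This is where the second copy of the slack in \Cref{def:privacy-tv} and the hypothesis $n=\Omega(\log(1/\delta))$ (with $\gamma\le\tfrac12$) are genuinely used. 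Your Step 1 spends them on the $\lambda=0$ version of the inequality, which already holds for $n$ of constant size.

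With these two pieces in place, your good-event reduction and your utility step match the paper's argument.
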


The proof of \Cref{thm:privacy-gp-tv-estimate} is more involved than the standard analyses in \Cref{thm:privacy-gp} due to the dependence of $\widetilde{TV}$ on the database $\Dscr$, and can be found in Appendix \ref{app:pr-tv}.  First, since the empirical estimator $\widehat{TV}$ can underestimate the true TV distance, the analysis introduces a high-probability upper bound to ensure that $\widetilde{TV}(i) \geq TV(i), \forall i \in \Uscr$. This guarantees conservative noise calibration and enables control of the Renyi divergence, following the proof technique for \Cref{thm:privacy-gp}.  
Second, the noise variances for $\Dscr$ and $\Dscr'$ may be different, which introduces an additional difference that must be accounted for in the privacy analysis.  
We show that the resulting difference can be bounded by $\Theta(1/n^2)$ using \Cref{prop:sensitive-tv}, and thus still yields valid privacy guarantees.

\begin{remark}[Publicly available data]
Suppose there is a public database $\tilde \Dscr$ of size $\tilde n$ that shares the same covariate distribution with our database $\Dscr$. Then setting $\widetilde{TV}(i):= \widehat{TV}_{\tilde \Dscr}(i) + 2c_2 {\sqrt{\log((m - m_s)/\delta)}}/{\tilde n^{\gamma}}$ from the empirical estimate $\widehat{TV}_{\tilde \Dscr}(i)$ and plugging this estimate into the expression for $\sigma_i$ in Equation \eqref{eq:noise-add0}, will preserve the guarantees of \Cref{thm:privacy-gp-tv-estimate}.
\end{remark}

\section{Numerical experiments}\label{sec:numerical}

In this section, we empirically demonstrate that \corrdp\ achieves a better privacy-utility tradeoff than standard DP for empirical risk minimization. We use {\corrdp}-SGD (\Cref{alg:corr-dp-gradient-perturbation}) as the foundational algorithm across different experimental setups, and compare the utility of \corrdp\ against three baseline methods: (i) \emph{Semi}: adds noise only on the feature component corresponding to the sensitive feature; (ii) \emph{Standard}: standard DP-SGD that adds uniform noise to all features \citep{abadi2016deep}; (iii) \emph{Partial}: discards the sensitive features and runs non-private algorithms only on insensitive features.
The \emph{Semi} baseline should provide an upper bound on utility for \corrdp\, since \emph{Semi} provides a weaker privacy guarantee that doesn't account for correlations across sensitive and insensitive features. Our goal is to show that \corrdp\ achieves better utility than the \emph{Standard} and \emph{Partial} baselines, and performs comparably to \emph{Semi}. Our loss setup encompasses strongly convex losses (e.g., linear and logistic regression) and non-convex losses (e.g., neural networks). For privacy parameters, we set $\delta = 10^{-4}$ in synthetic datasets and $\delta = 10^{-5}$ in real datasets. 

We specify the noise magnitude used by each method in our problem setting. Across all methods, the default configuration (\textit{Standard}) injects additive noise
\( b \sim \mathcal{N}(0, \mathrm{diag}(\boldsymbol{\sigma}^2)) \)
at each round, with $\sigma_i^2 = C \cdot \frac{\log(1/\delta) + 1}{\epsilon^2}$ for some constant $C>0$.
\begin{itemize}
    \item For \textit{Semi}, when using OLS or logistic regression, we change \(\sigma_i^2 = 0\) for indices \(i \in U\); when using neural networks, we change \((\sigma_{(i,j)}^{(0)})^2 = 0\) for \(i \in U\) at the first layer, and apply the standard noise scale to all subsequent layers. 
    \item For \corrdp, when using OLS or logistic regression, we change \(\sigma_i^2 = C \cdot \frac{\log(1/\delta) + 1}{\epsilon^2} TV(i)\) for indices \(i \in [m]\); when using neural networks, we change \((\sigma_{(i,j)}^{(0)})^2 = C \cdot \frac{\log(1/\delta) + 1}{\epsilon^2} TV(i)\)  at the first layer, and apply the standard noise scale to all subsequent layers. In both cases, the unknown \(TV(i)\) is estimated by the upper bound estimate of the empirical counterpart $\widetilde{TV}(i)$ using finite samples, as described in \Cref{subsec:insample}.
\end{itemize}
Detailed configurations of gradient descent and results of TV distance estimation for each dataset are provided in Appendix~\ref{app:experiment}.

\begin{figure*}[htb]
\vspace{-0.3cm}
\centering

% -------- Legend --------
\begin{subfigure}[t]{0.8\textwidth}
\centering
\includegraphics[width=\textwidth]{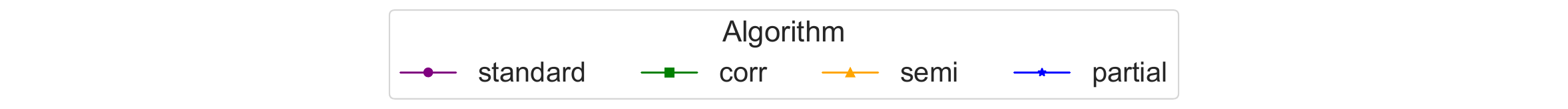}
\end{subfigure}

\vspace{0.2cm}

% -------- Row 1 (3 figures) --------
\begin{subfigure}[t]{0.32\textwidth}
\centering
\includegraphics[width=\textwidth]{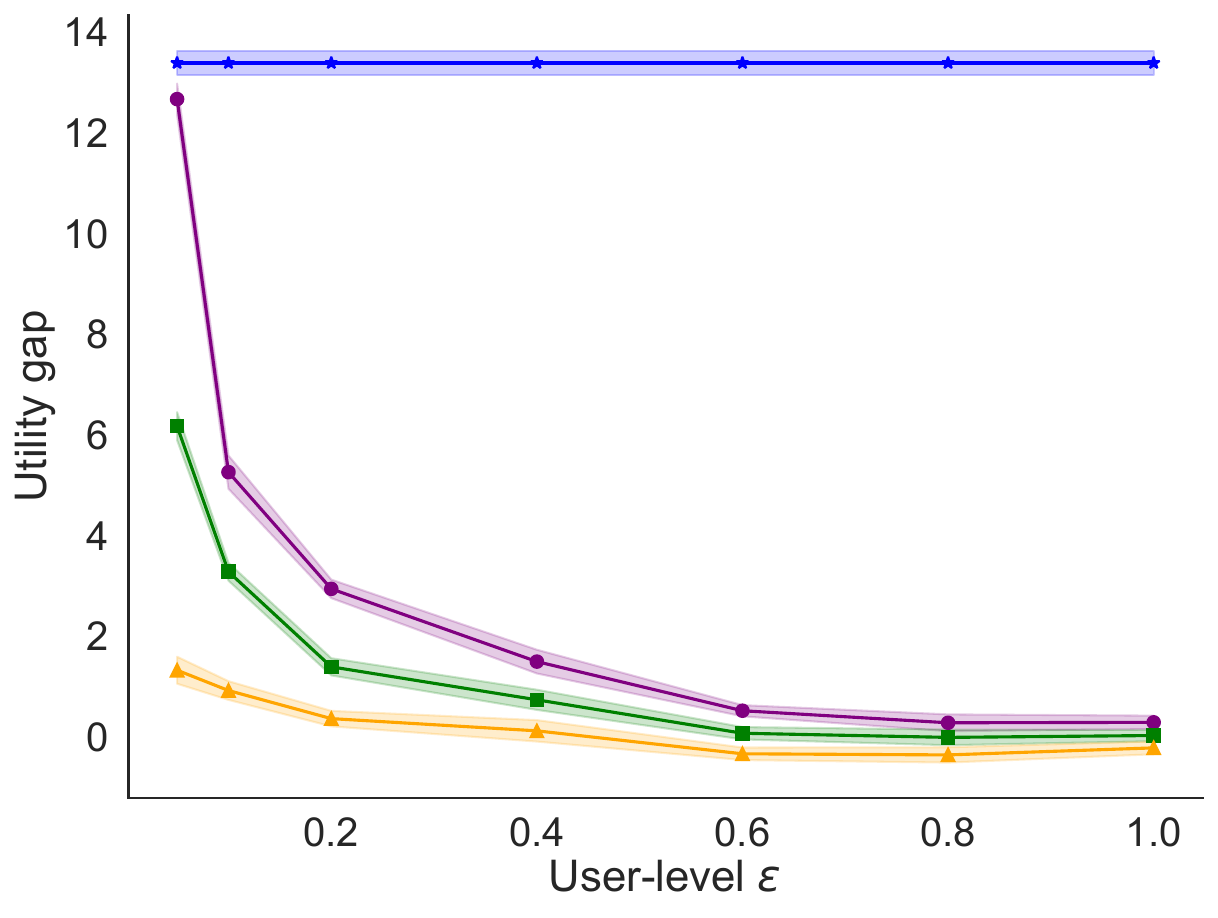}
\caption{Synthetic Data (OLS)}
\label{fig:synthetic}
\end{subfigure}
\hfill
\begin{subfigure}[t]{0.32\textwidth}
\centering
\includegraphics[width=\textwidth]{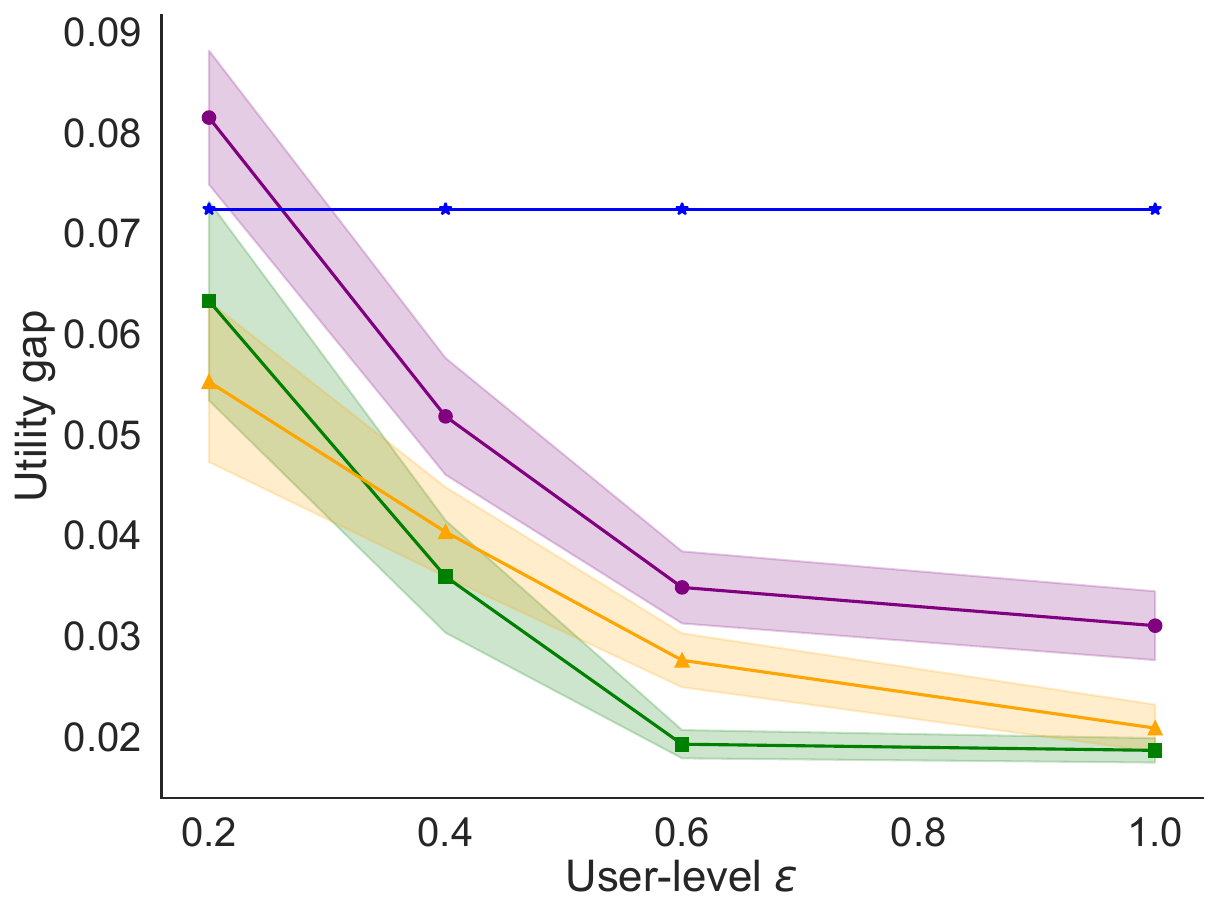}
\caption{Adult (Log. Reg)}
\label{fig:adult}
\end{subfigure}
\hfill
\begin{subfigure}[t]{0.32\textwidth}
\centering
\includegraphics[width=\textwidth]{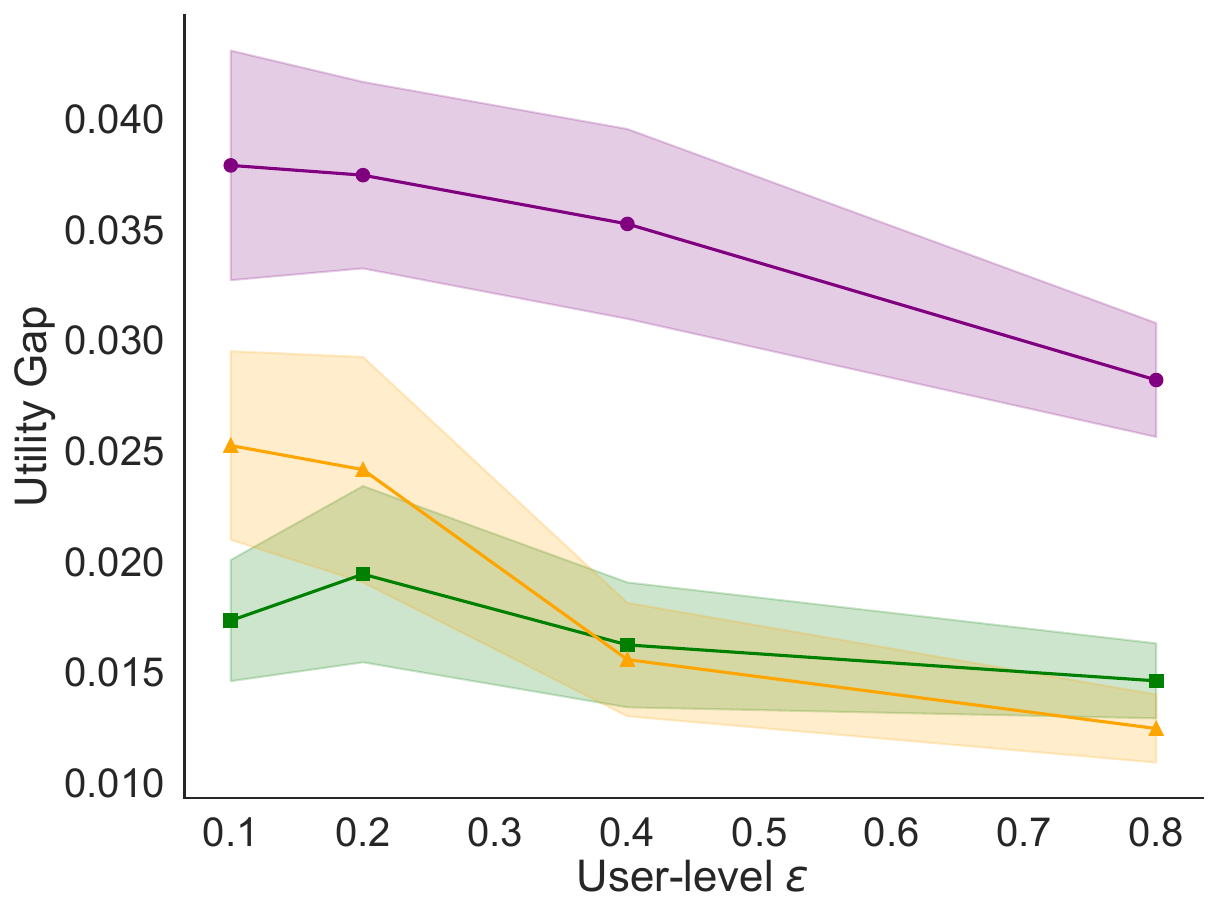}
\caption{Sepsis (2-NN)}
\label{fig:sepsis}
\end{subfigure}

\vspace{0.2cm}

% -------- Row 2 (3 figures) --------
\begin{subfigure}[t]{0.32\textwidth}
\centering
\includegraphics[width=\textwidth]{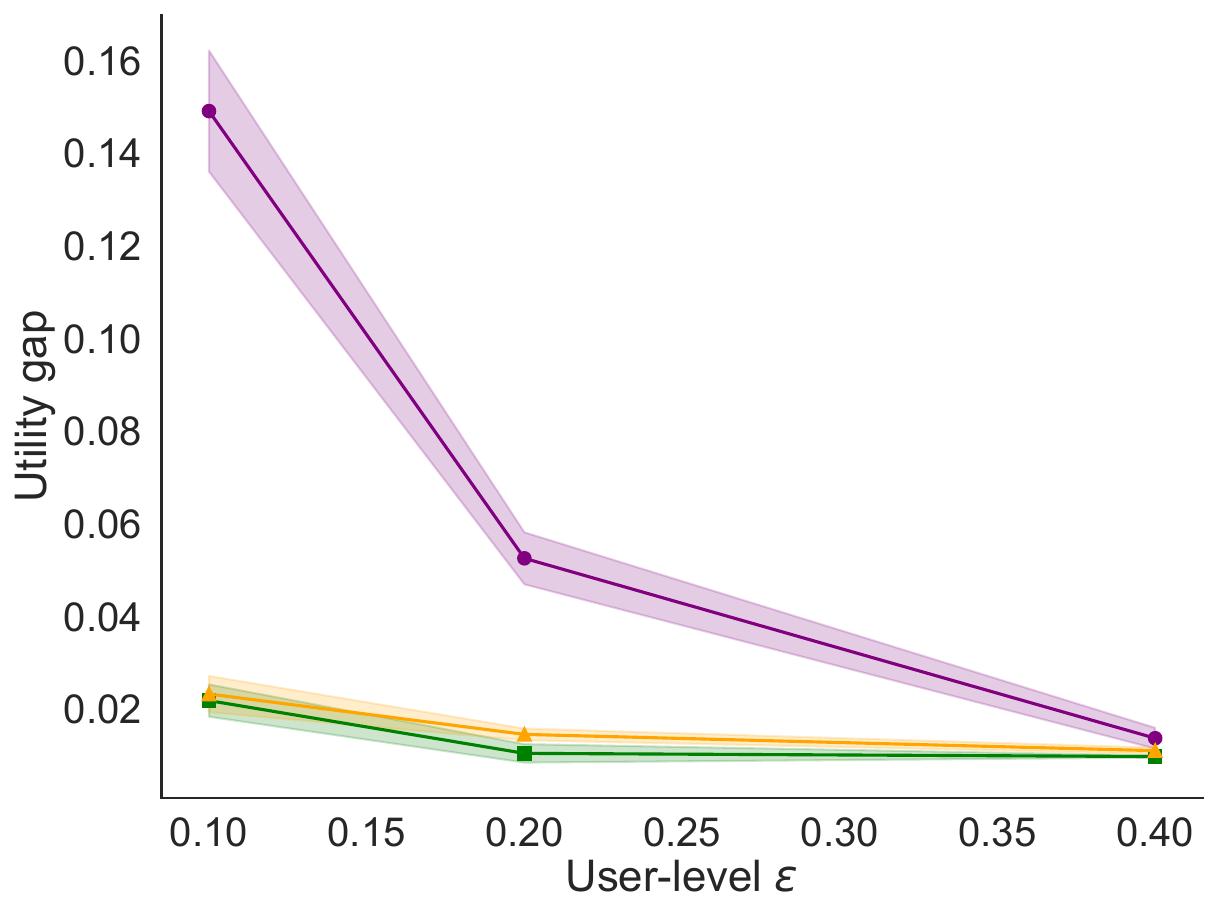}
\caption{Credit Card (Logistic Regression)}
\label{fig:credit_card}
\end{subfigure}
\hfill
\begin{subfigure}[t]{0.32\textwidth}
\centering
\includegraphics[width=\textwidth]{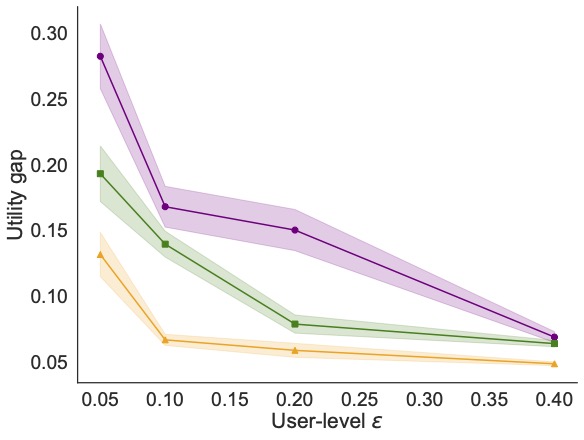}
\caption{Medical Cost (Linear Regression)}
\label{fig:medical_linear}
\end{subfigure}
\hfill
\begin{subfigure}[t]{0.32\textwidth}
\centering
\includegraphics[width=\textwidth]{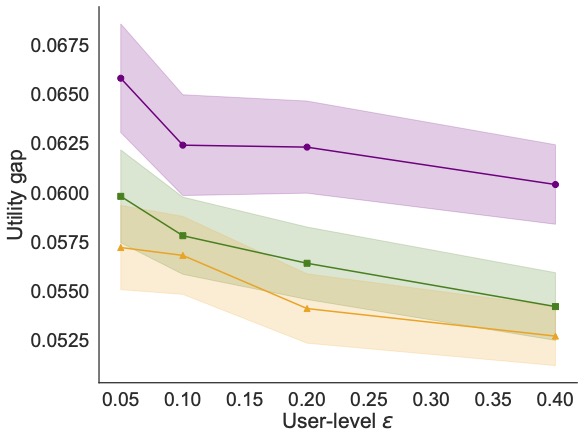}
\caption{Medical Cost (2-NN)}
\label{fig:medical_nn}
\end{subfigure}

\caption{\small{
Privacy-utility trade-offs for least-square regression, logistic regression, and neural network training. 
The shaded area around each curve shows the standard deviation error band. 
We do not run the Partial algorithm on the Sepsis, Credit Card, and Medical Cost datasets since it is not comparable to use the same architecture only with partial features.
}}
\label{fig:acc_privacy}
\end{figure*}

\subsection{Synthetic dataset and results}

We first consider synthetic data, and run a (ridge) linear regression model with loss $\ell(\theta;(x, y)) = (y - \theta^{\top}x )^2+ \frac{1}{n} \|\theta\|_2^2$. 
We generate the synthetic training data according to a multivariate Gaussian, $X = (X^{\Sscr}, X^{\Uscr})^{\top} \sim N(\bm \mu, \Sigma)$, $Y = \theta^{\top}X + \eta$ with $\eta \sim N(0, 5^2)$. Among all features in $X$, we index the sensitive features as $\Sscr = [m_s]$ and index the insensitive features as $\Uscr = [m]\backslash [m_s]$. For the parameters $\bm \mu$ and $\Sigma$ in the marginal distribution, we set \[\mu_i = (-1)^{i + 1}, i \in [m] \text{ and } \Sigma_{i,j} = \begin{cases}1&~\text{if}~i = j \in \Sscr\\2 &~\text{if}~i = j \in \Uscr\\ 0.5&~\text{if}~i, j\in \Sscr, i \neq j \\ 0.5&~\text{if}~i, j \in \Uscr, i \neq j\\ 0.1&~\text{if}~i \in \Sscr, j \in \Uscr\end{cases}. \]
For each component of the true parameter $i \in [m]$, $\theta_i = (-1)^{i + 1}\sqrt{i + 1}$. We set $m = 100, n = 2000, m_s = 10, m - m_s = 90$. 
Therefore, the conditional distribution of sensitive features can be directly calculated and estimated from the posterior of the Gaussian distribution, which is still Gaussian. 

\Cref{fig:synthetic} presents empirical findings on this dataset, which plots the utility gap (Equation \eqref{eq:utility-loss-general}) as a function of $\epsilon$ for all four algorithms, where lower utility gap indicates better performance. We observe that \corrdp\ always outperforms \emph{Standard} and \emph{Partial}, especially so in the high privacy regime. The performance of \corrdp\ is comparable to that of \emph{Semi}, especially for more reasonable values of $\epsilon$, indicating that substantial performance is not lost with the stronger privacy guarantees. All three methods that incorporate the sensitive features ({\corrdp}, \emph{Semi}, and \emph{Standard}) outperform \emph{Partial}, which ignores these features. In~\Cref{app:synthetic-detail}, we include additional empirical results that vary the number of sensitive features $m_s$, and find that the number of sensitive features does not qualitatively impact the overall findings.

\subsection{Real-world datasets and results}
We also empirically evaluate the performance of \corrdp\ on four real-world classification or regression datasets: Adult \citep{adult}, Sepsis \citep{sepsis}, Credit Card~\citep{credit-card} and Medical Cost~\citep{medical-cost}.  These datasets include both discrete and continuous features, where all discrete features are processed with one-hot encoding. For the three classification datasets (Adult, Sepsis, Credit Card), we use the same categorization of these features into sensitive (private) and insensitive (public) as \cite{shen2023classification}. For the Medical Cost dataset (regression task), we set \textsf{sex}, \textsf{smoke}, \textsf{region} as insensitive features and all others as sensitive.

\paragraph{Adult dataset \citep{adult}.} 
The dataset contains around 48,800 individuals in US and the standard task is to predict whether the annual income of an individual reaches \$50k. Following \cite{shen2023classification}, we set \textsf{race, gender, workclass, fnlwgt} as insensitive features and \textsf{age, educational-num, hours-per-week, marital-status, relationship} as sensitive features. 
%The dataset contains 14 features. 
After filtering, we retain 45,175 sample points with 9 features (represented in 31 feature dimensions due to one-hot encoding of discrete features) and train a logistic regression model for binary classification.  

In \Cref{fig:adult}, the baseline accuracy for the non-private algorithm is 83\%, and we calculate the accuracy gap of private algorithms relative to this. We observe that \corrdp\ even outperforms \emph{Semi}, and suffers less than a 4\% accuracy drop compared with the non-private baseline, even in a relatively high privacy regime $(0.2< \epsilon < 0.5)$.

\paragraph{Sepsis dataset \citep{sepsis}.} The dataset contains 110,204 patients in Norway who were diagnosed with infections and the standard task is to predict whether a patient survived an additional 9 days. The dataset contains 3 features: \textsf{sex}, \textsf{episode number}, and \textsf{age}. Following \cite{shen2023classification}, we set \textsf{sex, episode number} as insensitive features and \textsf{age} as the sensitive feature. Both \textsf{sex} and \textsf{episode number} are categorical features, with 2 and 5 categories respectively. 
%The standard task on this dataset is to predict whether a patient will survive.

The original sepsis dataset is imbalanced with 92.6\% data from one class (`Yes') and the remaining 7.4\% from the other class (`No'). To address this imbalance, we use the \texttt{SMOTE} algorithm~\citep{chawla2002smote} using the \texttt{imblearn} (imbalanced-learn)~\citep{lemaavztre2017imbalanced} Python package, which combines SMOTE-based oversampling with Edited Nearest Neighbours (ENN) cleaning. Specifically, SMOTE-based oversampling generates synthetic minority samples by interpolating numerical features while assigning categorical features via nearest-neighbor voting, and ENN subsequently removes samples that are inconsistent with their local neighborhood to reduce noise. 

After pre-processing to address the class imbalance, we randomly sample 10,000 points from the resulting balanced dataset. We train a two-layer neural network, where the activation function and the number of neurons in the hidden layer are `Softplus' and 5 respectively, and the activation function of the final layer is the `softmax'. 
Based on the extension of \corrdp\ to NNs presented in \Cref{subsec:privacy-utility-gp}, for \corrdp\ and \emph{Semi}, both \corrdp\ and \emph{Semi} reduce the injected noise by restricting perturbations to the insensitive input components of the first layer. 
In \Cref{fig:sepsis}, the baseline accuracy for non-private algorithm is 69\%, and we observe that \corrdp\ has similar performance to \emph{Semi} and consistently outperforms \emph{Standard}, despite the stochasticity of the neural network initiation and batch gradient descent. 

\paragraph{Credit Card dataset \citep{credit-card}.} The dataset contains around 30,000 credit card customers in Taiwan over a 6-month period, and contains features corresponding to demographic information, credit data, payments, and statement information. The standard task on this dataset is to predict whether a customer will make the default payment next month. Following \cite{shen2023classification}, we set \textsf{sex}, \textsf{education}, \textsf{marriage}, and \textsf{age} as insensitive features, and set the remaining 19 features as sensitive. 

 We use the full samples from the dataset and train a logistic regression model. As shown in \Cref{fig:credit_card}, \corrdp\ achieves nearly the same utility as \emph{Semi} and substantially outperforms \emph{Standard} across all privacy levels.

\paragraph{Medical Cost dataset \citep{medical-cost}.} The dataset contains records from 1,337 individuals related to their health information and their medical costs as billed by health insurance. The standard task on this dataset is to predict the individual medical costs billed by health insurance. The dataset contains seven features: \textsf{age, sex, bmi, children, smoker, region, charges}. We set \textsf{sex}, \textsf{smoke}, \textsf{region} as insensitive features, and all others as sensitive. 

We randomly sample 1,000 points from the dataset and evaluate two model classes: linear regression and a two-layer neural network with 20 hidden neurons and `Softplus' activations. Results are reported in Figures~\ref{fig:medical_linear}-~\ref{fig:medical_nn}). Across both models, \corrdp\ consistently outperforms \emph{Standard} and performs comparably to \emph{Semi}, further confirming that \corrdp\ achieves strong privacy-utility trade-offs while reducing the noise scale relative to standard differentially private methods.

\section{Discussion}\label{sec:discuss}
In this work, we studied {\corrdp}, a relaxed notion of differential privacy that incorporates feature correlation across partitioned sensitive and insensitive features. We showed how this notion can be applied to the DP-ERM and DP-SGD algorithms. Our theoretical results show that {\corrdp} leads to utility improvements for the same fixed $\epsilon$ value, relative to standard DP. The essence of these improvements comes from adding less noise to features that are independent from, or only weakly correlated with, the sensitive variables. We verified these findings empirically across real and synthetic datasets. We also gave extensions of these results, showing that our algorithmic framework could be applied (i) to neural networks with more general loss functions, (ii) when there was uncertainty about the level of correlation, and (iii) using general notions of distance across features.

Our work has several limitations that point to promising directions for future research. First, the framework is most effective when the distinction between sensitive and insensitive features is clearly defined. In settings where sensitivity lies on a spectrum and the boundary is less clear, we introduce relaxations in Appendix~\ref{app:assumption} that still yield improved utility guarantees. Second, ensuring privacy while maintaining strong utility under {\corrdp}-SGD requires additional assumptions on the feature space and the structure of the fitted model. Third, the noise calibration in {\corrdp}-SGD relies on an (upper bound) estimate of the TV distance. While we provide a method for computing such an estimate, this step can be computationally demanding in high-dimensional problems, with some potential remedies discussed at the end of Appendix~\ref{app:example}. Advances in distance estimation methods would directly enhance the efficiency of this step in {\corrdp}. Furthermore, extending the theoretical guarantees of {\corrdp} to other statistical tasks, such as hypothesis testing and related inferential problems, would be worth investigating.

%%%%%%%%%%%%%%%%%%%%%%%%%%%%%%%%%%%%%%%%%%%%%%%%%%%%%%%%%%%%

\bibliography{ref}
\bibliographystyle{plainnat}

%\clearpage
\appendix

\section{Omitted Proofs}\label{app:proof}
\subsection{Proof of \Cref{thm:corr-laplace}} \label{app:2.1proofs}

\corrLaplace*

\begin{proof}
Consider two neighboring databases $\Dscr, \Dscr' \in \mathbb N^{|\Xscr|}$, and let $f(\cdot)$ be some function $f: \mathbb N^{|\Xscr|} \to \R^K$. Let $p_{\Dscr}$ denote the probability density function of $\widetilde M_L(\Dscr, f, \epsilon)$ and $p_{\Dscr'}$ denote the probability density function of $\widetilde M_{L}(\Dscr', f, \epsilon)$ with $\Dscr_{:,i}$ and $\Dscr_{:,i}'$ as their  $i$-th column (feature) respectively. Then for any arbitrary point $z \in \R^K$:
\begin{align*}
    \frac{p_{\Dscr}(z)}{p_{\Dscr'}(z)} &= \prod_{i = 1}^K \Para{\frac{\exp(-\frac{\epsilon|f(\Dscr_{:,i}) - z_i)|}{\Delta_C f})}{\exp(-\frac{\epsilon|f(\Dscr_{:,i}) - z_i)|}{\Delta_C f})}} \\
    & = \prod_{i = 1}^K \exp \Para{\frac{\epsilon(|f(\Dscr_{:,i}') - z_i| - |f(\Dscr_{:,i}) - z_i|)}{\Delta_C f}}\\
    & \leq \prod_{i = 1}^K \exp\Para{\frac{\epsilon|f(\Dscr_{:,i}) - f(D_{:,i}')|\mathbf{1}_{D_i \neq \Dscr_{:,i}'}}{\Delta_C f}} \\
    &\leq \exp\Para{\frac{\epsilon \sum_{k \in [K]} \Delta f_k \mathbf{1}_{D_{:,k} \neq D_{:,k}'}}{\Delta_C f}} \\
    &\leq \exp\Para{\frac{\epsilon}{d(\Dscr, \Dscr')}}\mathbf{1}_{\Dscr_{:,\Sscr} = \Dscr_{:,\Sscr}'} + \exp(\epsilon)\mathbf{1}_{\Dscr_{:,\Sscr} \neq \Dscr_{:,\Sscr}'},
\end{align*}
where the second inequality comes from the fact that $\Delta f \leq \sum_{k \in [K]} \Delta f_k$, and the third inequality follows from considering the two cases when $\Dscr_{:,\Sscr} = \Dscr_{:,\Sscr}$ and $\Dscr_{:,\Sscr} \neq \Dscr_{:,\Sscr}'$ (since only one insensitive feature can change by \Cref{defn:neighbor}). 

For the utility guarantee, we have:
\begin{align*}
    \P\Paran{\|f(\Dscr) - \widetilde \Mscr_L(\Dscr, f(\cdot), \epsilon)\|_{\infty} \geq \frac{\Delta_C f}{\epsilon}\log (K/\beta)} & = \P\Paran{\max_{i \in [K]}|Y_i| \geq \frac{\Delta_C f}{\epsilon}\log (K/\beta)}\\
    & \leq K \P\Paran{|Y_i| \geq \frac{\Delta_C f}{\epsilon}\log (K/\beta)}\\
    & = K \cdot \frac{\beta}{K} \\
    & = \beta, 
\end{align*}
where the first equality comes from the definition of the mechanism, the second equality comes from a union bound over $i \in [K]$, 
and the second step comes from the fact each $Y_i \sim \text{Lap}(\Delta_C f/\epsilon)$ and tail bounds on the Laplace distribution.  We note that this analysis is nearly identical to the utility guarantee of the Laplace Mechanism in \citet{dwork2006differential}, but is included for completeness.
\end{proof}

\subsection{Proof of \Cref{thm:privacy-gp}}\label{app:privacy-utility-gap}
\privacygp*

\subsubsection{Helper Lemmas}

We first introduce and prove the following lemmas, which will be used in the proof of \Cref{thm:privacy-gp}.

\begin{lemma}[High-Probability Composition of {\corrdp}]\label{lemma:prob-corrdp}
Suppose with probability $1-\delta_2$ (taken over the randomness in the dataset and the algorithmic mechanism), the algorithm $\Ascr$ satisfies $(\epsilon, \delta_1)$-{\corrdp}. Then $\Ascr$ is $(\epsilon, \delta_1 + \delta_2)$-{\corrdp}.
\end{lemma}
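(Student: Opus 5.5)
We split on a good event of probability at least $1-\delta_2$, use the conditional $(\epsilon,\delta_1)$ guarantee there, and absorb the bad event into $\delta_2$.

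\textbf{Formalizing the hypothesis.} Write the randomness as an auxiliary variable $\omega$ drawn from a data-independent distribution; this covers, for instance, the confidence-bound event in \Cref{def:privacy-tv}. Let $G$ be the good event, with $\P(G)\ge 1-\delta_2$. The hypothesis is read as follows: for every pair of neighboring $\Dscr,\Dscr'$ and every $R \subseteq Range(\Ascr)$,
\[
\P(\Ascr(\Dscr)\in R,\ G) \le e^{\epsilon/d(\Dscr,\Dscr')}\,\P(\Ascr(\Dscr')\in R) + \delta_1 .
\]
One natural way to obtain this is to have, on $G$, an algorithm $\Ascr_G$ that coincides with $\Ascr$ and is $(\epsilon,\delta_1)$-{\corrdp}. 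In that case the left side is at most $\P(\Ascr_G(\Dscr)\in R)$.

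\textbf{Main step.} For neighbors $\Dscr,\Dscr'$ and any $R$, decompose
\[
\P(\Ascr(\Dscr)\in R) = \P(\Ascr(\Dscr)\in R,\ G) + \P(\Ascr(\Dscr)\in R,\ G^c).
\]
\begin{itemize}
\item The second term is at most $\P(G^c)\le\delta_2$.
\item The first term is bounded by the hypothesis on $G$ by $e^{\epsilon/d(\Dscr,\Dscr')}\P(\Ascr(\Dscr')\in R)+\delta_1$.
\end{itemize}
Adding the two bounds gives
\[
\P(\Ascr(\Dscr)\in R)\le e^{\epsilon/d(\Dscr,\Dscr')}\P(\Ascr(\Dscr')\in R)+\delta_1+\delta_2,
\]
which is exactly \Cref{defn:corr-dp} with parameters $(\epsilon,\delta_1+\delta_2)$.

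\textbf{Main obstacle.} The difficulty is the right-hand side, where we need to compare against the actual algorithm run on $\Dscr'$, not its restriction to $G$. If the guarantee on $G$ is stated with $\P(\Ascr(\Dscr')\in R,\ G')$ on the right, then we simply use $\P(\cdot, G')\le \P(\cdot)$, which is monotone in the right direction.

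If instead the good event depends on the dataset, as when $\widehat{TV}_{\Dscr}$ concentrates, I would define the probability jointly over the data draw and the mechanism. I would then use the same split for both $\Dscr$ and $\Dscr'$, taking $G$ to be the event that the confidence bound holds for the relevant dataset. Only the failure probability on the $\Dscr$ side is charged, so the slack stays $\delta_2$ rather than $2\delta_2$.

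I would state this interpretation explicitly and note that it matches how the lemma is used in \Cref{thm:privacy-gp-tv-estimate}, where the final guarantee is $(\epsilon,2\delta)$ with $\delta_1=\delta_2=\delta$.
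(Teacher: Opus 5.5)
Your argument is correct and is essentially the paper's proof. The paper conditions on the good event $E$ and splits $\P(\Ascr(\Dscr)\in R)$ by total probability into $(1-\delta_2)\P(\Ascr(\Dscr)\in R\mid E)+\delta_2\P(\Ascr(\Dscr)\in R\mid \bar E)$. It bounds the second term by $\delta_2$, applies the guarantee conditioned on $E$ on both sides, and then drops the conditioning via $(1-\delta_2)\P(\Ascr(\Dscr')\in R\mid E)\le\P(\Ascr(\Dscr')\in R)$. That is exactly your joint-probability split plus monotonicity on the $\Dscr'$ side, and your formalized hypothesis is implied by the paper's conditional one.

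One caution concerns your aside that an $(\epsilon,\delta_1)$-{\corrdp} surrogate $\Ascr_G$ agreeing with $\Ascr$ on $G$ is a natural way to obtain your hypothesis; it does not work as stated. It only bounds the left side by $e^{\epsilon/d(\Dscr,\Dscr')}\P(\Ascr_G(\Dscr')\in R)+\delta_1$, and $\P(\Ascr_G(\Dscr')\in R)$ can exceed $\P(\Ascr(\Dscr')\in R)$ by up to $\P(G^c)$, which in general costs an extra $e^{\epsilon/d(\Dscr,\Dscr')}\delta_2$. What is actually needed is the form in which the good event also restricts the $\Dscr'$ side, which you adopt in your last paragraph and which the paper uses.
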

\begin{proof}
    Consider the good event $E$ that $\Ascr$ is $(\epsilon,\delta_1)$-{\corrdp} with $\P(E) = 1-\delta_2$. Then:
    \[\P(\Ascr(\Dscr) \in \Sscr |E) \leq e^{\frac{\epsilon}{d(\Dscr, \Dscr')}} \P(\Ascr(\Dscr') \in \Sscr | E) + \delta_1.\]
    On the contrary, under $E$, we have: $\P(\Ascr(\Dscr) \in \Sscr| E) \leq 1$.

    Combining these facts:
    \begin{align*}
        \P(\Ascr(\Dscr) \in \Sscr) &= (1-\delta_2) \P(\Ascr(\Dscr) \in \Sscr|E) +  \delta_2 \P(\Ascr(\Dscr) \in \Sscr|\bar E)\\
        & \leq (1-\delta_2) (e^{\frac{\epsilon}{d(\Dscr, \Dscr')}} \P(\Ascr(\Dscr) \in \Sscr|E) + \delta_1) + \delta_2 \cdot 1\\
        & = (1-\delta_2) e^{\frac{\epsilon}{d(\Dscr, \Dscr')}}\P(\Ascr(\Dscr') \in \Sscr | E) + \delta_1 + \delta_2  - \delta_1\delta_2\\
        & \leq e^{\frac{\epsilon}{d(\Dscr, \Dscr')}}\P(\Ascr(\Dscr') \in \Sscr) + \delta_1 + \delta_2
    \end{align*}
    where the second inequality is because $\P(\Ascr(\Dscr')\in \Sscr) \geq (1-\delta_2)\P(\Ascr(\Dscr')\in \Sscr|E)$ from the total probability decomposition.
\end{proof}

\begin{definition}[Renyi Divergence]\label{defn:renyi-divergence}
    The $\alpha$-Renyi divergence between distributions $P$ and $Q$ is defined as:
    \[D_{\alpha}(P\|Q) = \frac{1}{\alpha - 1}\log \E_{x \sim P}\Paran{\Para{\frac{dP(x)}{dQ(x)}}^{\alpha - 1}},\]
    For special values $\alpha = 1, \infty$, the $\alpha$-Renyi divergence is defined by taking the corresponding limit of the right-hand side. 
\end{definition}

\begin{lemma}[$\alpha$-Renyi Divergence between two Multivariate Gaussians]\label{lemma:renyi-gaussian}
For two $m$-dimensional distributions $P = N(\mu_1, \Sigma_1)$, $Q = N(\mu_2, \Sigma_2)$,
\begin{equation*}
    D_{\alpha}(P\|Q) = \begin{cases}\frac{1}{2}\Para{\log \frac{|\Sigma_2|}{|\Sigma_1|} + \text{Tr}(\Sigma_2^{-1}\Sigma_1) + (\mu_2 - \mu_1)^{\top}\Sigma_2^{-1}(\mu_2 - \mu_1) - m},~\text{if}~\alpha = 1\\\frac{1}{\alpha - 1}\log \Para{\frac{|\Sigma_2|^{\alpha/2}|\Sigma_1|^{(1-\alpha)/2}}{|(1-\alpha)\Sigma_1 + \alpha \Sigma_2|^{1/2}}} + \frac{\alpha}{2}(\mu_1 - \mu_2)^{\top}((1-\alpha) \Sigma_1 + \alpha \Sigma_2)^{-1}(\mu_1 - \mu_2), ~\text{else}\end{cases}.
\end{equation*}
\end{lemma}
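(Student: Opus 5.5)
The plan is to derive the formula from the definition by a Gaussian integral, treating $\alpha\neq 1$ first and obtaining $\alpha=1$ as a limit (or directly, since $D_1$ is the KL divergence). Write $p$ and $q$ for the densities of $P$ and $Q$. By \Cref{defn:renyi-divergence},
\[
D_\alpha(P\|Q)=\frac{1}{\alpha-1}\log\int p(x)^{\alpha}q(x)^{1-\alpha}\,dx .
\]
Substituting the Gaussian densities, the normalizing constants contribute
\[
(2\pi)^{-m/2}\,|\Sigma_1|^{-\alpha/2}\,|\Sigma_2|^{-(1-\alpha)/2}.
\]
The exponent is
\[
-\tfrac12\Bigl[\alpha (x-\mu_1)^\top\Sigma_1^{-1}(x-\mu_1)+(1-\alpha)(x-\mu_2)^\top\Sigma_2^{-1}(x-\mu_2)\Bigr].
\]

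Next I complete the square in $x$. Set $A=\alpha\Sigma_1^{-1}+(1-\alpha)\Sigma_2^{-1}$ and assume $A\succ 0$, since otherwise the integral diverges and the divergence is $+\infty$. Then $x^\top A x - 2x^\top b + c$, with $b=\alpha\Sigma_1^{-1}\mu_1+(1-\alpha)\Sigma_2^{-1}\mu_2$, integrates to
\[
(2\pi)^{m/2}|A|^{-1/2}\exp\bigl(-\tfrac12(c-b^\top A^{-1}b)\bigr).
\]
Two simplifications remain.

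The determinant identity is $|A| = |\Sigma_1|^{-1}|\Sigma_2|^{-1}\,|\alpha\Sigma_2+(1-\alpha)\Sigma_1|$. It follows from $A=\Sigma_1^{-1}\bigl(\alpha\Sigma_2+(1-\alpha)\Sigma_1\bigr)\Sigma_2^{-1}$. Combining this with the normalizing constants gives the factor
\[
\frac{|\Sigma_1|^{(1-\alpha)/2}\,|\Sigma_2|^{\alpha/2}}{|(1-\alpha)\Sigma_1+\alpha\Sigma_2|^{1/2}}
\]
inside the logarithm.

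The quadratic form identity is
\[
c-b^\top A^{-1}b=\alpha(1-\alpha)(\mu_1-\mu_2)^\top\bigl((1-\alpha)\Sigma_1+\alpha\Sigma_2\bigr)^{-1}(\mu_1-\mu_2).
\]
This is the standard "parallel sum" identity. It uses
\[
\Sigma_1^{-1}-\alpha\Sigma_1^{-1}A^{-1}\Sigma_1^{-1}=(1-\alpha)\bigl((1-\alpha)\Sigma_1+\alpha\Sigma_2\bigr)^{-1},
\]
together with the analogous relations. The check is easiest by noting that $c-b^\top A^{-1}b$ is invariant under translating both means, so we may take $\mu_2=0$. Dividing the exponent by $\alpha-1$ gives $\frac{\alpha}{2}(\mu_1-\mu_2)^\top(\cdots)^{-1}(\mu_1-\mu_2)$, which is the claimed formula.

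For $\alpha=1$, I compute $\mathbb E_P[\log p-\log q]$ directly using $\mathbb E_P[(x-\mu_2)^\top\Sigma_2^{-1}(x-\mu_2)]=\mathrm{Tr}(\Sigma_2^{-1}\Sigma_1)+(\mu_1-\mu_2)^\top\Sigma_2^{-1}(\mu_1-\mu_2)$ and $\mathbb E_P[(x-\mu_1)^\top\Sigma_1^{-1}(x-\mu_1)]=m$. This yields the KL expression.

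I expect the main obstacle to be the matrix algebra in the quadratic-form identity; I will handle it via the $\mu_2=0$ reduction and the Woodbury-type factorization above. The rest is routine.
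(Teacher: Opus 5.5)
Your proposal is correct and follows the same route as the paper. Both arguments complete the square with $A=\alpha\Sigma_1^{-1}+(1-\alpha)\Sigma_2^{-1}$, simplify using a determinant identity and the quadratic-form identity $c-b^\top A^{-1}b=\alpha(1-\alpha)(\mu_1-\mu_2)^\top((1-\alpha)\Sigma_1+\alpha\Sigma_2)^{-1}(\mu_1-\mu_2)$, and treat $\alpha=1$ as the KL case.

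Your determinant identity $|A|=|\Sigma_1|^{-1}|\Sigma_2|^{-1}|(1-\alpha)\Sigma_1+\alpha\Sigma_2|$, obtained from $A=\Sigma_1^{-1}((1-\alpha)\Sigma_1+\alpha\Sigma_2)\Sigma_2^{-1}$, is in fact the right one. The paper's displayed version $|\Sigma_1|^{-\alpha}|\Sigma_2|^{-(1-\alpha)}|(1-\alpha)\Sigma_1+\alpha\Sigma_2|^{-1}$ fails already when $\Sigma_1=\Sigma_2$ with $|\Sigma_1|\neq 1$, and the stated final factor only follows from yours. Your $A\succ 0$ caveat and your explicit check of the quadratic identity via the $\mu_2=0$ reduction are also more careful than the paper's ``standard'' assertion.
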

\begin{proof}
We prove the case $\alpha \neq 1$ by direct calculation from the definition, and then obtain the $\alpha = 1$ case by taking a limit. First, denote the densities of $P$ and $Q$ as $p(x)$ and $q(x)$ respectively:
\[
    p(x) = \frac{1}{(2\pi)^{m/2} |\Sigma_1|^{1/2}}
    \exp\!\Bigl( -\frac{1}{2}(x-\mu_1)^{\top}\Sigma_1^{-1}(x-\mu_1) \Bigr),
    \quad
    q(x) = \frac{1}{(2\pi)^{m/2} |\Sigma_2|^{1/2}}
    \exp\!\Bigl( -\frac{1}{2}(x-\mu_2)^{\top}\Sigma_2^{-1}(x-\mu_2) \Bigr).
\]
For $\alpha \in (0,1) \cup (1,\infty)$, we have
\begin{align*}
    p(x)^{\alpha} q(x)^{1-\alpha}
    &= (2\pi)^{-m/2}
       |\Sigma_1|^{-\alpha/2}
       |\Sigma_2|^{-(1-\alpha)/2} \\
    &\quad \times
    \exp\!\Bigl(
        -\frac{1}{2}\bigl[
            \alpha (x-\mu_1)^{\top}\Sigma_1^{-1}(x-\mu_1)
            + (1-\alpha) (x-\mu_2)^{\top}\Sigma_2^{-1}(x-\mu_2)
        \bigr]
    \Bigr).
\end{align*}
In the exponent, we expand the quadratic form in $x$:
\begin{align*}
    &\alpha (x-\mu_1)^{\top}\Sigma_1^{-1}(x-\mu_1)
    + (1-\alpha) (x-\mu_2)^{\top}\Sigma_2^{-1}(x-\mu_2) \\
    &= x^{\top} \underbrace{\bigl(\alpha \Sigma_1^{-1} + (1-\alpha)\Sigma_2^{-1}\bigr)}_{=:A} x
        - 2 \underbrace{\bigl(\alpha \Sigma_1^{-1}\mu_1 + (1-\alpha)\Sigma_2^{-1}\mu_2\bigr)}_{=:b}^{\top} x
        + \alpha \mu_1^{\top}\Sigma_1^{-1}\mu_1 + (1-\alpha)\mu_2^{\top}\Sigma_2^{-1}\mu_2.
\end{align*}
Let
\[
    A := \alpha \Sigma_1^{-1} + (1-\alpha)\Sigma_2^{-1}, 
    \qquad
    b := \alpha \Sigma_1^{-1}\mu_1 + (1-\alpha)\Sigma_2^{-1}\mu_2.
\]
Then
\[
    x^{\top} A x - 2 b^{\top} x
    = (x - A^{-1} b)^{\top} A (x - A^{-1} b) - b^{\top}A^{-1}b.
\]
Hence
\[
    p(x)^{\alpha} q(x)^{1-\alpha}
    = (2\pi)^{-m/2}
       |\Sigma_1|^{-\alpha/2}
       |\Sigma_2|^{-(1-\alpha)/2}
       \exp\!\Bigl(
           -\frac{1}{2}(x - A^{-1}b)^{\top} A (x - A^{-1}b)
       \Bigr)
       \exp\!\Bigl(
           -\frac{1}{2}\bigl[
               c_0 - b^{\top}A^{-1}b
           \bigr]
       \Bigr),
\]
where we set
\[
    c_0 := \alpha \mu_1^{\top}\Sigma_1^{-1}\mu_1 + (1-\alpha)\mu_2^{\top}\Sigma_2^{-1}\mu_2.
\]
Note that the integral of the centered Gaussian is standard:
\[
    \int_{\mathbb R^m} (2\pi)^{-m/2}
       \exp\!\Bigl(
           -\frac{1}{2}(x - A^{-1}b)^{\top} A (x - A^{-1}b)
       \Bigr)\,dx
    = |A|^{-1/2}.
\]
Therefore
\begin{align*}
    \int p(x)^{\alpha} q(x)^{1-\alpha} \, dx
    &= |\Sigma_1|^{-\alpha/2}
       |\Sigma_2|^{-(1-\alpha)/2}
       |A|^{-1/2}
       \exp\!\Bigl(
           -\frac{1}{2}\bigl[
               c_0 - b^{\top}A^{-1}b
           \bigr]
       \Bigr).
\end{align*}
Define $\Sigma_{\alpha} := (1-\alpha)\Sigma_1 + \alpha \Sigma_2$. Two standard (but checkable) matrix identities hold for positive definite $\Sigma_1, \Sigma_2$ and scalar $\alpha$:
\begin{align}
    |A|
    &= |\alpha \Sigma_1^{-1} + (1-\alpha)\Sigma_2^{-1}|
     = |\Sigma_1|^{-\alpha} |\Sigma_2|^{-(1-\alpha)} |\Sigma_{\alpha}|^{-1}, \label{eq:det-identity}\\
    c_0 - b^{\top}A^{-1}b
    &= \alpha(1-\alpha) (\mu_1 - \mu_2)^{\top} \Sigma_{\alpha}^{-1} (\mu_1 - \mu_2). \label{eq:quad-identity}
\end{align}
These can be verified by expanding both sides or by diagonalizing in a basis where the matrices commute; they are standard in formulas for Gaussian Chernoff/Renyi divergences.

Substituting \eqref{eq:det-identity} and \eqref{eq:quad-identity} into the integral, we get
\begin{align*}
    \int p(x)^{\alpha} q(x)^{1-\alpha} \, dx
    &= |\Sigma_1|^{-\alpha/2}
       |\Sigma_2|^{-(1-\alpha)/2}
       \bigl(|\Sigma_1|^{-\alpha} |\Sigma_2|^{-(1-\alpha)} |\Sigma_{\alpha}|^{-1}\bigr)^{-1/2} \\
    &\quad \times
       \exp\!\Bigl(
           -\frac{1}{2}\alpha(1-\alpha) (\mu_1 - \mu_2)^{\top} \Sigma_{\alpha}^{-1} (\mu_1 - \mu_2)
       \Bigr) \\
    &= \frac{|\Sigma_2|^{\alpha/2} |\Sigma_1|^{(1-\alpha)/2}}{|\Sigma_{\alpha}|^{1/2}}
       \exp\!\Bigl(
           -\frac{\alpha(1-\alpha)}{2} (\mu_1 - \mu_2)^{\top} \Sigma_{\alpha}^{-1} (\mu_1 - \mu_2)
       \Bigr).
\end{align*}

Finally, plug into the definition, by Definition~\ref{defn:renyi-divergence},
\begin{align*}
    D_{\alpha}(P\|Q)
    &= \frac{1}{\alpha - 1}
       \log \int p(x)^{\alpha} q(x)^{1-\alpha} \, dx \\
    &= \frac{1}{\alpha - 1}
       \log \biggl(
            \frac{|\Sigma_2|^{\alpha/2} |\Sigma_1|^{(1-\alpha)/2}}{|\Sigma_{\alpha}|^{1/2}}
       \biggr)
       + \frac{1}{\alpha - 1}
         \log \exp\!\Bigl(
           -\frac{\alpha(1-\alpha)}{2} (\mu_1 - \mu_2)^{\top} \Sigma_{\alpha}^{-1} (\mu_1 - \mu_2)
       \Bigr) \\
    &= \frac{1}{\alpha - 1}
       \log \biggl(
            \frac{|\Sigma_2|^{\alpha/2} |\Sigma_1|^{(1-\alpha)/2}}{|\Sigma_{\alpha}|^{1/2}}
       \biggr)
       + \frac{\alpha}{2} (\mu_1 - \mu_2)^{\top} \Sigma_{\alpha}^{-1} (\mu_1 - \mu_2),
\end{align*}
which is exactly the expression stated in the lemma for $\alpha \neq 1$.

Specially, when $\alpha = 1$, the Kullback-Leibler divergence between two Gaussians is known to be
\[
    \mathrm{KL}\bigl( \mathcal N(\mu_1, \Sigma_1) \,\|\, \mathcal N(\mu_2, \Sigma_2) \bigr)
    = \frac{1}{2}\biggl(
        \log \frac{|\Sigma_2|}{|\Sigma_1|}
        + \operatorname{Tr}(\Sigma_2^{-1} \Sigma_1)
        + (\mu_2 - \mu_1)^{\top}\Sigma_2^{-1}(\mu_2 - \mu_1)
        - m
    \biggr).
\]
Since $D_\alpha(P\|Q)$ is continuous in $\alpha$ and Renyi divergence converges to KL divergence as $\alpha \to 1$, taking the limit $\alpha \to 1$ in the above expression gives the $\alpha=1$ line in the statement. This completes the proof.
\end{proof}

\begin{lemma}[Property of {\corrdp} Privacy Loss]\label{lemma:data-privacy-loss}
    Define $\alpha_{\Mscr}(\lambda;\Dscr, \Dscr') = \log \E_{\theta \sim \Mscr(\Dscr)}\Para{\frac{\P[\Mscr(\Dscr) = \theta]}{\P[\Mscr(\Dscr') = \theta]}}^{\lambda}$. Then:
    \begin{itemize}
        \item \textbf{Composability}: Suppose $\Mscr$ consists of a sequence of $K$ adaptive mechanisms $\Mscr_1,\ldots, \Mscr_K$. Then:
        \[\alpha_{\Mscr}(\lambda)\leq \sum_{i = 1}^K \alpha_{\Mscr_i}(\lambda).\]
        \item \textbf{Tail bound}: For any $\epsilon > 0$, the mechanism $\Mscr$ is $(\frac{\epsilon}{d(\Dscr, \Dscr')}, \delta)$-differentially private for,
        \[\delta \geq \exp\Para{\alpha_{\Mscr}(\lambda) - \frac{\lambda}{d(\Dscr, \Dscr')} \epsilon},\]
        for some $\lambda \geq 0$.
    \end{itemize}
\end{lemma}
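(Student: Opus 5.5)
This lemma is the moments-accountant lemma of \citet{abadi2016deep}, with the fixed privacy parameter replaced by the pair-dependent budget $\epsilon/d(\Dscr,\Dscr')$. Throughout, the neighboring pair $(\Dscr,\Dscr')$ is held fixed, so $d:=d(\Dscr,\Dscr')$ is a constant and the standard arguments go through verbatim. Write the privacy loss as $c(\theta)=\log\frac{\P[\Mscr(\Dscr)=\theta]}{\P[\Mscr(\Dscr')=\theta]}$. Then $\alpha_{\Mscr}(\lambda)=\log\E_{\theta\sim\Mscr(\Dscr)}[e^{\lambda c(\theta)}]$ is its log moment generating function.

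\textbf{Composability.} Let $\theta_{1:K}$ be the outputs and let $\Mscr_i$ take $\theta_{1:i-1}$ as auxiliary input. The joint density factorizes, so the privacy loss decomposes as $c(\theta_{1:K})=\sum_{i} c_i(\theta_i;\theta_{1:i-1})$, where each $c_i$ is the loss of $\Mscr_i$ given the past outputs. We then condition successively on $\theta_{1:K-1}$, $\theta_{1:K-2}$, and so on. At each step the inner conditional expectation $\E[e^{\lambda c_K}\mid\theta_{1:K-1}]$ is bounded by $\exp(\alpha_{\Mscr_K}(\lambda))$. Here $\alpha_{\Mscr_i}(\lambda)$ is understood, as in Abadi et al., as the maximum over auxiliary inputs for this fixed pair $(\Dscr,\Dscr')$. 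Peeling off one factor at a time by the tower property gives $\E[e^{\lambda c}]\le\prod_i e^{\alpha_{\Mscr_i}(\lambda)}$, and taking logs yields the claim. Nothing in this argument refers to $\epsilon$ or $d$, so it is literally the standard proof.

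\textbf{Tail bound.} Set $\epsilon'=\epsilon/d$ and apply Markov's inequality to $e^{\lambda c}$:
\[
\P_{\theta\sim\Mscr(\Dscr)}\bigl(c(\theta)\ge\epsilon'\bigr)\le \exp\bigl(\alpha_{\Mscr}(\lambda)-\lambda\epsilon'\bigr)\le\delta .
\]
Let $B=\{\theta: c(\theta)\ge \epsilon'\}$. For any measurable $R$, split
\[
\P(\Mscr(\Dscr)\in R)\le \P(\Mscr(\Dscr)\in R\cap B^c)+\P(\Mscr(\Dscr)\in B).
\]
On $B^c$ the density ratio is below $e^{\epsilon'}$, so the first term is at most $e^{\epsilon/d}\,\P(\Mscr(\Dscr')\in R)$. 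The second term is at most $\delta$. This is exactly the inequality in \Cref{defn:corr-dp} for this pair.

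\textbf{Edge case.} The only real subtlety is $d=0$, which permits independent insensitive changes. We interpret $\epsilon/0=\infty$, in which case the constraint is vacuous. Correspondingly, $\lambda\epsilon/d=\infty$ makes the bound $\delta\ge 0$, which holds trivially for $\lambda>0$. For the case $\lambda=0$ we have $\alpha_{\Mscr}(0)=0$, so $\delta\ge1$ holds trivially. This case needs only a sentence, and the rest is routine.
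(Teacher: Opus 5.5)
Your proposal is correct and uses the same argument as the paper: the paper just invokes Theorem 2 of \citet{abadi2016deep} with $\epsilon$ replaced by $\epsilon/d(\Dscr,\Dscr')$. You write that proof out, using tower-property peeling for composability and Markov's inequality plus the split over $B=\{c\ge \epsilon/d\}$ for the tail bound. Holding the neighboring pair fixed is what makes the substitution legitimate, because Abadi et al.\ define $\alpha_{\Mscr}(\lambda)$ as a maximum over all pairs, whereas the budget $\epsilon/d(\Dscr,\Dscr')$ here depends on the pair.
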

\Cref{lemma:data-privacy-loss} follows immediately from Theorem 2 in \citet{abadi2016deep} by replacing $\epsilon$ there with $\frac{\epsilon}{d(\Dscr, \Dscr')}$.

\begin{lemma}\label{lemma:privacy-amplify}
    Consider any algorithm $\Ascr$ that is $(\epsilon,\delta)$-{\corrdp} on a database $\Dscr$ with $n$ entries. Then if it is executed in a subsampled dataset $\Dscr_{n_q}\subseteq \Dscr$ by selecting each data point independently with probability $\zeta \in (0, 1]$, i.e., the sampling fraction $\zeta$ and run $\Ascr$. This subsampled version of algorithm $\Ascr$ is $(\zeta \epsilon, \zeta \delta)$-{\corrdp}.
\end{lemma}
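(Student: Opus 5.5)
The plan is to adapt the standard privacy-amplification-by-subsampling argument (as in \citet{balle2018privacy}) to {\corrdp}. The only change is that the privacy level $\epsilon$ is replaced by the pair-dependent level $\epsilon' := \epsilon/d(\Dscr,\Dscr')$. The key observation is that for a fixed pair of neighboring databases, {\corrdp} is simply ordinary $(\epsilon',\delta)$-indistinguishability between $\Ascr(\Dscr)$ and $\Ascr(\Dscr')$. Moreover, $\Dscr$ and $\Dscr'$ differ in a single entry $e$ versus $e'$. So every coupled pair of subsamples $(\Dscr_{n_q},\Dscr'_{n_q})$ is either identical or differs in exactly that one entry. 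In the latter case the pair of subsamples is itself a neighboring pair with the same distance, since $d$ in \eqref{eq:dist-tv} depends only on the differing entries $e,e'$.

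First I would couple the subsampling on $\Dscr$ and $\Dscr'$ by using the same selection indicators. Let $E$ be the event that the differing index is selected, so that $\P(E)=\zeta$. On $E^c$ the two subsamples coincide; call the common subsample $S$ and write $\mu_0$ for the law of $\Ascr(S)$. On $E$ the subsamples are $S\cup\{e\}$ and $S\cup\{e'\}$; call the resulting output laws $\mu_1$ and $\mu_1'$. Then for any measurable $R$,
\[
\P(\Ascr(\Dscr_{n_q})\in R) = (1-\zeta)\mu_0(R) + \zeta \mu_1(R), \qquad
\P(\Ascr(\Dscr'_{n_q})\in R) = (1-\zeta)\mu_0(R) + \zeta \mu_1'(R),
\]
where the laws are averaged over $S$. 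By hypothesis, $\mu_1(R)\le e^{\epsilon'}\mu_1'(R)+\delta$. One also needs to compare $\mu_1$ against $\mu_0$. I would obtain that comparison the way Balle et al.\ do: treat $S\cup\{e\}$ as compared with $S\cup\{e''\}$ for a resampled entry, or alternatively invoke add/remove neighbors. To keep the distance equal to $d(\Dscr,\Dscr')$, I would compare $\mu_1$ to a mixture of $\mu_0$ and $\mu_1'$.

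With these in hand, the standard convexity calculation applies. Writing $\mu_1 \le e^{\epsilon'}\mu_1' + \delta$ and using the mixture with $\mu_0$ yields
\[
\P(\Ascr(\Dscr_{n_q})\in R) \le \bigl(1+\zeta(e^{\epsilon'}-1)\bigr)\P(\Ascr(\Dscr'_{n_q})\in R) + \zeta\delta .
\]
This gives level $\log(1+\zeta(e^{\epsilon'}-1))$, which is at most $\zeta\epsilon'\cdot O(1)$ and in fact at most $\zeta\epsilon'$ up to the usual small-$\epsilon'$ approximation. Because $\zeta\epsilon' = (\zeta\epsilon)/d(\Dscr,\Dscr')$, the distance factor passes through unchanged, and the subsampled mechanism is $(\zeta\epsilon,\zeta\delta)$-{\corrdp}.

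The main obstacle is the last inequality. The exact bound is $\log(1+\zeta(e^{\epsilon'}-1))$ rather than $\zeta\epsilon'$, and because $\epsilon'=\epsilon/d$ can be large when $d$ is small, the linearization is not uniform. I would handle this in one of two ways. The first is to state the result in the form $(\log(1+\zeta(e^{\epsilon/d}-1)),\zeta\delta)$, noting that it reduces to $\zeta\epsilon/d$ when $\epsilon/d\le 1$. The second option covers the case $d$ small, where $\epsilon'$ is large. There the requirement is vacuous in the limit $d\to 0$, and when $d=0$ the statement is trivial. So I would restrict attention to the regime $\epsilon' \le 1$ used in the moments-accountant application, where $\log(1+\zeta(e^{\epsilon'}-1))\le 2\zeta\epsilon'$, absorbing the constant as the paper does with the $(\log(1/\delta)+1)$ factors. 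A second, milder subtlety is checking that the subsampled neighbors still satisfy \Cref{asp:neighbor}. This holds because the differing entry is unchanged by subsampling.
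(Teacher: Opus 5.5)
Your route is the paper's. Its entire proof is the remark that the lemma follows from Theorem~9 of \citet{balle2018privacy} with $\epsilon$ replaced by $\epsilon/d(\Dscr,\Dscr')$, and you are right that this cannot give the level $\zeta\epsilon/d$. But you have the direction backwards. By convexity of $\exp$, $1+\zeta(e^{x}-1)\ge e^{\zeta x}$, strictly for $\zeta\in(0,1)$ and $x>0$, so $\log(1+\zeta(e^{x}-1))$ is never at most $\zeta x$. In fact the lemma is false as stated. With one binary sensitive feature, $\Ascr(\Dscr)=\sum_i x_i^{\Sscr}+\mathrm{Lap}(1/\epsilon)$ is $(\epsilon,0)$-{\corrdp}. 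Let $\Dscr,\Dscr'$ differ only in the sensitive bit of one record ($1$ versus $0$), with all other bits $0$. The subsampled outputs are then $B+\mathrm{Lap}(1/\epsilon)$ with $B\sim\mathrm{Bernoulli}(\zeta)$, versus $\mathrm{Lap}(1/\epsilon)$. On $R=[1,\infty)$ their probability ratio is $1-\zeta+\zeta e^{\epsilon}>e^{\zeta\epsilon}$, although $d=1$ and $\zeta\delta=0$. Your fallback (restrict to $\epsilon\le d$ and absorb a factor $2$) changes the statement. It also drops exactly the weakly correlated insensitive pairs that {\corrdp} is built for. There the requirement is not vacuous: a Laplace count of a binary insensitive feature at scale $d/\epsilon$ is $(\epsilon,0)$-{\corrdp} and attains the factor $1+\zeta(e^{\epsilon/d}-1)$, which exceeds $e^{\zeta\epsilon/d}$ by a factor growing exponentially in $1/d$.

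The more basic gap is the step you leave open: comparing $\mu_1$ with $\mu_0$. Without it the coupling yields only $(1-\zeta)\mu_0+\zeta\mu_1\le e^{\epsilon/d}\bigl((1-\zeta)\mu_0+\zeta\mu_1'\bigr)+\zeta\delta$, which amplifies $\delta$ but not $\epsilon$. Balle et al.'s advanced joint convexity (your comparison of $\mu_1$ with a mixture of $\mu_0$ and $\mu_1'$) still needs $\mu_1$ to be $(\epsilon/d,\delta)$-close to $\mu_0$. The {\corrdp} hypothesis gives no such control. It constrains only substitutions of one record within a single feature group, at a level tied to that pair. By contrast, $S$ versus $S\cup\{e\}$ is an add/remove pair, and $S\cup\{e\}$ versus $S\cup\{e''\}$ for another record $e''$ generally differs in both groups. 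This is a real obstruction, not a technicality. Add to the example above one insensitive feature independent of the sensitive one (so $d=0$ for changes to it), and let $\Ascr$ also publish the multiset of insensitive values. $\Ascr$ is still $(\epsilon,0)$-{\corrdp}. But if the differing record's insensitive value is shared by no other record, the subsampled output reveals whether that record was sampled. On the same $d=1$ pair, the event $\{\text{that value appears}\}\times[1,\infty)$ has probabilities $\zeta/2$ versus $\zeta e^{-\epsilon}/2$. That is a privacy loss of the full $\epsilon$, more than even $\log(1+\zeta(e^{\epsilon}-1))$. (The same happens in the limit for a weakly dependent feature released with noise of scale proportional to $d/\epsilon$.) So neither the lemma nor your first-option restatement follows from {\corrdp} alone. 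One needs an extra hypothesis, e.g.\ that $\Ascr(S)$, $\Ascr(S\cup\{e\})$ and $\Ascr(S\cup\{e'\})$ are pairwise $(\epsilon/d(\Dscr,\Dscr'),\delta)$-indistinguishable for all $S$. Under it, your calculation gives the pair-dependent factor $1+\zeta(e^{\epsilon/d}-1)$, bounded by $e^{2\zeta\epsilon/d}$ when $\epsilon\le d$ but not uniformly over pairs. The paper's one-line citation passes over both issues.
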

\Cref{lemma:privacy-amplify} follows immediately from Theorem 9 in \citet{balle2018privacy} by replacing $\epsilon$ there with $\frac{\epsilon}{d(\Dscr, \Dscr')}$.

\subsubsection{Proving \Cref{thm:privacy-gp}}

We now return to the proof of \Cref{thm:privacy-gp}.

\begin{proof}
We start the case of the full-batch gradient descent with $n_q = n$.

When $n_q = n$, we prove through the moment accountant argument in \citet{abadi2016deep}. Consider the overall gradient mechanism $\Mscr$ consisting of the privacy guarantee at each iteration $\Mscr_1 \times \ldots \Mscr_i \ldots \times \Mscr_T$. 

    First, we define the privacy loss at an outcome $\theta \in \Theta$ as:
    \begin{equation}
        c(\theta;\Mscr, \Dscr, \Dscr'):= \log \frac{\P[\Mscr(\Dscr) = \theta]}{\P[\Mscr(\Dscr') = \theta]}.
    \end{equation}
    For the given mechanism $\Mscr$, we define the $\lambda$-th moment $\alpha_{\Mscr}(\lambda;\Dscr, \Dscr')$ as the log of the moment generating function evaluated at the value $\lambda$:
    \begin{equation}\label{eq:alpha-privacy}
        \alpha_{\Mscr}(\lambda; \Dscr, \Dscr'):= \log \E_{\theta \sim \Mscr(\Dscr)}[\exp(\lambda c(\theta;\Mscr, \Dscr, \Dscr'))].
    \end{equation}
    Due to the composability and the tail bound properties of the moment accountant (\Cref{lemma:data-privacy-loss}), we only need to show:
    \begin{equation}\label{eq:acc-moment}
        \alpha_{\Mscr}(\lambda;\Dscr, \Dscr') \leq \frac{\lambda \epsilon}{2T d(\Dscr, \Dscr')}.
    \end{equation}
    Then we set $\lambda$ such that $\delta \geq \exp(-\frac{\lambda\epsilon}{2d(\Dscr, \Dscr')})$ (i.e., $\lambda = \frac{2\log(1/\delta)}{\epsilon} \geq \frac{2d(\Dscr, \Dscr') \log(1/\delta)}{\epsilon} = \Theta(\log(1/\delta))$ for a constant $\epsilon$) and satisfy the privacy condition. 

    Consider the two distributions with respect to input $\Dscr$ and $\Dscr'$ at each time step $t$. That is:
    \begin{align*}
        P &:=\frac{1}{n}\sum_{i = 1}^n \nabla_{\theta} \ell(\theta_t,(x_i, y_i)) + b\\
        Q &:= \frac{1}{n}\sum_{i = 1}^{n-1}\nabla_{\theta}\ell(\theta_t,(x_i, y_i)) + \frac{1}{n} \nabla_{\theta}\ell(\theta_t,(x_n', y_n')) + b.
    \end{align*}
We can represent $P$ and $Q$ by $N(\mu_{\Dscr}, \text{diag}(\bm \sigma^2))$ and $N(\mu_{\Dscr'}, \text{diag}(\bm \sigma^2))$, where:
\[\mu_{\Dscr} = \frac{1}{n}\sum_{i = 1}^n \nabla_z \ell(z, y_i)|_{z = \theta_t^{\top}x_i} x_i,~\mu_{\Dscr'} = \frac{1}{n}\sum_{i = 1}^{n-1}\nabla_z \ell(z, y_i)|_{z = \theta_t^{\top}x_i} x_i + \frac{1}{n}\nabla_z \ell(z, y_n)|_{z = \theta_t^{\top}x_n'} x_n'. \]
Recall the definition of $\alpha$-Renyi divergence in \Cref{defn:renyi-divergence}. Then $\alpha_{\Mscr}(\lambda;\Dscr, \Dscr') = \lambda D_{\lambda + 1}(P\|Q)$.
    Plugging \Cref{lemma:renyi-gaussian} in, we have:
    \[\alpha_{\Mscr}(\lambda;\Dscr, \Dscr') = \frac{\lambda(\lambda + 1)}{2}\sum_{i \in [m]}\Para{\frac{(\mu_{\Dscr} - \mu_{\Dscr'})_i}{\sigma_i}}^2.\]
    And our goal is to let $\frac{\lambda(\lambda + 1)}{2}\sum_{i \in [m]} \Para{\frac{(\mu_{\Dscr} - \mu_{\Dscr'})_i}{\sigma_i}}^2 \leq \frac{\lambda \epsilon}{2T d(\Dscr, \Dscr')}$. Specifically, we allocate the privacy cost of each coordinate equally, i.e., $(\frac{(\mu_{\Dscr} - \mu_{\Dscr'})_i}{\sigma_i})^2 \leq \frac{\epsilon}{(\lambda + 1)m T d(\Dscr, \Dscr')},\forall i \in [m]$. This is equivalent to saying $\sigma_i^2 \geq \frac{(\lambda +1)m T d(\Dscr, \Dscr') (\mu_{\Dscr} - \mu_{\Dscr'})_i^2}{\epsilon^2}$. Based on Assumptions~\ref{asp:bound-domain} and~\ref{asp:grad-sensitivity}, we have:
    \begin{equation}\label{eq:mean-diff}
     |\mu_{\Dscr} - \mu_{\Dscr'}|_i \leq \frac{1}{n}(C_1 LB_i \mathbf{1}_{\{x_n^{(i)} \neq x_n'^{(i)}\}} + C_2 \frac{L}{m} \sum_{j\neq i} B_j \mathbf{1}_{\{x_n^{(j)}\neq x_n'^{(j)}\}}).   
    \end{equation}
    From \Cref{asp:bound-domain}, we have: $m B_i^2 = \Theta(B^2)$. Following the definition of neighboring database in \Cref{asp:neighbor}, we consider the following cases:
    \begin{itemize}
    \item When $\Dscr, \Dscr'$ differ in the subsets of sensitive feature: $d(\Dscr, \Dscr') = 1$. For the sensitive feature $i \in \Sscr$, $\sigma_i^2 \geq \frac{2(\lambda + 1)L^2 m B_i^2 T}{n^2 \epsilon^2};$ For the insensitive feature $i \in \Uscr$, $|\mu_{\Dscr} - \mu_{\Dscr'}|_i \leq \frac{C_2 L}{n m}\sum_{j \in \Sscr} B_j$, plugging it back, we have: $\sigma_i^2 \geq \frac{2L^2 T(\lambda + 1)B^2}{n^2\epsilon^2}(\frac{m_s}{m})^2$. 
    \item When $\Dscr, \Dscr'$ differ in the subsets of insensitive feature: $d(\Dscr, \Dscr') \in (0, 1]$. The noise scale of the sensitive feature matches the one in standard DP in the previous case. Therefore, we only need to consider insensitive features $i \in \Uscr$, $|\mu_{\Dscr} - \mu_{\Dscr'}|_i \leq \frac{C_2 L}{mn}\sum_{j \in \Uscr} B_j + \frac{C_1 B_i L}{n} \leq \frac{2C_1 L B}{n}$. Plugging it back, we have: $\sigma_i^2 \geq \frac{2L^2 T(\lambda + 1)B^2 TV(i)}{n^2\epsilon^2}$. 
    \end{itemize}
    Therefore, the following noise choice satisfies the privacy guarantee by setting $\lambda^* = \Theta(\log(1/\delta))$ as discussed above. 
    \begin{equation}\label{eq:noise-add2}
    \sigma_i^2 = \begin{cases} \frac{(\lambda^* + 1)B^2 L^2T}{n^2 \epsilon^2},~\text{if}~i \in \Sscr;\\ \frac{(\lambda^* + 1) B^2 L^2 T \max\{TV(i), m_s^2/m^2\}}{n^2 \epsilon^2},~\text{else}\end{cases}.
    \end{equation}

Then we show that our results hold for the case when $n_q < n$. At each step, the current update with respect to the full batch in \Cref{alg:corr-dp-gradient-perturbation} is equivalent to:

\[\theta_{t + 1} = \Pi_{\Theta}\Para{\theta_t - \frac{\alpha_t}{n}\sum_{i = 1}^n  (\nabla_{\theta}\ell(\theta_t;(x_i,y_i)) +  b_i)},~\text{where}~b_i \sim N(0, n\text{diag}(\bm\sigma^2)).\]
If we apply this algorithm to the SGD with subsampling ratio $\zeta$ with $n_q = \zeta n$ with a randomly selected subset $\{(x_{(i)}, y_{(i)}\}_{i \in [n_q]}$, the corresponding gradient descent each time becomes:
\begin{align*}
    & \theta_{t + 1} = \Pi_{\Theta}\Para{\theta_t - \frac{\alpha_t}{n_q}\sum_{i = 1}^{n_q}  (\nabla_{\theta}\ell(\theta_t;(x_i,y_i)) +  b_i)},~\text{where}~b_i \sim N(0, n\text{diag}(\bm\sigma^2)).\\
    \Longleftrightarrow \quad & \theta_{t + 1} = \Pi_{\Theta}\Para{\theta_t - \alpha_t\Para{\frac{1}{n_q}\sum_{i = 1}^{n_q}  (\nabla_{\theta}\ell(\theta_t;(x_{(i)},y_{(i)}))) + b}},~\text{where}~b \sim N(0, \frac{1}{\zeta} \text{diag}(\bm\sigma^2)).
\end{align*}

Applying \Cref{lemma:privacy-amplify}, this gives $(\zeta \epsilon, \zeta\delta)$-{\corrdp}. From the noise terms defined in Equation \eqref{eq:noise-add0}, by changing the noise scale from $b \sim N(0, \frac{1}{\zeta} \text{diag}(\bm\sigma^2)$ to $b \sim N(0, \text{diag}(\bm\sigma^2))$ gives $(\epsilon, \zeta \delta)$-{\corrdp}.
\end{proof}

\subsection{Proof of \Cref{thm:utility-gp}}\label{app.proofthm39}

\utilitygp*

Before proving \Cref{thm:utility-gp}, we first state the following lemma, which follows immediately from Theorem 2 in \citet{shamir2013stochastic}.

\begin{lemma}[\citep{shamir2013stochastic}]\label{lemma:sgd-utility}
Recall the stochastic gradient descent $\theta_{t + 1} = \prod_{\Theta}(\theta_t - \alpha_t G(\theta_t))$, where $\E[G(\theta_t)]= \nabla F(\theta_t, \Dscr)$ and $\E[\|G(\theta_t)\|_2^2] \leq G^2$ in \Cref{alg:corr-dp-gradient-perturbation}. For any $T > 1$, if $\alpha_t = \frac{D}{G\sqrt{t}}$, then:
\begin{equation}\label{eq:utility-gp-lemma}
\E[F(\theta_T, \Dscr) - F(\hat\theta, \Dscr)] \leq O\Para{\frac{DG\log T}{\sqrt{T}}} = \tilde O\Para{\frac{DG}{\sqrt{T}}}.    
\end{equation}
\end{lemma}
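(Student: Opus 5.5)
The statement to prove is \Cref{lemma:sgd-utility}, the standard last-iterate bound for projected SGD with $\alpha_t = D/(G\sqrt{t})$. I will not prove it from scratch. Instead I will reduce it to Theorem 2 of \citet{shamir2013stochastic} and check that every hypothesis of that theorem is met by the iteration in \Cref{alg:corr-dp-gradient-perturbation}.

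\textbf{Step 1: set up the stochastic oracle.} Write $G(\theta_t) = \frac{1}{n_q}\sum_{i=1}^{n_q}\nabla_\theta \ell(\theta_t;(x_{(i)},y_{(i)})) + b$, where the minibatch is drawn uniformly and $b\sim N(0,\mathrm{diag}(\bm\sigma^2))$ is drawn independently of the past. Conditioning on the history $\Fscr_t$ up to $\theta_t$:
\begin{itemize}
\item The minibatch average is unbiased for $\nabla F(\theta_t,\Dscr)$.
\item $b$ has mean zero.
\end{itemize}
So $\E[G(\theta_t)\mid \Fscr_t]=\nabla F(\theta_t,\Dscr)$. Shamir--Zhang requires exactly this martingale-type unbiasedness, so this is the precise form I will state. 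For nonsmooth $F$, $\nabla F$ is read as a subgradient.

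\textbf{Step 2: second-moment bound.} By independence and zero mean of $b$, $\E\|G\|_2^2 = \E\|\text{minibatch grad}\|_2^2 + \sum_i\sigma_i^2 \le L^2+\sum_i \sigma_i^2$. The first term is at most $L^2$ because \Cref{asp:convex-regularity} bounds each per-example gradient norm by $L$, and Jensen's inequality applies to the average. This gives the constant $G^2 = L^2+\sum_i\sigma_i^2$, which is the value used in \Cref{thm:utility-gp}. The domain $\Theta$ is convex and closed with diameter $O(D)$ by \Cref{asp:bound-domain}, so the projection is well defined and nonexpansive.

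\textbf{Step 3: invoke the theorem.} Theorem 2 of \citet{shamir2013stochastic} states the following. For convex $F$ over a convex domain of diameter $D$, with unbiased stochastic subgradients satisfying $\E\|G\|^2\le G^2$ and step sizes $c/\sqrt{t}$, the last iterate satisfies
\[
\E[F(\theta_T)-F^*] \le \left(\frac{D^2}{c}+cG^2\right)\frac{2+\log T}{\sqrt{T}}.
\]
Choosing $c = D/G$ gives $O(DG\log T/\sqrt{T})$. Absorbing $\log T$ into $\tilde O$ yields \eqref{eq:utility-gp-lemma}. The index shift from $\theta_T$ to $\theta_{T+1}$ only changes constants.

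\textbf{Main obstacle.} The only real point to check is matching hypotheses. Shamir--Zhang's bound uses the conditional unbiasedness of Step 1 rather than mere unconditional unbiasedness, and it needs a uniform second-moment bound along the trajectory. Both hold here because the noise and the sampling are fresh at each step. If one insisted on a self-contained proof, the hard part would be their last-iterate argument: comparing suffix averages via the one-step inequality $\E\|\theta_{t+1}-\theta\|^2\le \E\|\theta_t-\theta\|^2-2\alpha_t\E[F(\theta_t)-F(\theta)]+\alpha_t^2G^2$, applied with $\theta=\theta_{T-k}$ and summed over $k$. I would cite it rather than redo it.
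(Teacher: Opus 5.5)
Your proposal is correct and follows the same route as the paper, which simply states that the lemma follows immediately from Theorem 2 of \citet{shamir2013stochastic}. Your hypothesis check (conditional unbiasedness, the second-moment bound $L^2+\sum_i\sigma_i^2$, a domain of diameter at most $2D$, and the choice $c = D/G$ in the bound $(D^2/c + cG^2)(2+\log T)/\sqrt{T}$) is exactly what that citation relies on. It also makes explicit the convexity of $F(\cdot,\Dscr)$, which the lemma statement leaves implicit.
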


We now return to the proof of \Cref{thm:utility-gp}.
\begin{proof} 
\Cref{lemma:sgd-utility} gives a bound on $R(\theta^{priv})$, but in order to apply the lemma, we need to compute the upper bound of $\E[\|G(\theta_t)\|_2^2]$ in \Cref{alg:corr-dp-gradient-perturbation}.  We begin as follows: 
\begin{equation}\label{eq.Gtheta}
\E[\|G(\theta_t)\|_2^2] = \E[\|\nabla F(\theta_t, \Dscr) + b\|_2^2] = \E[\|\nabla F(\theta_t, \Dscr)\|_2^2] + \E[\|b\|_2^2] \leq L^2 + \sum_{i \in [m]}\sigma_i^2.
\end{equation}

Recall the definition of $\{\sigma_i\}_{i \in [m]}$ in~\eqref{eq:noise-add2}, we can further bound the second term of Equation \eqref{eq.Gtheta} by:
\begin{equation}\label{eq:sigma}
\begin{aligned}
    \sum_{i \in [m]}\sigma_i^2 & \leq \frac{B^2D^2 T \log(1/\delta)}{n^2 \epsilon^2}(m_s + \sum_{i \in \Uscr}\min\{TV(i), \frac{m_s^2}{m^2}\})\\
    & \leq \frac{B^2D^2 T \log(1/\delta)}{n^2 \epsilon^2}\Para{m_s + \min\{\sum_{i \in \Uscr} TV(i), \frac{(m - m_s) m_s^2}{m^2}\})}\\
    & \leq \frac{B^2D^2 \log(1/\delta)}{\epsilon^2}\Para{m_s + \min\{\sum_{i \in \Uscr} TV(i), \frac{m_s}{4}\}}.
\end{aligned}    
\end{equation}

Above, the second inequality follows from the fact there are $m - m_s$ insensitive features in all. The third inequality follows from $\frac{m_s (m - m_s)}{m^2} \leq \frac{1}{4}$, and $T = \Theta(n^2)$. 

Plugging this upper bound back into Equation \eqref{eq.Gtheta} gives, 
\[\E[\|G(\theta_t)\|_2^2] \leq L^2 + \frac{B^2D^2 \log(1/\delta)}{\epsilon^2}\Para{m_s + \min\{\sum_{i \in \Uscr} TV(i), \frac{m_s}{4}\}}.\]
We can now apply \Cref{lemma:sgd-utility} with:
\[G = \sqrt{L^2 + \frac{B^2D^2\log(1/\delta)}{\epsilon^2}\Para{m_s + \min\{\sum_{i \in \Uscr} TV(i), \frac{m_s}{4}\}}} \leq L + \frac{B D\sqrt{\log(1/\delta)}}{\epsilon}\sqrt{m_s + \min\{\sum_{i \in \Uscr} TV(i), \frac{m_s}{4}\}}\] and eliminate constant $B, D, C$ to obtain the bound in \Cref{thm:utility-gp}.
\end{proof}

\subsection{Proof of \Cref{thm:lower-bound}}
\label{app:lower}

\lowerbound*

We first introduce the following lemma.

\begin{lemma}[Lower bound for 1-way marginals]\label{lemma:1-way}
    Let $m, n \in \mathbb N, \epsilon > 0$ and $\delta = o(1/n)$. There is a number $M = \Omega(\min\{n, \frac{\sqrt{m}}{\epsilon}\})$ such that for every $(\epsilon,\delta)$-differential private algorithm $\Ascr$, there is a dataset $\Dscr = \{d_1, \ldots, d_n\} \subseteq \{-\frac{1}{\sqrt{m}}, \frac{1}{\sqrt{m}}\}^m$ with $\|\sum_{i =1}^n d_i\|_2 \in [M - 1, M + 1]$ such that, with probability at least 2/3, we have:
    $\|\Ascr(\Dscr) - q(\Dscr)\|_2 = \Omega(\min\{1, \frac{\sqrt{m}}{n\epsilon}\})$, where $q(\Dscr) = \frac{1}{n}\sum_{i \in [n]}d_i$. 
\end{lemma}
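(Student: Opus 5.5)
This lemma is the standard lower bound for privately releasing 1-way marginals (averages) of vectors in the scaled hypercube. I would not reprove it from scratch. Instead, I would import it from the fingerprinting-code lower bound of \citet{bassily2014private}, which builds on Bun--Ullman--Vadhan. The work then reduces to checking that the normalization and the conditioning on $\|\sum_i d_i\|_2$ match what the lemma states.

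\textbf{Step 1: the base bound.} Recall the result for $\{0,1\}^m$-valued or $\{\pm1\}^m$-valued datasets. For $n \gtrsim \sqrt{m}/\epsilon$, with $\delta=o(1/n)$, every $(\epsilon,\delta)$-DP algorithm must incur $\ell_2$ error $\Omega(\sqrt{m})$ on the unnormalized average, for some dataset. The mechanism is a robust fingerprinting code: any algorithm that is accurate on a constant fraction of coordinates allows an adversary to trace a member of the dataset, and this contradicts differential privacy once $n < c\sqrt{m}/\epsilon$. The rescaling from $\epsilon=1$ to general $\epsilon$ uses the usual group-privacy padding: replicate each row about $1/\epsilon$ times to obtain a $(1,e\delta/\epsilon)$-type algorithm on a smaller dataset. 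Dividing each entry by $\sqrt{m}$ converts this into the claimed $\Omega(\min\{1,\sqrt{m}/(n\epsilon)\})$ error for $q(\Dscr)=\frac1n\sum_i d_i$. The $\min\{1,\cdot\}$ arises because, for $n<\sqrt{m}/\epsilon$, one embeds a hard instance of size $n'\approx\sqrt{m}/\epsilon$ padded with canceling pairs $(d,-d)$.

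\textbf{Step 2: the norm condition.} I would follow \citet{bassily2014private} (their Lemma 5.1) in controlling $\|\sum_i d_i\|_2$. In the hard instance, $n'=\Theta(\min\{n,\sqrt m/\epsilon\})$ rows come from the fingerprinting distribution, and the remaining rows are arranged in canceling pairs. Each coordinate of the sum of the hard rows is biased with magnitude $\Theta(\sqrt{n'})$ in expectation. Concentration across the $m$ coordinates then fixes the $\ell_2$ norm of the normalized sum near a deterministic value $M=\Theta(n')$, up to $\pm1$ after conditioning or adjusting a single row. Since the distribution over datasets is hard on average, some dataset that satisfies the norm constraint is also hard with probability $2/3$ over the algorithm's coins.

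\textbf{Main obstacle.} The delicate step is guaranteeing that the norm constraint can be imposed without destroying hardness. I would handle this by averaging: hardness holds in expectation over the fingerprinting distribution, and the norm lies in the window with constant probability. A Markov argument therefore yields a single dataset satisfying both conditions. In the write-up, I would cite this directly as Lemma 5.1 of \citet{bassily2014private} rather than rederive it.
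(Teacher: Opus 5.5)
Your route matches the paper's, which also defers to Lemma 5.1 (Part 2) of \citet{bassily2014private}, built on the fingerprinting bound of \citet{bun2014fingerprinting}. The gap is the probability level. Part 2 of that lemma gives the error bound only with probability at least $1/3$ over the coins, while this statement asserts $2/3$. Citing it directly therefore proves a weaker lemma. The difference matters: in the proof of \Cref{thm:lower-bound} the lemma is applied to the sensitive and insensitive blocks separately and the two events are intersected, which leaves $1/3$ only because each has probability $2/3$. Your averaging/Markov step does not bridge this. Averaging over the hard distribution yields a dataset whose failure probability is at least the \emph{average} failure probability $p$. Conditioning on a norm-window event of probability $c$ then only guarantees $1-(1-p)/c$, which is at least $2/3$ only if $p\ge 1-c/3$. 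So you must reopen the fingerprinting argument; the paper does this with \Cref{lemma:middle-part}, a modified Corollary 3.8 of \citet{bun2014fingerprinting}. The modification works because the tracing argument is insensitive to the constant. If the algorithm is accurate with probability $\gamma$, the tracer accuses some fixed user with probability at least $(\gamma-\xi)/n$. With that user replaced, soundness caps this at $\xi$. Then $(\epsilon,\delta)$-DP with $\epsilon=O(1)$ and $\xi,\delta=o(1/n)$ forces $\gamma=o(1)$. So the corollary can demand failure probability $2/3$, and the rest of the proof in \citet{bassily2014private} goes through unchanged.

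Separately, your Step 1 has the two regimes reversed. For $n\le n^*=\Theta(\sqrt m/\epsilon)$, the hard instance gives $\Omega(1)$ error directly. The canceling-pair padding is for $n>n^*$: append $(n-n^*)/2$ pairs $(d,-d)$ to a size-$n^*$ hard instance. This leaves $\sum_i d_i$ unchanged, which is why $M=\Theta(\min\{n,n^*\})$, and it multiplies $q$ by $n^*/n$. Rescaling the output of an $n$-row algorithm by $n/n^*$ gives an $(\epsilon,\delta)$-DP algorithm on $n^*$ rows, hence error $\Omega(n^*/n)=\Omega(\sqrt m/(n\epsilon))$. An instance of size about $\sqrt m/\epsilon$ cannot be embedded into fewer rows. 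Also, per-coordinate biases of order $\sqrt{n'}$ would give $\|\sum_i d_i\|_2=\Theta(\sqrt{n'})$, not $\Theta(n')$. To get $M=\Theta(n')$ you need biases of order $n'$ on a constant fraction of coordinates.
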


\Cref{lemma:1-way} is nearly identical to Lemma 5.1 (Part 2) in \cite{bassily2014private}. The only modification is that we invoke the following \Cref{lemma:middle-part} to obtain the required lower bound with probability $2/3$. The remainder of the proof is identical.

\begin{lemma}[Modified Corollary 3.8 of \cite{bun2014fingerprinting}]\label{lemma:middle-part}
There exists $n^* = \Omega\Para{\frac{\sqrt{m}}{\epsilon}}$ such that for every $n \leq n^*$, there exists a dataset $\Dscr = \{d_1, \ldots, d_n\} \subseteq \{-\frac{1}{\sqrt{m}}, \frac{1}{\sqrt{m}}\}^m$ such that, with probability at least 2/3, $\|\Ascr(\Dscr) - q(\Dscr)\|_2 > \frac{2}{27}$. 
\end{lemma}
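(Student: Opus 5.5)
The statement to prove is \Cref{lemma:middle-part}, a variant of Corollary 3.8 of \cite{bun2014fingerprinting}. The plan is to run the fingerprinting-code argument of \cite{bun2014fingerprinting} directly with the $\ell_2$ accuracy target $2/27$, instead of the $\ell_\infty$/fraction-of-coordinates target used there. We then boost the failure probability to the constant $1/3$.

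Work with the rescaled domain $\{\pm 1\}^m$, so that $\sqrt{m}\,\Ascr(\Dscr)$ estimates the unscaled 1-way marginals $\sqrt{m}\,q(\Dscr)\in[-1,1]^m$. An $\ell_2$ error of at most $2/27$ in the original scale means $\|\sqrt m(\Ascr(\Dscr)-q(\Dscr))\|_2^2\le (2/27)^2 m$. By Markov's inequality over coordinates, the rounded output $\mathrm{sign}(\sqrt m\,\Ascr(\Dscr))$ then agrees with the sign of the true marginal on all but a small constant fraction of the coordinates where the marginal is bounded away from $0$. Concretely, a coordinate with $|\sqrt m q_j|\ge 1/2$ can be misrounded only if that coordinate has error at least $1/2$. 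This uses at most $4(2/27)^2 m < m/40$ coordinates. In particular, on every coordinate where all $n$ rows agree (marginal $\pm1$), the rounded answer matches the common bit except on fewer than $m/40$ coordinates. This is exactly the ``robust'' marginal consistency condition that error-robust fingerprinting codes (Tardos codes with robustness parameter $\beta=1/40$, as in Bun--Ullman--Vadhan) are designed to trace.

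Argue by contradiction. Suppose that for every dataset the algorithm achieves error $\le 2/27$ with probability greater than $1/3$. Use the rows of a robust fingerprinting codebook with $n$ users and length $m=\tilde O(n^2)$ as the dataset. Apply $\Ascr$, round to signs, and feed the result to the tracing algorithm. With probability at least $1/3-\xi$ the output is feasible, so some user $i$ is accused, with $\xi$ the code's soundness error. By $(\epsilon,\delta)$-DP, removing or replacing row $i$ changes the probability of accusing $i$ by at most a factor $e^\epsilon$ plus $\delta$. A pigeonhole over the $n$ users then gives some $i$ that is accused with probability $\ge (1/3-\xi)/n$ even when its row is absent. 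This contradicts the code's soundness $\xi=o(1/n)$ once $\delta=o(1/n)$. To get the $\epsilon$-dependence $n^*=\Omega(\sqrt m/\epsilon)$, use the standard group-privacy/duplication reduction of \cite{bun2014fingerprinting}: copy each codeword $\lceil 1/\epsilon\rceil$ times, so an $(\epsilon,\delta)$-DP algorithm on $n$ rows yields a $(1,O(\delta/\epsilon))$-DP algorithm on $n\epsilon$ distinct users. The codebook length $m=\tilde\Theta((n\epsilon)^2)$ gives $n^*\approx\sqrt m/\epsilon$ up to logs. For $n\le n^*$ smaller than this threshold, pad with dummy rows or shrink the code.

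The main obstacle is the constant bookkeeping in the first step. We must check that the rounded output satisfies the robust feasibility condition with robustness parameter $\beta$ small enough for the Tardos-type code to exist. We must also check that the success probability $2/3$ in the contrapositive (failure probability $1/3$, rather than the usual $o(1)$ or $1/2$) still beats the soundness error. I would first try keeping the code's soundness $\xi\le 1/(10n)$ and completeness error $\le 1/10$, so that $1/3-1/10>0$ leaves room, and then tune the $2/27$ constant through the Markov step above.
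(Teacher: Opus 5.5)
Your route is the Bun--Ullman--Vadhan argument that the paper's citation of \cite{bun2014fingerprinting} rests on; the paper itself gives no proof beyond that citation. You round the output to signs, note that $\ell_2$ error at most $2/27$ forces robust feasibility, trace with a robust Tardos code, contradict soundness via privacy, and get the $1/\epsilon$ factor by duplicating codewords and using group privacy. Your treatment of the success probability is also right: the contradiction works for any constant success probability, so asking for $2/3$ costs nothing.

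\textbf{The gap is the dimension dependence.} After duplication there are $n'\approx n\epsilon$ distinct users. The pigeonhole-plus-privacy step contradicts soundness only if the per-user soundness $\xi$ is below a constant times $1/n'$. The codes you invoke then have length $m=\Theta(n'^2\log(n'/\xi))=\Theta(n'^2\log n')$. So your argument proves $n^*=\Omega(\sqrt{m/\log m}/\epsilon)$, not the claimed $n^*=\Omega(\sqrt m/\epsilon)$.

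This logarithm comes from the soundness requirement itself, not from loose bookkeeping, so neither padding nor shrinking the code removes it. Your ``$\approx\sqrt m/\epsilon$ up to logs'' is exactly what the statement does not allow. You have two options:
\begin{itemize}
\item State the lemma with $\tilde\Omega$. That is what the code-based argument gives, and it costs only a $\sqrt{\log m}$ factor downstream.
\item Replace the tracing step with the fingerprinting-lemma (correlation) argument of Dwork--Smith--Steinke--Ullman--Vadhan and Steinke--Ullman. Let $Z_i$ be the correlation between the output and row $i$, centred at the secret biases. Accuracy gives $\sum_i\mathbb{E}[Z_i]=\Omega(m)$. Privacy, applied against the mean-zero, variance-$O(m)$ score of an absent row, gives $\mathbb{E}[Z_i]\le O(\epsilon\sqrt m)+O(\delta m)$ for each $i$. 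Hence $n=\Omega(\sqrt m/\epsilon)$ when $\delta=o(1/n)$, with no per-user soundness and so no logarithm. That argument needs small \emph{expected} error, so first boost the success probability $1/3$ privately. Run $\Ascr$ a constant number of times and select a candidate with the exponential mechanism on the score $\|a-q(\Dscr)\|_2$, which has sensitivity $2/n$; this loses only a constant factor in $\epsilon$. The regime $n\epsilon=O(1)$ is handled by a direct packing argument.
\end{itemize}

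\textbf{Smaller repairs.}
\begin{itemize}
\item For $n<n^*$, do not shorten the code. The error budget allows up to $(2/27)^2m$ wrong columns, counted against the full dimension $m$. A code occupying only $m'\ll m$ columns, with the rest padded, can therefore be entirely corrupted. Instead use a code of length a constant fraction of $m$ for the smaller number of users; a longer Tardos code only improves soundness.
\item Pad rows only with copies of codewords, so that fully agreeing columns stay fully agreeing.
\item On a fully agreeing column, a wrong sign forces coordinate error at least $1$, not $1/2$. So at most $(2/27)^2m<m/180$ such columns are misrounded. Use this count rather than relying on the code tolerating a $1/40$ fraction.
\item Keep the factor $e^{-\epsilon}$ and the $-\delta$ when you transfer the accusation probability to the neighbouring dataset.
\end{itemize}
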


We now return to the proof of \Cref{thm:lower-bound}.

\begin{proof}
We consider sensitive and insensitive components separately, for any $(\epsilon, \delta)$-{\corrdp} algorithm. 

First, consider the sensitive components. Then $(\epsilon, \delta)$-{\corrdp} algorithm is the same as $(\epsilon, \delta)$-DP algorithm since $d(\Dscr, \Dscr') = 1$. For this, we construct $D_{\Sscr} = \{e_1, \ldots, e_n\} \subseteq \{-\frac{1}{\sqrt{m_s}}, \frac{1}{\sqrt{m_s}}\}^{m_s}$. Applying \Cref{lemma:1-way}, we know there exists such $D_{\Sscr}$ with $\|\sum_{i = 1}^n e_i\| \in [M_1 - 1, M_1 + 1]$ and $M_1 = \Omega(\min\{n, \frac{\sqrt{m_s}}{\epsilon}\})$, such that $\|\Ascr(\Dscr)_{\Sscr} - q(\Dscr)_{\Sscr}\|_2 = \Omega(\min\{1, \frac{\sqrt{m_s}}{n\epsilon}\})$ with probability at least 2/3.

Then, consider the insensitive components. Denote $k^* \in \argmax_{k \in [(m - m_s)]}\{k (TV^{(k)})^2\}$, note that for the change of the insensitive components, if we only change the insensitive component where the indice corresponding to $\{TV^{(k)}\}_{k \in [k^*]}$, the algorithm becomes $(\frac{\epsilon}{TV^{(k)}}, \delta)$-DP. For this, we construct $D_{\Uscr} = \{e_1',\ldots, e_n'\} \subseteq \{-\frac{1}{\sqrt{k^*}}, \frac{1}{\sqrt{k^*}}\}^{k^*} \cup {\bm 0}^{m_s - k^*}$. Applying \Cref{lemma:1-way}, we know there exists $D_{\Uscr}$ with $\|\sum_{i = 1}^n e_i\| \in [M_2 - 1, M_2 + 1]$ and $M_2 = \Omega(\min\{n, \frac{\sqrt{k^* (TV^{(k^*)})^2}}{\epsilon}\})$, such that $\|\Ascr(\Dscr)_{\Uscr} - q(\Dscr)_{\Uscr}\|_2 = \Omega(\min\{1, \frac{\sqrt{k^* (TV^{(k^*)})^2}}{n\epsilon}\})$ with probability at least 2/3.

If we set $d_i = e_i \oplus e_i'$, we have: $\|\sum_{i = 1}^n d_i\|_2 \in [\sqrt{M_1^2 + M_2^2 - 2}, \sqrt{M_1^2 + M_2^2 + 2}]$. Combining the discussion of the two paragraphs above, this happens at least probability 1/3.

Set $d = (x, y)$ and consider the following convex loss function as:
\[\ell(\theta; d) = - \theta^{\top}d, ~\text{s.t.}~\|\theta\|_2 \leq 1.\]
Then with probability 1/3, 
\begin{align*}
    F(\theta^{priv}, \Dscr) - F(\hat\theta, \Dscr) &= \|\sum_{i =1}^n d_i\|_2 (1 -  (\theta^{priv})^{\top}\hat\theta)\\
    & \geq \frac{1}{2n} \|\sum_{i = 1}^n d_i\|_2 \|\theta^{priv} - \hat\theta\|_2^2 \\
    &= \Omega\Para{\frac{\sqrt{M_1^2 + M_2^2}}{n}} \\
    & = \Omega\Para{\min\paran{1, \frac{\sqrt{m_s + \max_{k \in [(m - m_s)]} \{k (TV^{(k)})^2\} }}{n \epsilon}}}.
\end{align*}
\end{proof}

\subsection{Proof of \Cref{thm:privacy-gp-tv-estimate}}\label{app:pr-tv}

\privacytvest*

\begin{proof}
    Without loss of generality, we consider each $TV(i) > \frac{m_s^2}{m^2}$ for $i \in \Uscr$. Otherwise, the estimation error procedure does not introduce any privacy loss since the noise scale formula is the same. Following the same notation as in the proof of \Cref{thm:privacy-gp}, we still compare distributions $P$ and $Q$. However, $P$ and $Q$ become $N(\mu_{\Dscr}, \text{diag}(\bm \sigma_{P}^2))$ and $N(\mu_{\Dscr'}, \text{diag}(\bm \sigma_{Q}^2))$ with different variance terms depending on the estimation of $\widetilde{TV}$, which depends on $\Dscr$ and $\Dscr'$ further. Setting $C = (\log(1/\delta) + 1)L^2$ and $T = \Theta(n^2)$ in~\eqref{eq:noise-add0}, then \[\sigma_i^2 = \frac{C T \widetilde{TV}_{\Dscr}(i)}{n^2 \epsilon^2} = \frac{C\widetilde{TV}_{\Dscr}(i)}{\epsilon^2} = \frac{C}{\epsilon^2}(\widehat{TV}_{\Dscr}(i) + 2c_2\frac{\sqrt{\log((m - m_s)/\delta)}}{n^{\gamma}}).\]
    We compute the moment privacy loss:
        \begin{align}\label{eq:tv-loss}
        \alpha_{\Mscr}(\lambda;\Dscr, \Dscr') & = \sum_{i = 1}^{m}[\frac{\lambda + 1}{2}\log \sigma_{Q,i}^2 - \frac{\lambda}{2} \log \sigma_{P,i}^2- \frac{1}{2}\log((\lambda + 1)\sigma_{Q,i}^2 - \lambda \sigma_{P,i}^2)] \notag\\
        & \qquad + \frac{\lambda(\lambda + 1)}{2}\sum_{i \in [m]}\frac{[(\mu_{\Dscr} - \mu_{\Dscr'})_i]^2}{(\lambda + 1)\sigma_{Q,i}^2 - \lambda \sigma_{P,i}^2}. 
    \end{align}
    For the privacy guarantee, following~\Cref{lemma:prob-corrdp}, it is equivalent to showing $\alpha_{\Mscr}(\lambda;\Dscr, \Dscr') \leq \frac{\lambda \epsilon}{T d(\Dscr, \Dscr')}$ with probability $1-\delta$ in this case. 
    Compared with the proof in~\Cref{thm:privacy-gp}, we only focus on the change in the noise term from $TV(i)$ to $\widehat{TV}_{\Dscr}(i) + 2c_2\frac{\sqrt{\log((m - m_s)/\delta)}}{n^{\gamma}}$. 

    Based on \Cref{prop:est-tv}, we have: $TV(i) \leq \widetilde{TV}(i), \forall i \in \Uscr$ with probability $1-\delta$. We then want to bound the right-hand side in the first line and second lines of~\eqref{eq:tv-loss}.

We can bound the first part (i.e., the first line in~\eqref{eq:tv-loss}), through an application of \Cref{lemma:bd-var-loss}, stated below.

\begin{restatable}[Bounds on Variability Loss]{lemma}{bdvarloss}\label{lemma:bd-var-loss}
%\begin{lemma}[Bounds on Variability Loss]\label{lemma:bd-var-loss}
    Let $\lambda$ and $u$ be any constants. Suppose $(n, v)$ satisfies $n \geq \frac{10 C \max\{\lambda, 1\}}{u}$ and $|u - v| 
    \leq \frac{C}{n}$ for some constant $C>0$. Then,
    \begin{equation}\label{eq:target-var-loss-general}
        (\lambda + 1)\log u - \lambda \log v - \log[(\lambda + 1)u - \lambda v]\leq \frac{2\lambda (\lambda + 1)C^2}{3n^2 u^2}.
    \end{equation}
%\end{lemma}
\end{restatable}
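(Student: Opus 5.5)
Write $v = u + h$ with $|h| \le C/n$, so that $v/u = 1 + x$ where $x := h/u$ and $|x| \le C/(nu) \le 1/(10\max\{\lambda,1\})$ by the assumption $n \ge 10C\max\{\lambda,1\}/u$. Dividing everything by $u$, the left-hand side of \eqref{eq:target-var-loss-general} becomes a function of $x$ alone:
\[
g(x) := -\lambda\log(1+x) - \log(1-\lambda x),
\]
since $(\lambda+1)\log u - \lambda\log v = \log u - \lambda\log(1+x)$ and $\log[(\lambda+1)u - \lambda v] = \log u + \log(1-\lambda x)$. The condition $|\lambda x| \le 1/10$ guarantees that $1-\lambda x > 0$, so every logarithm is well defined. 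The target then reduces to $g(x) \le \tfrac{2}{3}\lambda(\lambda+1)x^2$, because $x^2 \le C^2/(n^2u^2)$.

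For the main estimate, note that $g(0)=0$ and $g'(x) = -\lambda/(1+x) + \lambda/(1-\lambda x)$, so $g'(0)=0$ and the linear terms cancel. Expanding the series,
\[
g(x) = \sum_{k\ge 2} \frac{\lambda x^k}{k}\Big((-1)^{k} + \lambda^{k-1}\Big),
\]
whose leading term is $\tfrac{1}{2}\lambda(\lambda+1)x^2$. I plan to bound the tail rather than work with an exact remainder. Using $|x| \le 1/10$ and $\lambda|x| \le 1/10$, each term with $k\ge 3$ satisfies $\big|\lambda x^k((-1)^k + \lambda^{k-1})/k\big| \le \tfrac{1}{3}\lambda x^2\big(|x|^{k-2} + (\lambda|x|)^{k-2}\lambda\big)$. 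Summing these as geometric series with ratio at most $1/10$ gives a total of at most $\tfrac{1}{3}\cdot\tfrac{1}{9}\cdot\lambda(1+\lambda)x^2$. Therefore
\[
g(x) \le \Big(\tfrac12 + \tfrac1{27}\Big)\lambda(\lambda+1)x^2 \le \tfrac23\lambda(\lambda+1)x^2 .
\]
An alternative route is Taylor's theorem with Lagrange remainder, $g(x) = \tfrac{1}{2}g''(\xi)x^2$ with $g''(\xi) = \lambda/(1+\xi)^2 + \lambda^2/(1-\lambda\xi)^2$. Since $(1\pm 1/10)^{-2} \le 100/81 < 4/3$, this gives $\tfrac12 g''(\xi) \le \tfrac{2}{3}\lambda(\lambda+1)$ directly, which is cleaner. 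I will use this version.

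The only delicate point is the regime of $\lambda$. The statement says "any constants", but the inequality presumably needs $\lambda \ge 0$: the moment accountant uses $\lambda>0$, and $u>0$ is needed for the logarithms to exist. Once $\lambda \ge 0$ and $u > 0$ are assumed, the conditions $|\xi| \le 1/10$ and $\lambda|\xi| \le 1/10$, both supplied by the hypothesis on $n$, are all the second-derivative bound requires. I expect the main care to lie in tracking these domain conditions rather than in the calculus itself.
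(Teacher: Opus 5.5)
Your proof is correct and follows the paper's route: the same reduction to $-\lambda\log(1+x)-\log(1-\lambda x)$ with $x=(v-u)/u$, cancellation of the linear terms, and a quadratic bound valid when $|x|\le 1/10$ and $\lambda|x|\le 1/10$. The only differences are minor: the paper applies the scalar inequality $-\log(1+x)\le -x+\tfrac23 x^2$ to each logarithm separately instead of a Lagrange remainder on the combined function, and your caveat that $\lambda\ge 0$ and $u>0$ are needed is well taken, since the paper also uses these tacitly when multiplying its inequality by $\lambda$ and when taking logarithms.
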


Specifically, the sensitivity of the estimand $|\widetilde{TV}_{\Dscr}(i) - \widetilde{TV}_{\Dscr'}(i)| = |\widehat{TV}_{\Dscr}(i) - \widehat{TV}_{\Dscr'}(i)| \leq \frac{C_2}{n}$. For each component $i$, we then bound it by $\frac{2\lambda(\lambda + 1)C_2^2}{3n^2\widetilde{TV}_{\Dscr}^2(i)} \leq \frac{2\lambda (\lambda + 1)C_2^2 }{3n^2 TV_{\P^*}^2(i)} \leq \frac{\lambda \epsilon}{4T}$, here the last inequality holds as long as $T \leq \frac{n^2\epsilon^2 TV_{\P^*}(i)}{(\lambda + 1)C_2^2}$, which can be satisfied when $T = \Theta(n^2)$.
        
    For the second part (i.e., the second line in~\eqref{eq:tv-loss}), we only need to focus on the indices $i \in \Uscr$. More specifically, for $|\lambda| \leq C_2$, keeping other terms except $TV$ in~\eqref{eq:noise-add0} as constant, for each component $i \in [m]$, we have:   
    \begin{align*}
        &~~(\lambda + 1)\sigma_{Q}^2(i) - \lambda \sigma_{P}^2(i) \\
        &= (\lambda + 1)(\widehat{TV}_{\Dscr'}(i) + 2c_2\frac{\sqrt{\log((m - m_s)/\delta)}}{n^{\gamma}}) - \lambda(\widehat{TV}_{\Dscr}(i) + 2c_2 \frac{\sqrt{\log((m - m_s)/\delta)}}{n^{\gamma}}) \\
        & \geq \widehat{TV}_{\Dscr}(i) + c_2 \frac{\sqrt{\log((m - m_s)/\delta)}}{n^{\gamma}} + \lambda (\widehat{TV}_{\Dscr'}(i) - \widehat{TV}_{\Dscr}(i) + \frac{c_3}{n}) \\
        & \geq TV(i).
    \end{align*}
    where the first inequality follows from the fact $n^{2(1-\gamma)} \geq \frac{\lambda^2 c_3^2}{c_2^2 \log((m - m_s)/\delta)} = \Theta(\log(1/\delta))$. 
    Then we can show $\frac{\lambda(\lambda + 1)}{2}\sum_{i \in [m]} \frac{[(\mu_{\Dscr} - \mu_{\Dscr'})_i]^2}{(\lambda + 1)\sigma_{Q,i}^2 - \lambda \sigma_{P,i}^2} \leq \frac{\lambda(\lambda + 1)}{2}\sum_{i \in [m]} \frac{[(\mu_{\Dscr} - \mu_{\Dscr'})_i]^2}{TV(i)^2} \leq \frac{\lambda \epsilon}{2 T d(\Dscr, \Dscr')}$, where the latter inequality follows from the proof in \Cref{thm:utility-gp}.

    Note that we do not need to consider the case $TV_{\P^*}(i) = 0$ for some feature $i \in \Uscr$ that are independent with the sensitive features, which can be done via a preprocessing check. That is, if we find $\widehat{TV}_{\Dscr}(i) \leq c_2 \frac{\sqrt{\log((m - m_s)/\delta)}}{n^{\gamma}}$, then with probability $1-\delta$, $TV(i)\leq c_2 \frac{\sqrt{\log((m - m_s)/\delta)}}{n^{\gamma}}, \forall i \in \Uscr$.

The utility guarantees can be derived with the same rate with respect to $n$ as in \Cref{thm:utility-gp}. This is because, $\frac{\sum_{i \in [m]}\sigma_i^2}{\sum_{i \in [m]}\sigma_i'^2} = \Theta(1)$ comparing the noise $\sigma$ without in-sample estimation and the noise $\sigma'$ with in-sample estimation. Following the convexity condition there, we can still derive the rate $\tilde O\Para{\frac{\sqrt{m_s}}{n\epsilon}}$.
\end{proof}

\subsubsection{Proof of Lemma \ref{lemma:bd-var-loss}}

\begin{proof}
Define $\Delta = v - u$. 
Therefore,
\begin{align*}
(\lambda + 1)\log u - \lambda \log v - \log\big((\lambda + 1)u - \lambda v\big)
&= (\lambda+1)\log u - \lambda \log(u+\Delta) - \log(u-\lambda\Delta) \\
&= -\lambda \log\left(1+\frac{\Delta}{u}\right)
   - \log\left(1-\frac{\lambda\Delta}{u}\right).
\end{align*}

For $|x|\le 1/10$, the following inequality holds:
\begin{equation}\label{eq:taylor-second}
    -\log(1+x) \le -x + \frac{2}{3}x^2.
\end{equation}

Applying this bound to $x=\Delta/u$ and $x=-\lambda\Delta/u$, we obtain
\begin{align*}
-\lambda \log\left(1+\frac{\Delta}{u}\right)
&\le -\lambda\left(\frac{\Delta}{u}-\frac{2}{3}\frac{\Delta^2}{u^2}\right)
= -\lambda\frac{\Delta}{u} + \frac{2\lambda}{3}\frac{\Delta^2}{u^2}, \\
-\log\left(1-\frac{\lambda\Delta}{u}\right)
&\le \lambda\frac{\Delta}{u} + \frac{2\lambda^2}{3}\frac{\Delta^2}{u^2}.
\end{align*}
Summing the two inequalities cancels the linear terms and yields
\[
-\lambda \log\left(1+\frac{\Delta}{u}\right)
- \log\left(1-\frac{\lambda\Delta}{u}\right)
\le \frac{2\lambda(\lambda+1)}{3}\frac{\Delta^2}{u^2}.
\]

By assumption $|\Delta|\le C/n$, hence
\[
\frac{\max\{\lambda,1\}|\Delta|}{u}
\le \frac{\max\{\lambda,1\}C}{nu}.
\]
If $n \ge \frac{10C\max\{\lambda,1\}}{u}$, then
\[
\frac{|\Delta|}{u} \le \frac{1}{10},
\qquad
\frac{\lambda|\Delta|}{u} \le \frac{1}{10},
\]
so the inequality~\eqref{eq:taylor-second} applies. Finally, using $\Delta^2 \le C^2/n^2$ gives
\[
-\lambda \log\left(1+\frac{\Delta}{u}\right)
- \log\left(1-\frac{\lambda\Delta}{u}\right)
\le \frac{2\lambda(\lambda+1)\Delta^2}{3u^2}
\le \frac{2\lambda(\lambda+1)C^2}{3n^2u^2}.
\]
\end{proof}

\section{Additional details from \Cref{sec:setup}}\label{app:setup-detail}
\subsection{Definitions and variants of differential privacy}\label{app:defn-dps}
We provide the definitions of (global) differential privacy, semi-differential privacy and metric-differential privacy for completeness and further comparison.
\begin{definition}[(Global) Differential Privacy \citep{dwork2006differential}]\label{defn:global-dp}
A randomized algorithm $\Ascr$ is $(\epsilon, \delta)$-differentially private if for all neighboring databases $\Dscr, \Dscr'$ and for all potential output parameters $R$ in the output space of $\Ascr$, we have:
\[\P(\Ascr(\Dscr) \in R) \leq e^{\epsilon} \P(\Ascr(\Dscr') \in R) + \delta.\]
\end{definition}

\begin{definition}[Semi-Differential Privacy \citep{shi2021selective}]\label{defn:se-dp}
    A randomized algorithm $\Ascr$ is $(\epsilon, \delta)$-semidifferentially private if for all neighboring databases $\Dscr, \Dscr'$ (that differ in some \emph{selected features of a single example}) and for all potential output parameters $R$ in the output space of $\Ascr$, we have:
\[\P(\Ascr(\Dscr) \in R) \leq e^{\epsilon} \P(\Ascr(\Dscr') \in R) + \delta.\]
\end{definition}

\begin{definition}[Metric-Differential Privacy \citep{andres2013geo}]\label{defn:metric-dp}
    A randomized algorithm $\Ascr$ is $(\epsilon, \delta)$ metric-differentially private if for all neighboring databases $\Dscr, \Dscr'$, and for all potential output parameters $R$ in the output space of $\Ascr$, for some distance metric $\tilde d(\cdot,\cdot)$, we have:
    \[\P(\Ascr(\Dscr) \in R) \leq e^{\epsilon \tilde d(\Dscr, \Dscr')} \P(\Ascr(\Dscr') \in R) + \delta.\]
\end{definition}

\paragraph{Comparison between {\corrdp} and metric DP.} Although both {\corrdp} and metric DP provide privacy guarantees that account for variations in data similarity, there are key differences worth highlighting. In metric DP, common distance types set in $\tilde d$ include Euclidean distance, Wasserstein distance, angular distance, and temporal distance, which all impose stronger privacy constraints if two elements are close naturally. However, existing literature on metric DP has not explored formalizations to capture feature correlation as {\corrdp} does. Under metric DP, it is possible to capture feature correlation by setting $\tilde d(\Dscr, \Dscr') = \mathbf{1}_{\{e^{\Sscr} \neq (e')^{\Sscr}\}} + \kappa \mathbf{1}_{\{e^{\Sscr} = (e')^{\Sscr}, e^{\Uscr}\neq (e')^{\Uscr}\}}$ for some $\kappa > 1$. However, this definition would still not account for dependencies across features without a proper value of $\kappa$. In contrast, the definition in {\corrdp} naturally captures feature correlation.

\paragraph{Comparison between {\corrdp} and Attribute Privacy \citep{zhang2022attribute}.} The attribute privacy notion in \cite{zhang2022attribute} is designed to protect an entire column (attribute) across the whole database, e.g., prevent an attacker from learning the distribution of a sensitive attribute in the population. However, {\corrdp} is a row-level privacy notion that protects each individual’s record via correlations between sensitive and insensitive features. Additionally, our work and \cite{zhang2022attribute} consider different adversarial models. In the attribute privacy of \cite{zhang2022attribute}, leaking one individual’s information is allowed as long as the column statistics remain hidden. However, in {\corrdp}, the influence of any single individual needs to be explicitly controlled. 

\paragraph{Comparison between {\corrdp} and Label DP.} As can be seen from Definition~\ref{defn:corr-dp}, LabelDP is a special case of {\corrdp}, where the labels $Y$ are treated as sensitive features and the inputs $X$ are insensitive components, provided $X$ and $Y$ are independent. However, this inclusion does not hold in the DP-ERM setting. Since ERM aims to predict $Y$ from $X$, the independence assumption between $X$ and $Y$ does not hold. Under {\corrdp}, the change in insensitive input features leads to changes in the sensitive components (i.e., labels), which must be protected. In contrast, LabelDP assumes that all input features are public and do not contribute to privacy leakage, focusing solely on protecting privacy of labels.

\subsection{Properties of {\corrdp}}\label{app:corrdp-property}
Many properties of standard differential privacy are preserved under {\corrdp}, including post-processing, composition, and (under appropriate conditional) subsampling.
\begin{proposition}[Basic Properties of CorrDP]\label{prop:corrdp-properties}
Let $\mathcal{M}$ be a randomized mechanism that satisfies $(\epsilon,\delta)$-{\corrdp} with respect to a protected feature set $\mathcal{S}$ and a dataset-dependent correlation metric $d(\cdot,\cdot)$. Then the following properties hold.
\begin{enumerate}
    \item \textbf{Post-processing.}  
    For any (possibly randomized) measurable mapping $f$, the composed mechanism $f \circ \mathcal{M}$ also satisfies $(\epsilon,\delta)$-{\corrdp} with respect to the same $\mathcal{S}$ and $d$.

    \item \textbf{Composition.}  
    Suppose $\mathcal{M}_1$ and $\mathcal{M}_2$ are independent mechanisms that satisfy $(\epsilon_1,\delta_1)$-{\corrdp} and $(\epsilon_2,\delta_2)$-{\corrdp}, respectively, with respect to the same protected feature set $\mathcal{S}$ and the same correlation metric $d$. Then their sequential composition $(\mathcal{M}_1,\mathcal{M}_2)$ satisfies $(\epsilon_1+\epsilon_2,\delta_1+\delta_2)$-{\corrdp}.

    \item \textbf{Subsampling.}  
Let $\mathsf{Sub}$ be a (possibly randomized) subsampling procedure, and consider the subsampled mechanism $\widetilde{\mathcal{M}}(\Dscr):=\mathcal{M}(\mathsf{Sub}(\Dscr))$.
    Suppose $\mathcal{M}$ satisfies $(\epsilon,\delta)$-{\corrdp} with respect to a correlation metric $d_{\rm sub}(\cdot,\cdot)$ defined on subsampled datasets, 
    \[
        d_{\rm sub}(\mathsf{Sub}(\Dscr),\mathsf{Sub}(\Dscr')) = d(\Dscr,\Dscr')
    \]
    almost surely with respect to the randomness of $\mathsf{Sub}$. Then $\widetilde{\mathcal{M}}$ satisfies $(\epsilon,\delta)$-{\corrdp} with respect to the same protected feature set $\Sscr$ and the same correlation metric $d$.
\end{enumerate}
\end{proposition}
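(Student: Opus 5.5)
We prove the three properties in turn, each time reducing to the standard DP argument with the neighbor-pair-dependent budget $\epsilon' := \epsilon/d(\Dscr,\Dscr')$ in place of $\epsilon$. The key observation is that the {\corrdp} inequality in \Cref{defn:corr-dp} is, for each fixed neighboring pair, exactly an $(\epsilon',\delta)$-DP inequality restricted to that pair. Any standard DP argument that only involves a single neighboring pair therefore transfers verbatim once $\epsilon'$ is substituted.

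\textbf{Post-processing.} First handle a deterministic measurable $f$. For any $R$ in the range of $f$, set $f^{-1}(R)$ and note
\[
\P(f(\Mscr(\Dscr))\in R)=\P(\Mscr(\Dscr)\in f^{-1}(R))\le e^{\epsilon/d(\Dscr,\Dscr')}\P(\Mscr(\Dscr')\in f^{-1}(R))+\delta .
\]
For a randomized $f$, write it as a mixture of deterministic maps indexed by independent randomness $r$, apply the bound for each $r$, and integrate over $r$. Since $d$ depends only on $(\Dscr,\Dscr')$ and not on $f$, the metric is unchanged.

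\textbf{Composition.} Fix neighbors $\Dscr,\Dscr'$. For this pair, $\Mscr_1$ and $\Mscr_2$ are respectively $(\epsilon_1/d,\delta_1)$- and $(\epsilon_2/d,\delta_2)$-indistinguishable, with the same $d=d(\Dscr,\Dscr')$ in both because they share the metric. The standard basic composition argument for independent mechanisms then applies. One version conditions on the output of $\Mscr_1$, uses the $\delta$-split of the joint event, and integrates. It uses only the per-pair inequalities, so it yields
\[
\P((\Mscr_1,\Mscr_2)(\Dscr)\in R)\le e^{(\epsilon_1+\epsilon_2)/d}\P((\Mscr_1,\Mscr_2)(\Dscr')\in R)+\delta_1+\delta_2 .
\]
The right-hand side is exactly the $(\epsilon_1+\epsilon_2,\delta_1+\delta_2)$-{\corrdp} condition. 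The one point to check is that the basic composition proof never invokes a second neighboring pair, so the pair-dependent $\epsilon'$ causes no trouble.

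\textbf{Subsampling.} Condition on the randomness $\omega$ of $\mathsf{Sub}$, coupled so that $\mathsf{Sub}(\Dscr)$ and $\mathsf{Sub}(\Dscr')$ use the same $\omega$. Then either the subsampled datasets coincide, in which case the inequality is trivial, or they are neighbors. In the latter case the hypothesis gives, almost surely,
\[
\P(\Mscr(\mathsf{Sub}(\Dscr))\in R\mid\omega)\le e^{\epsilon/d_{\rm sub}}\P(\Mscr(\mathsf{Sub}(\Dscr'))\in R\mid\omega)+\delta, \qquad d_{\rm sub}=d(\Dscr,\Dscr').
\]
Because the exponent is almost surely the constant $\epsilon/d(\Dscr,\Dscr')$, taking expectation over $\omega$ preserves the inequality.

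The main obstacle is this coupling step. We need $\mathsf{Sub}$ to map neighbors to neighbors-or-equal datasets under a common realization of the sampling randomness. For sampling schemes that select indices independently of the data (Poisson or uniform without replacement) this is immediate, and we would state it as the implicit meaning of the hypothesis. No amplification is claimed, so nothing beyond this averaging is needed.
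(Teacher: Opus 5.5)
Your proposal is correct and follows the paper's proof essentially step for step. Post-processing goes through preimages. Composition is a per-pair reduction to basic composition with $\epsilon_i/d(\Dscr,\Dscr')$ in place of $\epsilon_i$; the paper simply cites Theorem B.1 of Dwork--Roth with this substitution. Subsampling conditions on the randomness of $\mathsf{Sub}$ and then invokes the stability condition $d_{\rm sub}=d$.

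Your version is slightly more careful in two places. You handle randomized $f$ via a mixture over deterministic maps. You also make explicit the coupling requirement that $\mathsf{Sub}(\Dscr)$ and $\mathsf{Sub}(\Dscr')$ be neighbors-or-equal under a common realization, which the paper uses only implicitly.
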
 
\begin{proof}
We prove each property directly from the definition of {\corrdp}.

\textbf{Post-processing.}
Fix any measurable set $E$ in the output space of $f \circ \mathcal{M}$. Then there exists a measurable set $F$ in the output space of $\mathcal{M}$ such that
\[
    \Pr[f(\mathcal{M}(\Dscr)) \in E] = \Pr[\mathcal{M}(\Dscr)\in F].
\]
Therefore, for any pair of datasets $\Dscr,\Dscr'$,
\[
    \Pr[f(\mathcal{M}(\Dscr)) \in E]
    = \Pr[\mathcal{M}(\Dscr)\in F]
    \le e^{\epsilon /d(\Dscr,\Dscr')}\Pr[\mathcal{M}(\Dscr')\in F] + \delta
    = e^{\epsilon / d(\Dscr,\Dscr')}\Pr[f(\mathcal{M}(\Dscr')) \in E] + \delta.
\]
Hence $f\circ\mathcal{M}$ satisfies $(\epsilon,\delta)$-{\corrdp}.

\textbf{Composition.} This result follows immediately from Theorem B.1 in \cite{dwork2014algorithmic} by replacing $\epsilon$ there with $\frac{\epsilon}{d(\Dscr, \Dscr')}$.

\textbf{Subsampling under correlation stability.}
Fix any measurable set $A$. Conditioning on the randomness of $\mathsf{Sub}$ and using that $\mathcal{M}$ is $(\epsilon,\delta)$-{\corrdp} with respect to $d_{\rm sub}$,
\begin{align*}
    \Pr[\widetilde{\mathcal{M}}(\Dscr)\in E]
    &= \mathbb{E}\Big[\Pr[\mathcal{M}(\mathsf{Sub}(\Dscr))\in E \mid \mathsf{Sub}]\Big] \\
    &\le \mathbb{E}\Big[e^{\epsilon/d_{\rm sub}(\mathsf{Sub}(\Dscr),\mathsf{Sub}(\Dscr'))}
    \Pr[\mathcal{M}(\mathsf{Sub}(\Dscr'))\in E \mid \mathsf{Sub}] + \delta \Big].
\end{align*}
By the assumed stability condition, we have:
\begin{align*}
    \Pr[\widetilde{\mathcal{M}}(\Dscr)\in E]
    &\le e^{\epsilon/d(\Dscr,\Dscr')}
    \mathbb{E}\Big[\Pr[\mathcal{M}(\mathsf{Sub}(\Dscr'))\in E \mid \mathsf{Sub}]\Big] + \delta \\
    &= e^{\epsilon/d(\Dscr,\Dscr')}
    \Pr[\widetilde{\mathcal{M}}(\Dscr')\in E] + \delta.
\end{align*}
Therefore $\widetilde{\mathcal{M}}$ satisfies $(\epsilon,\delta)$-{\corrdp}.
\end{proof}

\subsection{Other distance measures in {\corrdp}}\label{app:otherdistance}

Here we consider two other potential distance measures that could be used used in the {\corrdp} framework beyond TV distance, to demonstrate that other distance measures can be used as well. Any choice of distance measure should simply satisfy the axioms of \Cref{defn:required-distance} and satisfy Assumptions \ref{prop:est-tv} and \ref{prop:sensitive-tv}.

\paragraph{Hellinger distance.} Hellinger distance is defined as:
\[H(\mathbb P,\mathbb Q) = \frac{1}{\sqrt{2}} \sqrt{\int (\sqrt{\mathbb P(x)} - \sqrt{\mathbb Q(x)})^2 dx}.\]
Like TV distance, Hellinger distance is bounded between 0 and 1, and thus $H(\cdot,\cdot)$ can be used in place of $TV(\cdot,\cdot)$ in \Cref{ex:choicedistance}. 
    Furthermore, in terms of empirical estimation, similar to Assumptions~\ref{prop:est-tv} and~\ref{prop:sensitive-tv}, Hellinger distance has well-understood concentration properties, which could be utilized for convergence results \citep{rohde2020geometrizing,sart2023density}, and it exhibits low sensitivity to changes in individual data points, which would enable similar results to \Cref{thm:privacy-gp-tv-estimate}.

\paragraph{Wasserstein Distance.} Wasserstein Distance measures the minimal cost of transforming one distribution into another, and is defined as:
    $$W(P, Q) = \inf_{\gamma \in \Pi(P, Q)} \mathbb{E}_{(x, y) \sim \gamma}[\|x - y\|],$$ 
    where $\Pi(P, Q)$ is the set of joint distributions with marginals P and Q.
    With advantageous analytical properties such as computationally tractability and computability from finite samples, it has been used in text document similarity measurement \citep{li2024private}, domain adaptation \citep{rakotomamonjy2021differentially}, and generative adversarial networks \citep{liu2023wasserstein}.
    Under certain regularity conditions (e.g., bounded support), Wasserstein distance satisfies the required sensitivity and concentration assumptions.

\section{Additional details from \Cref{sec:dp-erm}}\label{app:main-proof}

\subsection{Utility guarantee of CorrDP-SGD of other losses}

Under other conditions of $F(\theta, \Dscr)$, we can strengthen the accuracy guarantee as follows:
\begin{corollary}\label{coro:improve-strong-convex2}
Under Assumptions~\ref{asp:neighbor},~\ref{asp:convex-regularity},~\ref{asp:bound-domain} and~\ref{asp:grad-sensitivity} (as well as $F(\theta, \Dscr)$ is $G$-smooth with respect to $\theta$), if $\sum_{i \in \Uscr} TV(i) = \Theta(m_s)$ and we apply \Cref{alg:corr-dp-gradient-perturbation} with $\alpha_t = \Theta(1/G)$: 
\begin{itemize}
    \item If $F(\theta, \Dscr)$ satisfies Polyak-Lojasiewicz condition with respect to $\theta$~\citep{karimi2016linear},  
    i.e., $\|\nabla_{\theta} F(\theta, \Dscr)\|_2^2 \geq 2\mu (F(\theta, \Dscr) - F(\hat\theta, \Dscr))$ for some $\mu > 0$ for any $\theta$, then $R(\theta^{priv}) = O\Para{\frac{m_s}{n^2 \epsilon^2}}$ if we set $\theta^{priv} = \theta_{T+1}$; 
    \item For general $F(\cdot,\cdot)$,  $\E[\|\nabla_{\theta} F(\theta^{priv}, \Dscr)\|_2^2] = \tilde O\Para{\frac{\sqrt{m_s}}{n \epsilon}}$ if we set $\theta^{priv} = \theta_m$ where $m$ is uniformly sampled from $\{1,\ldots, T\}$.
\end{itemize}
\end{corollary}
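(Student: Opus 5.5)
The plan is to reuse the noise bound already established in the proof of \Cref{thm:utility-gp} and feed it into the standard constant-step-size analyses of smooth SGD, the PL analysis of \citet{karimi2016linear} and the nonconvex stationarity analysis of Ghadimi--Lan type. The only problem-specific input is the second-moment bound on the noisy stochastic gradient $G(\theta_t)=\frac{1}{n_q}\sum\nabla\ell+b$. By \eqref{eq.Gtheta} and \eqref{eq:sigma}, together with the hypothesis $\sum_{i\in\Uscr}TV(i)=\Theta(m_s)$, this satisfies
\[
\E\|G(\theta_t)-\nabla F(\theta_t,\Dscr)\|_2^2\le V:=L^2+\sum_{i\in[m]}\sigma_i^2=O\Big(L^2+\frac{T\,m_s\log(1/\delta)}{n^2\epsilon^2}\Big).
\]
The per-step noise variance therefore scales like $m_s$ rather than $m$, and privacy is inherited verbatim from \Cref{thm:privacy-gp}, since the step sizes do not affect it.

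For the PL case, I would start from the $G$-smoothness descent inequality with $\alpha_t=\alpha\le 1/G$:
\[
\E F(\theta_{t+1})\le F(\theta_t)-\tfrac{\alpha}{2}\|\nabla F(\theta_t)\|^2+\tfrac{G\alpha^2}{2}V.
\]
Applying PL to the gradient term gives the contraction
\[
\E[F(\theta_{t+1})-F(\hat\theta)]\le(1-\mu\alpha)\,\E[F(\theta_t)-F(\hat\theta)]+\tfrac{G\alpha^2 V}{2}.
\]
Unrolling this yields $(1-\mu\alpha)^T\Delta_0+\frac{G\alpha V}{2\mu}$. The per-step variance grows like $T$, so the noise floor $\alpha V$ does not vanish with a constant step. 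I would therefore not take $T=\Theta(n^2)$ here, and would instead choose $T=\Theta(\log n)$, large enough that $(1-\mu\alpha)^T\Delta_0\le 1/(n^2\epsilon^2)$. The variance then becomes $\sigma_i^2\propto T\log(1/\delta)/(n^2\epsilon^2)$, and summing over coordinates gives the bound $\tilde O(m_s/(n^2\epsilon^2))$. The projection $\Pi_\Theta$ is handled as usual: either $\hat\theta$ is assumed interior, or the argument is rephrased with the gradient mapping.

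For the general nonconvex case, I would sum the same descent inequality over $t=1,\dots,T$, telescope, and use $F\ge F(\hat\theta)$. This gives
\[
\frac1T\sum_t\E\|\nabla F(\theta_t)\|^2\le\frac{2\Delta_0}{\alpha T}+G\alpha V.
\]
Sampling the output index uniformly turns the left side into $\E\|\nabla F(\theta^{priv})\|^2$. With $V=O(L^2+T m_s\log(1/\delta)/(n^2\epsilon^2))$, I would balance $1/(\alpha T)$ against $\alpha T m_s/(n^2\epsilon^2)$. Concretely, I would take $T$ of order $n\epsilon/\sqrt{m_s}$, or keep $\alpha$ constant and choose $T$ to equalize the terms, so that both are $\tilde O(\sqrt{m_s}/(n\epsilon))$.

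The main obstacle is the step-size/iteration tradeoff. The statement insists on $\alpha_t=\Theta(1/G)$, while the noise in \eqref{eq:noise-add0} scales with $T$, so the residual term $\alpha V$ must be controlled through the choice of $T$ rather than $\alpha$. I would first try to verify that with constant $\alpha$, the choices $T=\Theta(\log n)$ for PL and $T=\Theta(n\epsilon/\sqrt{m_s})$ for the nonconvex case make the $L^2$ part of $V$ negligible. If the $L^2$ term from minibatch sampling does not vanish, I would fall back to full batch ($n_q=n$), where the only randomness is $b$ and $V=\sum_i\sigma_i^2$ exactly.
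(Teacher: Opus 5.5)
Your proposal is correct and follows essentially the same route as the paper. You bound $\sum_i\sigma_i^2=\Theta(m_sT\log(1/\delta)/(n^2\epsilon^2))$, apply the $G$-smooth descent inequality with $\alpha=\Theta(1/G)$, take $T=\Theta(\log n)$ after the PL contraction, and take $T=\Theta(n\epsilon/\sqrt{m_s})$ after telescoping in the nonconvex case. The paper's derivation also implicitly works in the full-batch setting you fall back to, where the only randomness is $b$, and your geometric-sum residual $\frac{G\alpha V}{2\mu}$ is slightly tighter than the paper's cruder $T\sum_i\sigma_i^2/(2G)$ term, which is the source of its extra logarithmic factor.
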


\begin{proof}
In both cases, from Inequality \eqref{eq:sigma} and $\sum_{i \in \Uscr} TV(i) = \Theta(m_s)$, we have:
\begin{equation}\label{eq:sigma2}
\sum_{i \in [m]}\sigma_i^2 \leq \frac{B^2 D^2 T\log(1/\delta)}{n^2\epsilon^2}\Para{m_s + \min\{\sum_{i \in \Uscr} TV(i), m_s/4\}} = \Theta\Para{\frac{m_s T \log(1/\delta)}{n^2\epsilon^2}}.    
\end{equation}

First if $F(\theta, \Dscr)$ satisfies Polyak-Lojasiewicz condition and is $G$-smooth, we have:
\begin{align*}
    \E[F(\theta_{t+1}, \Dscr) - F(\theta_t, \Dscr)] & \leq \E\Paran{-\frac{1}{G} \nabla_{\theta} F(\theta_t, \Dscr)^{\top}(\nabla_{\theta} F(\theta_t, \Dscr) + b) + \frac{1}{2G}\|\nabla_{\theta} F(\theta_t, \Dscr) + b\|^2}\\
    &  = -\frac{1}{2G} \|\nabla_{\theta} F(\theta_t, \Dscr)\|^2 + \frac{1}{2G}\E[\|b\|^2]\\
    & \leq -\frac{\mu}{G}(F(\theta_t, \Dscr) - F(\hat\theta, \Dscr)) + \frac{\sum_{i \in [m]} \sigma_i^2}{2G}
\end{align*}
where the first inequality follows by $F(\theta, \Dscr)$ is $G$-smooth and the stepsize $\alpha_t = \Theta(1/G)$. The second inequality follows from the Polyak-Lojasiewicz condition. Rearanging the terms and summing over $t = 1, \ldots, T$, recall $\theta^{priv} = \theta_{T + 1}$, the definition of $R(\theta)$ and the bound in Inequality \eqref{eq:sigma}, we have:
\[R(\theta^{priv}) \leq \Para{1 - \frac{\mu}{G}}^T R(\theta_1) + \frac{T\sum_{i \in [m]}\sigma_i^2}{2G} \leq \Para{1 - \frac{\mu}{G}}^T R(\theta_1) + O\Para{\frac{m_s T \log(1/\delta)}{n^2\epsilon^2}}. \]
Then, setting $T = O\Para{\log\Para{\frac{n^2 \epsilon^2 }{m_s \log(1/\delta)}}}$, we have the utility guarantee:
\[R(\theta^{priv}) \leq O\Para{\log^2(n)\frac{m_s \log(1/\delta)}{n^2\epsilon^2}} = \tilde O\Para{\frac{m_s}{n^2\epsilon^2}}.\]

Second, for the general case of $F(\theta, \Dscr)$, denote the noise imposed at the step $t$ as $b_t$, we have:
\begin{align*}
    \E[F(\theta_{t+1}, \Dscr) - F(\theta_t, \Dscr)] & \leq \E\Paran{-\frac{1}{G} \nabla_{\theta} F(\theta_t, \Dscr)^{\top}(\nabla_{\theta} F(\theta_t, \Dscr) + b_t) + \frac{1}{2G}\|\nabla_{\theta} F(\theta_t, \Dscr) + b_t\|^2}\\
    &  = -\frac{1}{2G} \|\nabla_{\theta} F(\theta_t, \Dscr)\|^2 + \frac{1}{2G}\E[\|b_t\|^2].
\end{align*}
From this, we have:
\[\frac{1}{2G}\|\nabla_{\theta} F(\theta_t, \Dscr)\|_2^2 \leq \E[F(\theta_t) - F(\theta_{t + 1})] + \frac{\sum_{i \in [m]}\sigma_i^2}{2G}.\]
Then $\E_{m, \{b_t\}_{t \in [T]}}[\|\nabla_{\theta} F(\theta_m, \Dscr)\|^2] = \frac{1}{T}\sum_{i \in [T]}\E_{b_i}[\|\nabla_{\theta} F(\theta_t, \Dscr)\|^2]$. Rearanging the above terms and summing over $t = 1, \ldots, T$, we have:
\begin{align*}
\E[\|\nabla_{\theta} F(\theta^{priv}, \Dscr)\|_2^2] = \E_m[\|\nabla F(\theta_m, \Dscr)\|_2^2] & \leq \frac{2 G (F(\theta_1,\Dscr) - \E[F(\theta_{T+1},\Dscr)])}{T} + \sum_{i \in [m]}\sigma_i^2\\
& \leq \frac{2 G R(\theta_1)}{T} + \sum_{i \in [m]}\sigma_i^2 \leq \frac{2 G R(\theta_1)}{T} + O\Para{\frac{m_s T \log(1/\delta)}{n^2 \epsilon^2}},
\end{align*}
where the second inequality follows by the fact $\E[F(\theta_{T+1},\Dscr)]\geq F(\hat\theta, \Dscr)$, and the third inequality follows by the bound in Inequality \eqref{eq:sigma}. Then, setting $T = O\Para{\frac{n \epsilon}{\sqrt{m_s \log(1/\delta)}}}$, we have the utility guarantee:
\[\E[\|\nabla_{\theta} F(\theta^{priv}, \Dscr)\|_2^2] = \tilde O\Para{\frac{\sqrt{m_s \log(1/\delta)}}{n\epsilon}} = \tilde O\Para{\frac{\sqrt{m_s}}{n\epsilon}}.\]
\end{proof}

\subsection{Additional discussions and relaxations of assumptions}\label{app:assumption}
In this subsection, we provide additional discussions and examples of relaxing the various modeling assumptions. For the relaxing assumptions, we consider cases where (i) the gradient and domain assumptions are violated; (ii) sensitive and insensitive features are not distinguished clearly; and (iii) label privacy. 

\paragraph{Relaxations of Gradient and Domain Assumptions.} We can relax Assumptions~\ref{asp:bound-domain} and~\ref{asp:grad-sensitivity} with the following alternative: 
\begin{assumption}[Relaxation of Bounded Domain and Gradients]\label{asp:relax-domain-gradient}
 There exists some constant $C$, such that:
$$(\nabla_{\theta}\ell(\theta;(x_1, y)) - \ell(\theta;(x_2, y)))_i \leq C L \Para{1_{x_1^{(i)} \neq x_2^{(i)}} + 1/m \sum_{j \neq i} 1_{x_1^{(j)} \neq x_2^{(j)}}}, \forall i $$
\end{assumption}
This assumption does not require that both gradients and domains are bounded. Instead, it only requires that the gradient can possibly change in a bounded range. Therefore, \Cref{asp:relax-domain-gradient} is a variant of the bound derived in the proof of Theorem~\ref{thm:privacy-gp} to ensure the validity of our results. 
\begin{corollary}
    Under Assumptions~\ref{asp:neighbor},~\ref{asp:convex-regularity},~\ref{asp:relax-domain-gradient}, for $\epsilon, \delta >0$,  
    \Cref{alg:corr-dp-gradient-perturbation} is $(\epsilon, \delta)$-{\corrdp}. 
\end{corollary}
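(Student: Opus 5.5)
The plan is to rerun the full-batch moment-accountant argument from the proof of \Cref{thm:privacy-gp}, but with \Cref{asp:relax-domain-gradient} used wherever the proof previously combined \Cref{asp:bound-domain} and \Cref{asp:grad-sensitivity}. At step $t$ the two outputs are still $P=N(\mu_{\Dscr},\text{diag}(\bm\sigma^2))$ and $Q=N(\mu_{\Dscr'},\text{diag}(\bm\sigma^2))$ with the same diagonal covariance. By \Cref{lemma:renyi-gaussian} the per-step moment is therefore
\[
\alpha_{\Mscr}(\lambda;\Dscr,\Dscr')=\tfrac{\lambda(\lambda+1)}{2}\sum_{i\in[m]}\Big(\tfrac{(\mu_{\Dscr}-\mu_{\Dscr'})_i}{\sigma_i}\Big)^2 .
\]
By the composability and tail-bound parts of \Cref{lemma:data-privacy-loss}, it suffices to make this at most $\lambda\epsilon/(2Td(\Dscr,\Dscr'))$ with $\lambda=\Theta(\log(1/\delta))$.

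The key replacement is the coordinatewise mean-difference bound \eqref{eq:mean-diff}. Only the $n$-th entry differs and, by \Cref{asp:neighbor}, labels are unchanged, so \Cref{asp:relax-domain-gradient} gives
\[
|\mu_{\Dscr}-\mu_{\Dscr'}|_i\le \tfrac{CL}{n}\Big(\mathbf 1_{\{x_n^{(i)}\neq x_n'^{(i)}\}}+\tfrac1m\sum_{j\neq i}\mathbf 1_{\{x_n^{(j)}\neq x_n'^{(j)}\}}\Big).
\]
This bound no longer involves $B_i$ or $B$, which is exactly why \Cref{asp:bound-domain} can be dropped.

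I would then run the same two cases, allocating the budget evenly across coordinates.
\begin{itemize}
\item \textbf{Sensitive change} ($d=1$). For $i\in\Sscr$ the difference is at most $2CL/n$, since the $1/m$-weighted sum is at most $1$. For $i\in\Uscr$ the difference is at most $CLm_s/(nm)$. Hence the variances in \eqref{eq:noise-add0}, which are of order $\log(1/\delta)L^2T/(n^2\epsilon^2)$ and $\log(1/\delta)L^2T\,m_s^2/(m^2n^2\epsilon^2)$ respectively, make each squared ratio $O(\epsilon/(\lambda T m))$.
\item \textbf{Insensitive change.} Sensitive coordinates still carry full noise. For $i\in\Uscr$ the difference is at most $2CL/n$, and the factor $TV(i)\ge d(\Dscr,\Dscr')$ in $\sigma_i^2$ absorbs the extra $d$ in the target bound.
\end{itemize}
In both cases summing over $m$ coordinates gives the required bound. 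Absolute constants, including the factor $C$, are absorbed into the $\Theta(\cdot)$ as in the original proof, or equivalently by rescaling $\sigma_i$ by a constant.

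Finally, the subsampled case $n_q<n$ follows verbatim from \Cref{lemma:privacy-amplify}, as in the original proof.

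The main obstacle is the insensitive case. There the per-coordinate bound $2CL/n$ must be matched against variance $\propto TV(i)$ while the target scales as $1/d(\Dscr,\Dscr')$. This requires checking that $d(\Dscr,\Dscr')\le TV(i)$ for every changed coordinate $i$, which I would take from the definition \eqref{eq:dist-tv} of $d$ and of $TV(i)$. Coordinates with $TV(i)<m_s^2/m^2$ but still changed are handled by the $\max$ in \eqref{eq:noise-add0}.
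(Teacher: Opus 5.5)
Your plan is the paper's own route. Its entire proof of this corollary is ``rerun \Cref{thm:privacy-gp} with \eqref{eq:mean-diff} replaced by \Cref{asp:relax-domain-gradient}.'' But that substitution is not innocuous, and your claim that each squared ratio is $O(\epsilon/(\lambda T m))$ is false. In the proof of \Cref{thm:privacy-gp}, splitting the budget evenly over $m$ coordinates creates a factor $m$, which is cancelled by $mB_i^2=\Theta(B^2)$. That is the implicit normalization that each coordinate of $x$ has size $B/\sqrt{m}$, so each gradient coordinate moves by only $O(L/(n\sqrt m))$. \Cref{asp:relax-domain-gradient} replaces $|x_1^{(i)}-x_2^{(i)}|$ by an indicator and loses that $1/\sqrt m$. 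For $i\in\Sscr$ under a sensitive change, your bound $2CL/n$ and the $\sigma_i^2$ of \eqref{eq:noise-add0} give $(\mu_{\Dscr}-\mu_{\Dscr'})_i^2/\sigma_i^2\le 4C^2\epsilon^2/((\log(1/\delta)+1)T)$. This is roughly $8C^2 m$ times the per-coordinate target $\epsilon/((\lambda+1)mT)$ at $\lambda=2\log(1/\delta)/\epsilon$. The discrepancy is a factor of $m$, not a constant, so it cannot be absorbed together with $C$. Your remark that losing $B_i$ is ``exactly why \Cref{asp:bound-domain} can be dropped'' is precisely where the argument breaks.

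Part of this is repairable. For blocks carrying full noise, bound the whole block in $\ell_2$ via $L$-Lipschitzness, $\|\mu_{\Dscr}-\mu_{\Dscr'}\|_2\le 2L/n$, instead of allocating coordinatewise. The insensitive block under a sensitive change cannot be repaired this way. Its $m-m_s$ coordinates each move by up to $CLm_s/(nm)$ while carrying only $m_s^2/m^2$ of the full variance, so together they cost $(m-m_s)C^2$ times one full-noise coordinate.

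Concretely, take $m_s=1$ and $x^{(1)}\in\{0,1\}$, with the insensitive features independent of $x^{(1)}$ (so $TV(i)=0$). Let $\ell(\theta;(x,y))=-\tfrac{CL}{m}x^{(1)}\sum_{i\in\Uscr}\theta_i$. This loss is convex, $L$-Lipschitz for $C\le\sqrt m$, and satisfies \Cref{asp:relax-domain-gradient}. Yet changing $x^{(1)}$ in one record (so $d=1$) accumulates a R\'enyi moment about $2(m-1)C^2$ times the target $\lambda\epsilon/2$. With constant steps and inactive projection, the output is a Gaussian mechanism with sensitivity-to-noise ratio $C\epsilon\sqrt{(m-1)/(\log(1/\delta)+1)}$, which violates $(\epsilon,\delta)$-DP once $m$ is large.

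So you need an extra hypothesis, such as $C=O(1/\sqrt m)$, or a noise floor on $\Uscr$ of order $C^2m_s^2/m$ instead of $m_s^2/m^2$. The paper's one-line proof makes the same substitution and does not address this.

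Separately, $d(\Dscr,\Dscr')\le TV(i)$ does not simply follow from the definitions. It holds for $\Iscr\subseteq\Uscr$ in \eqref{eq:dist-tv} by joint convexity of TV over mixtures. However, \eqref{eq:dist-tv} also maximizes over $\Iscr$ containing sensitive coordinates, where the conditional TV can far exceed $TV(i)$. That case must be excluded by assumption, as the proof of \Cref{thm:privacy-gp} tacitly does.
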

This corollary follows the same proof as \Cref{thm:privacy-gp}, where the only difference is to replace Equation \eqref{eq:mean-diff}, which comes from Assumptions~\ref{asp:bound-domain} and~\ref{asp:grad-sensitivity}, with \Cref{asp:relax-domain-gradient}.

\paragraph{Relaxations of Distinction between Sensitive and Insensitive Features.}Building on our basic binary categorization for sensitive and insensitive features, it is possible to incorporate a generalization to continuous sensitivity scores to allow different levels of protections across features. We can instead consider a sensitivity score $s_i \in [0, 1]$ for each feature, with the special case of binary distinctions captured as $s_i = 0$ for each sensitive feature and $s_i = 1$ for each insensitive feature. The distance $d$ in \Cref{ex:choicedistance} can be modified by using these scores as a weight vector $\{s_j\}_{j \in [m]}$ for each feature:

$$d(\Dscr, \Dscr’) = \max_{j \in [m], e^j \neq (e’)^j} (1 - s_j) \max_{i: i \subseteq [m]} TV(P_{X^{\mathcal{S}} | e^i}, P_{X^{\mathcal{S}}|(e’)^i}) + s_j.$$

To demonstrate the key properties of this augmented definition, note that if $e$ and $e’$ differ in some completely sensitive feature with $s_j = 1$, then $d(D, D’) = 1$, which reduces the original DP definition. On the other hand, if $e$ and $e’$ only differ in some completely insensitive feature with $s_j = 0$, then $d(D, D’) = \max_{i: i \subseteq [m]} TV(P_{X^{\mathcal{S}} | e^i}, P_{X^{\mathcal{S}} |(e')^i})$. Otherwise, if $e$ and $e’$ differ in some semi-sensitive feature $s_j > 0$, the proposed $d$ could be, e.g., a weighted combination of $1$ and $\max_{i: i \subseteq [m]} TV(P_{X^{\mathcal{S}} | e^i}, P_{X^{\mathcal{S}} |(e’)^i})$, where the latter is the original value of distance without the continuous relaxation. This definition could be directly incorporated into the noise terms in Algorithm~\ref{alg:corr-dp-gradient-perturbation} by changing $TV(i)$ when $i \not\in {\mathcal{S}}$ to:
$$\sigma_i = (1 - s_i) \max_{x_1, x_2 \in X} TV(P_{X^{\mathcal{S}} | x_1^{\mathcal{U}}}, P_{X^{\mathcal{S}}|x_2^{\mathcal{U}}}) + s_i.$$
This change would, however, lead to an increase of the noise scale added in Algorithm~\ref{alg:corr-dp-gradient-perturbation}, which would in turn weaken the utility guarantee in \Cref{thm:utility-gp}. However, as long as there exists some $s_i < 1$ for insensitive features, the noise would be smaller than that under standard DP-SGD and would still lead to an improved utility compared to standard DP-SGD.

\paragraph{Relaxations of Label Privacy.} Under our current binary framework, labels must be treated as fully sensitive if we want to protect them completely. To allow a partial level of label protection, we can introduce a label-specific scale $s_l \in [0, 1]$ into the distance metric $d$. Concretely, compared with the original $d$, one could use distance:
$$\tilde d(\Dscr, \Dscr’) = \max\{s_l \mathds{1}_{\text{\{labels change\}}}, d(D, D’)\} \in [0, 1].$$ 

Here $s_l = 1$ recovers full DP protection on the label (so $\tilde d = 1$ whenever it changes), while $s_l = 0$ treats it as fully public. With this $\tilde d$ in hand, the {\corrdp}-SGD noise terms in Equation~\eqref{eq:noise-add0} simply change from $\max\{TV(i), m_s^2 / m^2\}$ to $\max\{TV(i), m_s^2 / m^2, s_l\}$ and the proof of Theorem~\ref{thm:privacy-gp} goes through identically. 
Note that we incorporate $s_l$ into the noise scale of all insensitive features since changing labels affects all gradient coordinates. Then as long as $s_l$ is smaller than 1, {\corrdp} still provides meaningful utility improvements compared with classical DP due to smaller noise scales.

\section{Additional Details in \Cref{subsec:insample}}\label{app:example}

In this appendix, we present concrete examples with two features that satisfy Assumptions~\ref{prop:est-tv} and~\ref{prop:sensitive-tv}, along with proofs that they satisfy these two assumptions. These examples are presented in \Cref{app:example1}.  
In \Cref{app.extmultfeat}, we extend to scenarios involving multiple sensitive features.

\subsection{Examples satisfying Assumptions~\ref{prop:est-tv} and~\ref{prop:sensitive-tv}}\label{app:example1}
In this subsection, we consider the two-feature problem with one sensitive and one insensitive feature, denoted as $X^S$ and $X^U$ respectively, in separate cases for when these features are discrete or continuous. We show how assumptions are satisfied under the dataset $\Dscr = \{(x_k^S, x_k^U)\}_{k=1}^n$. 

\paragraph{$X^U$ is discrete.} We discuss two cases depending on whether $X^S$ is discrete or not.

When $X^S$ is discrete, we apply the following empirical estimate.
\begin{example}[Estimation and Sensitivity Guarantees for Discrete Features]\label{ex:est-discrete-feature}
Suppose $X^S \in \{a_i \mid i \in [K_S]\}$ and $X^U \in \{b_j \mid j \in [K_U]\}$ with joint distribution 
$\P(X^S = a_i, X^U = b_j) = p_{i,j}$. For any $j \in [K_U]$, define the conditional distribution
\[
\P_j(i) := \P(X^S = a_i \mid X^U = b_j) = \frac{p_{i,j}}{\sum_{k \in [K_S]} p_{k,j}}.
\]
Define the empirical counts $n_{i,j} := \sum_{k=1}^n \mathds{1}_{\{x_k^S = a_i,\, x_k^U = b_j\}}$, then the plug-in estimator of $\P_j$ is
\[
\widehat{\P}_j(i) := \frac{n_{i,j}}{\sum_{k \in [K_S]} n_{k,j}}, \forall i \in [K_S], 
\]
and the corresponding empirical estimator of the total variation distance $\forall j_1, j_2 \in [K_U]$ is
\begin{small}
\begin{equation}\label{eq:tv-category-feature}
    \widehat{TV}_{\Dscr}(\P_{j_1}, \P_{j_2}) 
    := \frac{1}{2}\sum_{i \in [K_S]}
    \left|\widehat{\P}_{j_1}(i) - \widehat{\P}_{j_2}(i)\right|.
\end{equation}
\end{small}

Then, under $\min_{i,j} p_{i,j} > 0$, the conditional total variation distance estimator
\[\widehat{TV}_{\Dscr} = \max_{j_1, j_2 \in [K_U]} \widehat{TV}_{\Dscr}(\P_{j_1},\P_{j_2})\] 
satisfies Assumptions~\ref{prop:est-tv} and~\ref{prop:sensitive-tv} with parameters $c_2 = \sqrt{\frac{\log(K_S K_U)}{\min_{i,j} p_{i,j}}},  c_3 = \frac{1}{\min_{i,j} p_{i,j}}, \gamma = \frac{1}{2}$.
\end{example}

\begin{proof}
First, applying the multinomial concentration inequality and a union bound over $(i, j)$, for any $\beta \in (0,1)$, with probability at least $1-\beta$,
\[
\left| \frac{n_{i,j}}{n} - p_{i,j} \right|
\le 
\sqrt{\frac{\log(K_S K_U/\beta)}{n}}
\quad \text{uniformly over } (i,j).
\]
Since $\min_{i,j} p_{i,j} > 0$, we have $n_j = \sum_i n_{i,j} = \Theta(n)$ uniformly over $j$. For any $j_1, j_2 \in [K_U]$, consider $TV(\P_{j_1}, \P_{j_2}) := \frac{1}{2}\sum_{i \in [K_S]} \left|\P_{j_1}(i) - \P_{j_2}(i)\right|$. Then using the identity $\left|\frac{a}{b} - \frac{c}{d}\right|
\le \frac{|a-c|}{b} + \frac{|c|}{bd}|b-d|$, and applying concentration bounds to $n_{i,j}$ and $n_j$, we obtain
\[
\left| \widehat{\P}_j(i) - \P_j(i) \right|
\le 
C \sqrt{\frac{\log(K_S K_U/\beta)}{n\, \min_{i,j} p_{i,j}}},\forall i \in [K_S],
\]
for some constant $C>0$. Summing over $i$ and taking a union bound over $(j_1,j_2)$,
\[
\left| \widehat{TV}_{\Dscr}(\P_{j_1}, \P_{j_2}) - TV(\P_{j_1}, \P_{j_2}) \right|
\le 
C \sqrt{\frac{\log(K_S K_U/\beta)}{n\, \min_{i,j} p_{i,j}}}.
\]
Therefore, \Cref{prop:est-tv} is satisfied with $c_2 = \sqrt{\frac{\log(K_S K_U)}{\min_{i,j} p_{i,j}}}$ and $\gamma = \tfrac{1}{2}$.

For sensitivity, changing one sample affects at most one $n_{i,j}$ and one $n_j$ by $1$. Hence, $\left| \widehat{\P}_j(i) - \widehat{\P}'_j(i) \right|
\le 
\frac{1}{n_j}
\le 
\frac{1}{n \min_{i,j} p_{i,j}}$. Therefore, for any $(j_1,j_2)$, $\left| \widehat{TV}_{\Dscr}(\P_{j_1}, \P_{j_2}) - \widehat{TV}_{\Dscr'}(\P_{j_1}, \P_{j_2}) \right|
\le 
\frac{1}{n \min_{i,j} p_{i,j}}$.

Taking the maximum over $(j_1,j_2)$ yields
\[
\sup_{\Dscr, \Dscr'} 
\left| \widehat{TV}_{\Dscr} - \widehat{TV}_{\Dscr'} \right|
\le 
\frac{1}{n \min_{i,j} p_{i,j}} = \Theta(n^{-1}).
\]
Therefore, \Cref{prop:sensitive-tv} is satisfied with $c_3 = \frac{1}{\min_{i,j} p_{i,j}}$.
\end{proof}

When $X^S$ is continuous, we apply the following histogram estimate.
\begin{example}[Estimation and Sensitivity Guarantees for Histogram Smoothing]\label{ex:histogram}
Suppose $X^U \in \{b_j \mid j \in [K_U]\}$ with $\P(X^U = b_j) = p_j$, and $X^S \in \mathbb{R}$ satisfies $|X^S|\le B$. 
For each $j \in [K_U]$, let $p_j(\cdot)$ denote the conditional density of $X^S$ given $X^U = b_j$.

We partition $[-B,B]$ into $K_S$ bins $\{B_i\}_{i \in [K_S]}$ of equal width $h$. 
Define the empirical bin counts $n_{i,j} := \sum_{k=1}^n \mathds{1}_{\{x_k^S \in B_i,\,x_k^U=b_j\}}, n_{\cdot,j} := \sum_{i \in [K_S]} n_{i,j}$.
Then the histogram estimator of the conditional density is $\widehat p_j(x) := \frac{n_{i,j}}{n_{\cdot,j} h}, \forall x \in B_i$. The corresponding plug-in estimator of the total variation distance is
\begin{align*}
\widehat{TV}_{\Dscr}(p_{j_1},p_{j_2})
:= \frac{1}{2}\int_{-B}^B |\widehat p_{j_1}(x)-\widehat p_{j_2}(x)|\,dx = \frac{1}{2}\sum_{i \in [K_S]}
\left|
\frac{n_{i,j_1}}{n_{\cdot,j_1}}
-
\frac{n_{i,j_2}}{n_{\cdot,j_2}}
\right|.
\end{align*}
Setting the histogram bandwidth $h = \Theta(n^{-1/3})$, then the conditional total variation estimator 
\[
\widehat{TV}_{\Dscr} = \max_{j_1,j_2 \in [K_U]} \widehat{TV}_{\Dscr}(p_{j_1},p_{j_2})
\]
satisfies Assumptions~\ref{prop:est-tv} and~\ref{prop:sensitive-tv} with parameters $c_2 = B\sqrt{\frac{\log K_S K_U}{\min_j p_j}},  c_3 = \frac{B}{\min_j p_j},  \gamma = \frac{1}{3}$.
\end{example}

\begin{proof}
For any $j_1, j_2 \in [K_U]$, we decompose
\[
\left|\widehat{TV}_{\Dscr}(p_{j_1}, p_{j_2}) - TV(p_{j_1}, p_{j_2})\right|
\le 
TV(p_{j_1}, \widehat p_{j_1}) + TV(p_{j_2}, \widehat p_{j_2}).
\]

We bound each term using the standard bias--variance decomposition for histogram density estimators (see, e.g., as a special case of kernel density estimators  in Chapter 1.2 of \citet{tsybakov2008introduction}). For any $j \in [K_U]$,
\[
TV(p_j, \widehat p_j) 
= \frac{1}{2}\|\widehat p_j - p_j\|_1
\le 
B\Big(h + \sqrt{\frac{\log(K_S/\beta)}{n_j h}}\Big),
\]
with probability at least $1-\beta$.

Since $\min_j p_j > 0$, we have $n_j = \Theta(n)$ uniformly over $j$ with high probability. Taking $h = \Theta(n^{-1/3})$ yields
\[
TV(p_j, \widehat p_j)
\le 
B \sqrt{\frac{\log(K_S/\beta)}{n^{2/3} \min_j p_j}}.
\]
Applying this bound to both $j_1$ and $j_2$, and taking a union bound over all $(j_1,j_2)$, we obtain
\[
\left|\widehat{TV}_{\Dscr}(p_{j_1}, p_{j_2}) - TV(p_{j_1}, p_{j_2})\right|
\le 
B \sqrt{\frac{\log(K_S K_U/\beta)}{n^{2/3} \min_j p_j}}.
\]
Therefore, \Cref{prop:est-tv} is satisfied with $c_2 = B\sqrt{\frac{\log K_S K_U}{\min_j p_j}}, \gamma = \tfrac{1}{3}$.

For sensitivity, changing one sample affects at most one bin count $n_{i,j}$ by $\pm 1$, and hence changes $\widehat p_j$ by at most $O(1/(n_j h))$ on a single bin. Therefore, $\|\widehat p_j - \widehat p'_j\|_1 \le \frac{C}{n_j}$, which implies
\[
\left|TV(p_{j_1}, \widehat p_{j_1}) - TV(p_{j_1}, \widehat p'_{j_1})\right|
\le 
\frac{C}{n_j}
\le 
\frac{C}{n \min_j p_j}.
\]
Applying the same argument to $j_2$ and taking the maximum over $(j_1,j_2)$, we obtain
\[
\sup_{\Dscr, \Dscr'} 
\left| \widehat{TV}_{\Dscr} - \widehat{TV}_{\Dscr'} \right|
\le 
\frac{C}{n \min_j p_j} = \Theta(n^{-1}).
\]
Therefore, \Cref{prop:sensitive-tv} is satisfied with $c_3 = \frac{B}{\min_j p_j}$.
\end{proof}

\paragraph{$X^U$ is continuous.} 
When $X^S$ is continuous, we present the example of estimation if $X$ is jointly Gaussian distributed:
\begin{example}[Estimation and Sensitivity Guarantees for Gaussian Marginal Distributions]\label{ex:guassian-sensitivity}
Suppose $\bm X \sim N(\bm \mu, \bm \Sigma)$. Define the estimators
\begin{equation}\label{eq:empirical-ols-estimator}
\hat\phi 
:= \frac{\sum_{i=1}^n (x_i^U-\bar x^U)(x_i^S-\bar x^S)}
{\sum_{i=1}^n (x_i^U-\bar x^U)^2},
\quad
\hat\psi := \bar x^S - \hat\phi \bar x^U,
\quad
\hat\eta^2 := \frac{1}{n}\sum_{i=1}^n (x_i^S-\hat\phi x_i^U-\hat\psi)^2.    
\end{equation}
The empirical conditional distribution estimator is $\widehat P_u := N(\hat\phi u + \hat\psi, \hat\eta^2)$, and the plug-in estimator of the conditional total variation distance is
\[
\widehat{TV}_{\Dscr}(u_1,u_2)
:=
TV(\widehat P_{u_1}, \widehat P_{u_2}).
\]
Then the conditional total variation estimator
\[
\widehat{TV}_{\Dscr}
:=
\max_{1\le i,j\le n}
\widehat{TV}_{\Dscr}(x_i^U, x_j^U).
\]
satisfies Assumptions~\ref{prop:est-tv} and~\ref{prop:sensitive-tv} with parameters $c_2 = C\frac{\sqrt{\log n}}{\eta}, c_3 = \frac{C}{\eta}, \gamma = \tfrac{1}{2}$ for some constant $C$.
\end{example}

\begin{proof}
First, for any $u$, the conditional distribution satisfies $X^S \mid X^U = u \sim N(\phi u + \psi, \eta^2)$ for some $\phi, \psi, \eta \in \R$ determined by $\bm \mu, \bm \Sigma$. For each $X^{(i)} = x_i, i \in \Uscr$, given
$\begin{pmatrix}X^{\mathcal{S}} \\ X^{(i)}\end{pmatrix} \sim
      \begin{pmatrix}
        \Sigma_{\mathcal{S}\mathcal{S}} & \Sigma_{\mathcal{S}i}\\ 
        \Sigma_{i\mathcal{S}} & \Sigma_{ii}
      \end{pmatrix}
    $, then the posterior distribution $X^\Sscr \sim N(\hat\mu_{\Sscr}(x_i), \hat\Sigma_{\Sscr}(x_i))$ with:
    \begin{equation}\label{eq:posterior-update}
    \hat\mu_{\mathcal{S}}(x_i):= \bm{\mu}_{\mathcal{S} \mid x_i} 
    = 
    \bm{\mu}_{\mathcal{S}}
    +
    \Sigma_{\mathcal{S} i}\,\Sigma_{i i}^{-1}\,(x_i - \mu_i), \quad \hat\Sigma_{\mathcal{S}}(x_i)
    :=
    \Sigma_{\mathcal{S}\mathcal{S}}
    -
    \Sigma_{\mathcal{S} i}\,\Sigma_{i i}^{-1}\,\Sigma_{i \mathcal{S}}.
\end{equation}

For any $u_1,u_2 \in \R$, we have
\[
P_{u_k} = N(\phi u_k+\psi,\eta^2),
\qquad
\widehat P_{u_k} = N(\hat\phi u_k+\hat\psi,\hat\eta^2),
\qquad k=1,2.
\]
Since the two Gaussian distributions have the same variance within each pair, 
\[
TV(p_{u_1},p_{u_2})
=
2\Phi\!\left(\frac{|\phi|\,|u_1-u_2|}{2\eta}\right)-1, \quad  \widehat{TV}_{\Dscr}(u_1,u_2) = 2\Phi\!\left(\frac{|\hat\phi|\,|u_1-u_2|}{2\hat\eta}\right)-1,
\]
with $\Phi(\cdot)$ is the c.d.f. of the standard normal distribution. Therefore,
\begin{align*}
\left|\widehat{TV}_{\Dscr}(u_1,u_2)-TV(p_{u_1},p_{u_2})\right|
& \le
2\sup_{t\in\R}\Phi'(t)
\left|
\frac{|\hat\phi|\,|u_1-u_2|}{2\hat\eta}
-
\frac{|\phi|\,|u_1-u_2|}{2\eta}
\right|\\
& \leq \frac{1}{\sqrt{2\pi}} |u_1-u_2|
\left|
\frac{|\hat\phi|}{\hat\eta}-\frac{|\phi|}{\eta}
\right|
\end{align*}
Next,
\[
\left|
\frac{|\hat\phi|}{\hat\eta}-\frac{|\phi|}{\eta}
\right|
\le
\frac{1}{\hat\eta}\,|\hat\phi-\phi|
+
|\phi|\left|\frac{1}{\hat\eta}-\frac{1}{\eta}\right|.
\]
By standard concentration for least-squares estimators~\eqref{eq:empirical-ols-estimator}, with probability at least $1-\beta$,
\[
|\hat\phi-\phi|
\le
C\sqrt{\frac{\log(1/\beta)}{n}},
\qquad
|\hat\eta-\eta|
\le
C\sqrt{\frac{\log(1/\beta)}{n}}.
\]
Moreover, since $x \sim N(\bm\mu, \bm \Sigma)$, $\max_{1\le i\le n}|x_i^U|
\le C\sqrt{\log(n/\beta)}$ with probability at least $1-\beta$, so that for $u_1,u_2\in\{x_1^U,\dots,x_n^U\}$,
\[
|u_1-u_2|\le C\sqrt{\log(n/\beta)}.
\]
Combining these bounds yields $\left|\widehat{TV}_{\Dscr}(u_1,u_2)-TV(p_{u_1},p_{u_2})\right| \le
C\frac{\sqrt{\log(n/\beta)}}{\eta\sqrt n}$. Therefore, \Cref{prop:est-tv} is satisfied with $\gamma=\tfrac12$ and $c_2=C\frac{\sqrt{\log n}}{\eta}$ for some constant $C$.

For sensitivity, changing one sample perturbs $\hat\phi$ and $\hat\eta$ by at most $O(n^{-1})$. Since $\Phi$ is Lipschitz and $\widehat{TV}_{\Dscr}(u_1,u_2)$ depends smoothly on $\hat\phi/\hat\eta$, it follows that $\left|\widehat{TV}_{\Dscr}-\widehat{TV}_{\Dscr'}\right| \le \frac{C}{\eta}\cdot \frac{1}{n}
= \Theta(n^{-1})$. Therefore, \Cref{prop:sensitive-tv} is satisfied with $c_3=\frac{1}{\eta}$.
\end{proof}

When $X^S$ is discrete, we can run a multi-class logistic regression to estimate the likelihood $P(X^S|X_U = u)$ and calculate the TV distance based on the definition. When the parametric distribution assumption does not hold and $X^S$ is discrete, the kernel density estimates satisfy the same rate as \Cref{ex:histogram}.

\subsection{Extensions to Multiple Sensitive Features}\label{app.extmultfeat}
Our previous examples can also be extended to include multiple sensitive features. In particular, the key arguments carry over with modified dimensional dependence:
\begin{enumerate}
    \item When $X^S$ and $X^U$ are both discrete, let $X^S \in \prod_{k \in [m_s]} \{a_{1,k}, \ldots, a_{K_{S,k},k}\}$ denote a vector of $m_s$ categorical sensitive features. Then the joint distribution can be written as $P\big(X^S = (a_{i_1,1}, \ldots, a_{i_{m_s}, m_s}),\, X^U = b_j\big) = p_{i_1,\ldots,i_{m_s},j}$.
    Applying the same uniform concentration arguments as in \Cref{ex:est-discrete-feature} over all joint categories, we obtain $c_2 = \sqrt{\frac{\log\!\big((\prod_{k \in [m_s]} K_{S,k}) \cdot K_U\big)}{\min_{i_1,\ldots,i_{m_s},j} p_{i_1,\ldots,i_{m_s},j}}}, \gamma = \frac{1}{2}$.
    \item When $X^S$ can be continuous and $X^U$ is discrete, we partition the space of the entire $X^S$ into bins $\{B_j\}$. The empirical density estimator becomes $\hat p_j(x) = \frac{\sum_{k \in [n]} \mathbf{1}\{x_k^S \in B_j,\, x_k^U = b_j\}}{n_j h^{m_s}}$. Then using the same bias--variance decomposition as in \Cref{ex:histogram}, the estimation error scales as $O(h) + O\!\Big(\sqrt{\frac{1}{n h^{m_s}}}\Big)$. Then balancing the two terms in the estimation error with $h = \Theta(n^{-1/(m_s + 2)})$ yields $\gamma = \frac{1}{m_s + 2}$.
    \item When $(X^S, X^U)$ follows a joint Gaussian distribution, the conditional distribution $X^S \mid X^U$ remains Gaussian with parameters given by~\eqref{eq:posterior-update}. Therefore, the same total variation bounds as in \Cref{ex:guassian-sensitivity} apply component-wise, yielding $\gamma = \frac{1}{2}$ by Proposition 2.1 of \citet{devroye2018total}.
\end{enumerate}

\paragraph{General Guideline for Multivariate Mixed Features.} We can apply the kernel estimate to estimate distance when $X^S$ is high-dimensional with both discrete and continuous features via a product kernel, i.e., a Gaussian kernel for continuous features and a discrete kernel for discrete features, which is automated in standard statistical packages (e.g., \citet{hayfield2008nonparametric}).

In general, to estimate $TV(\P_{X^S|X^U = x_1}, \P_{X^S|X^U = x_2})$ from $\Dscr$ when $X^U$ is continuous and $X^S$ is continuous, the following methods can be used:
\begin{itemize}
    \item Nonparametric methods: Use a kernel in $X^U$-space and define weights to 
    construct weighted empirical distributions.
Compute (or approximate) the TV distance of these two estimated distributions. For each $X^U = x$,
\[w_i(x) = \frac{K_h(x_i^U - x)}{\sum_{j= 1}^n K_h(x_j^U - x)},\]
and estimate $\hat p_{x}(s) = \sum_{i = 1}^n w_i(x_1) K_h(s - x_{i}^S)$. Then we can estimate the TV distance via kernel density or histogram.
    \item Parametric methods: Fit a parametric form for $P(X^S|X^U = x)$, e.g., Gaussian distribution in \Cref{ex:guassian-sensitivity}. This way, we can compute TV distance in a closed form or a simple integral or binned manner.
\end{itemize}

\paragraph{High-Dimensional Estimation.}When an analytical solution is unavailable or in a very high dimension, TV distance can be estimated using Monte Carlo sampling. For problems with very high dimensions, one can leverage generative modeling techniques (e.g., $f$-GANs and variational representations) to approximate TV distance since $TV(\P, \Q) = \sup_{\|f\|_{\infty}\leq 1} \E_{\P}[f(X)] - \E_{\Q}[f(Y)]$, and $f$ can be approximated with some parametrized $f_{\theta}$ as a discriminated network with clipping for bounded output. More importantly, exact estimation of TV distance is not necessary -- our framework only requires an upper bound that is not overly loose, as discussed in Section~\ref{subsec:insample}.

\section{Additional details on experiments}\label{app:experiment}

All experiments were run on a PC laptop with Processor 8 Core(s), Apple M1 with 16GB RAM. It took around 40 hours to run all the experiments. All classification and regression problems are implemented through \texttt{scikit-learn} and \texttt{PyTorch}.  We restrict the neighboring databases to allowing changes in sensitive features or one of the insensitive features.

\subsection{Synthetic data}\label{app:synthetic-detail}
\paragraph{Gradient Descent.}  For each private and non-private algorithm, we apply full gradient descent on the ridge regression loss over the entire dataset at each step, using a fixed step size $\alpha = 0.001$. The total number of iterations is set to $T = 4000$. 

\paragraph{TV Distance Estimation.} For two neighboring databases, we permit changes to only one insensitive feature. Given the marginal distribution over both sensitive and insensitive features $X \sim N(\bm \mu, \Sigma)$, there are two different cases:
\begin{itemize}
    \item For those features with indices $i\%2 =0$, i.e., the prior mean $\mu_i = -1$;
    \item For those features with indices $i\%2 = 1$, i.e, the prior mean $\mu_i = 1$.
\end{itemize}
Following the estimators specified in Equation \eqref{eq:posterior-update} and recalling that $\sigma_{i}^2 = C \frac{\log(1/\delta) + 1}{\epsilon^2} TV(i)$, we ensure the $(\epsilon,\delta)$-{\corrdp} guarantee by setting $x_i = \mu_i - 2 \sqrt{\log(2m_s/\delta)}, x_i' = \mu_i + 2\sqrt{\log(2m_s/\delta)}$ such that the $i$-th component is bounded within $[x_i, x_i']$ with probability $1-\frac{\delta}{2},\forall i \in [m_s]$. 
For the empirical estimate, we use the empirical mean and variance from the dataset. 

\paragraph{Additional Results.} To demonstrate the robustness of our method relative to the Standard baseline, we vary the number of sensitive features $m_s$ to 5, 20, 50, 80 (corresponding to $0.05m, 0.2m, 0.5m, 0.8m$), with the results presented in \Cref{fig:acc_privacy2}. We observe qualitatively similar results as in the main body (compare to \Cref{fig:synthetic}), suggesting that the number of sensitive features does not impact the overall findings.

\begin{figure*}[h]
\vspace{-0.3cm}
\centering
\begin{subfigure}[t]{0.85\textwidth}
\centering
\includegraphics[width=\textwidth]{figs/legend.pdf}
\end{subfigure}
\begin{subfigure}[t]{0.4\textwidth}
\centering
\includegraphics[width=\textwidth]{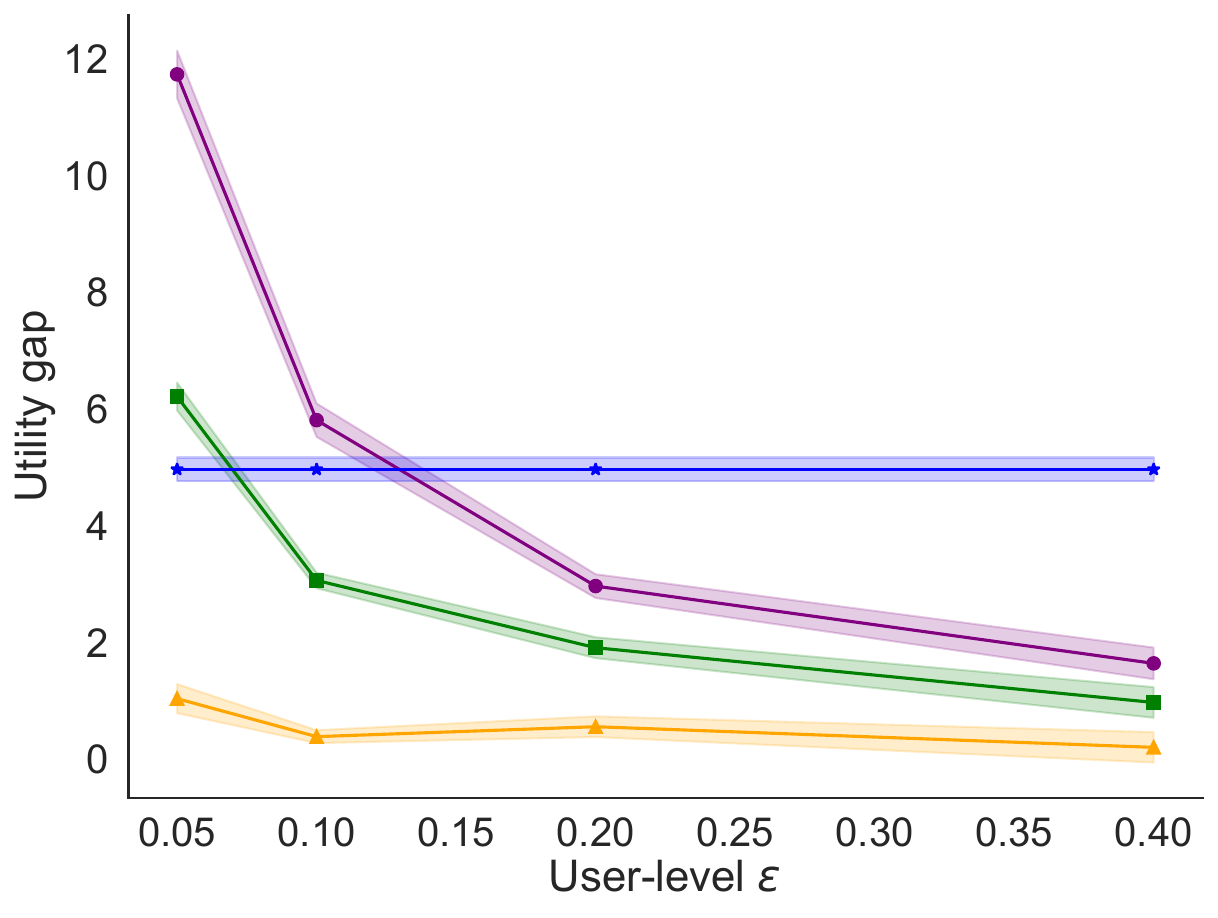}
\vspace{-0.3cm}
\caption{$m_s = 0.05m$} %
\end{subfigure}
\begin{subfigure}[t]{0.4\textwidth}
\centering
\includegraphics[width=\textwidth]{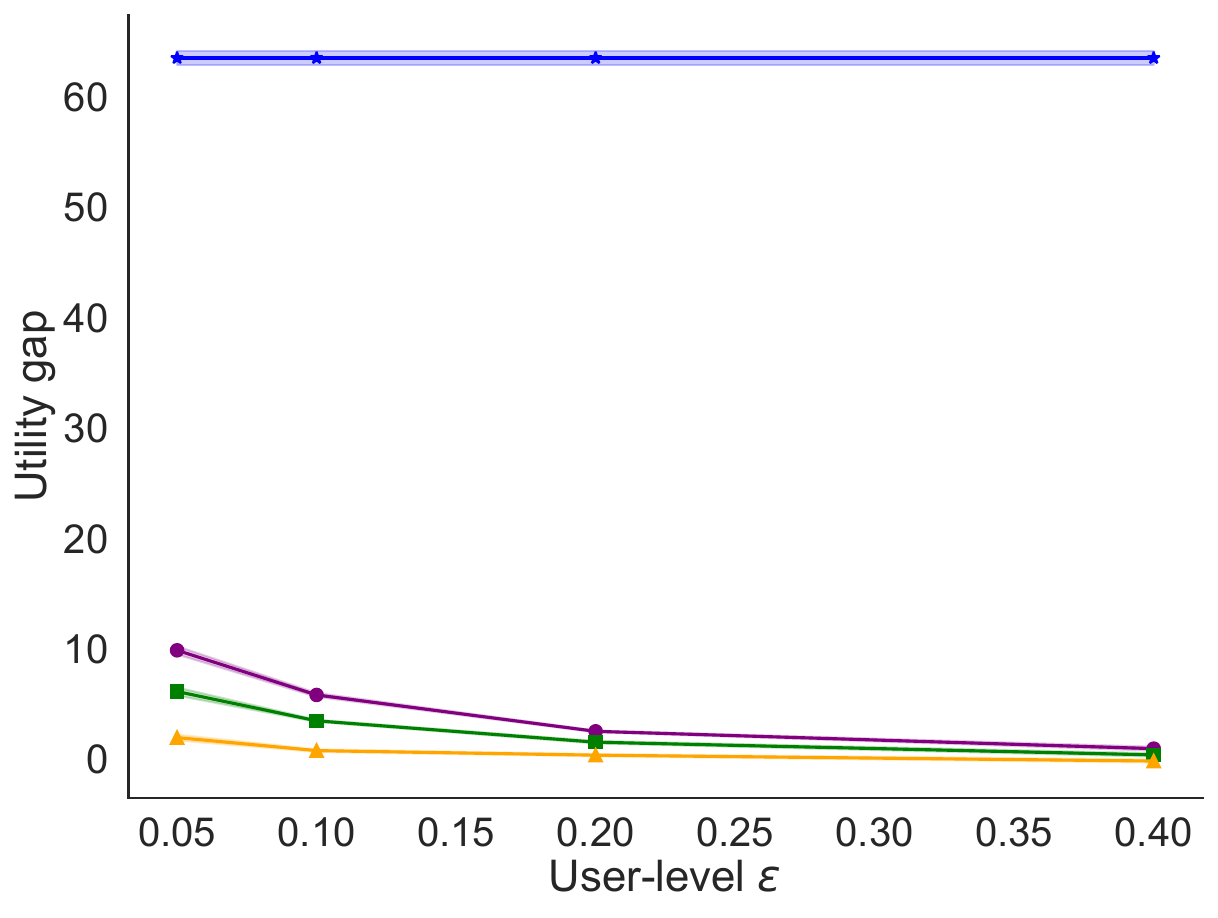}
\vspace{-0.3cm}
\caption{$m_s = 0.2m$}
\end{subfigure}
\begin{subfigure}[t]{0.4\textwidth}
\centering
\includegraphics[width=\textwidth]{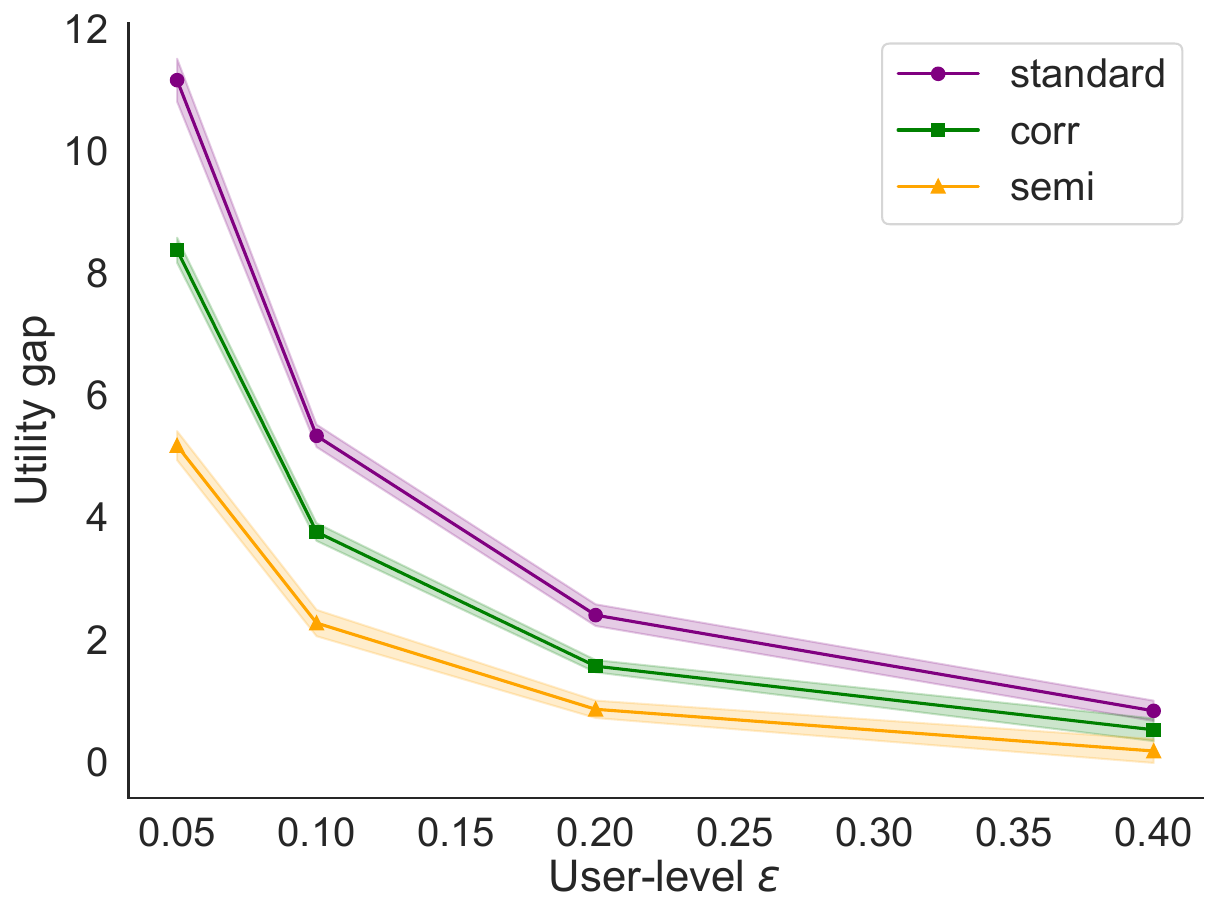}
\vspace{-0.3cm}
\caption{$m_s = 0.5m$} %
\end{subfigure}
\begin{subfigure}[t]{0.4\textwidth}
\centering
\includegraphics[width=\textwidth]{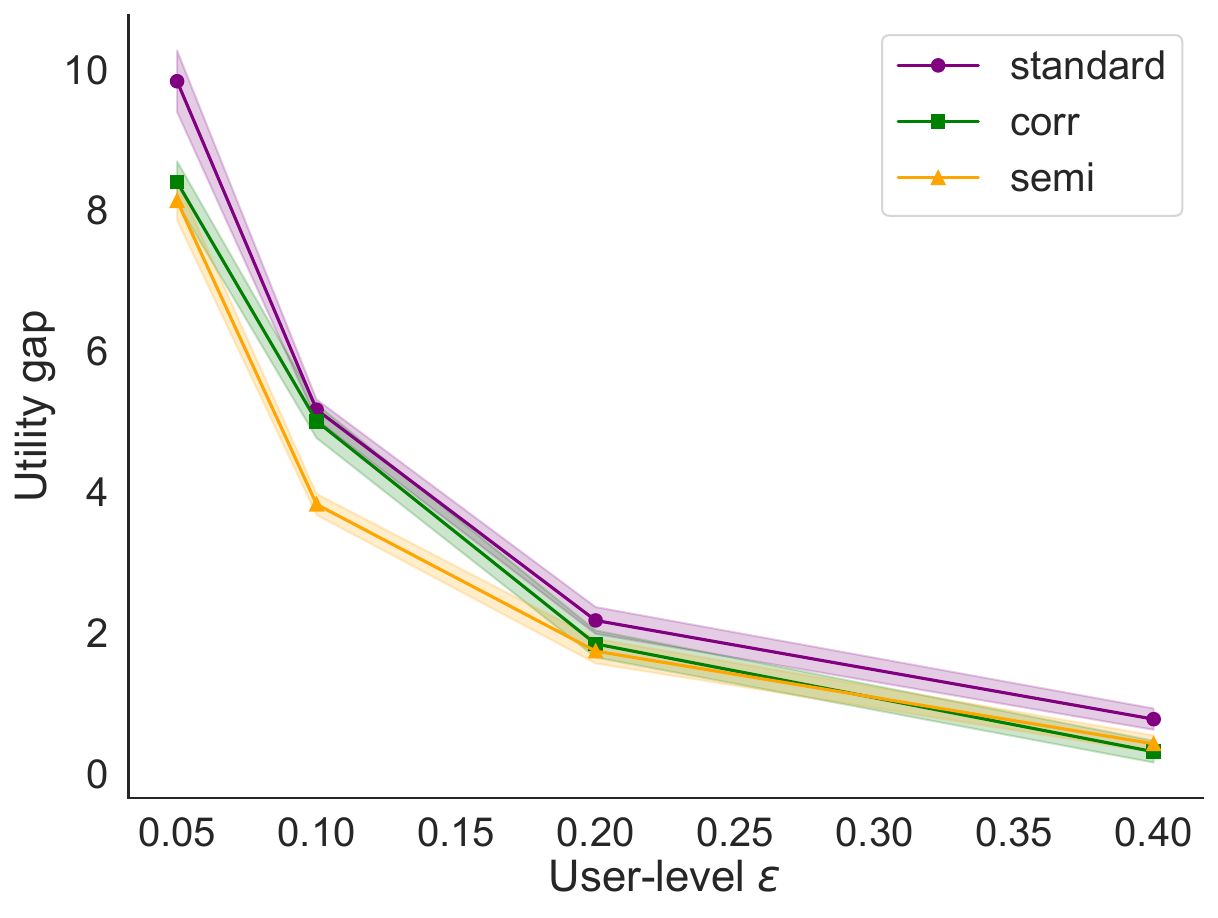}
\vspace{-0.3cm}
\caption{$m_s = 0.8m$}
\end{subfigure}
\caption{Privacy-utility trade-offs for least-square regression under $(\epsilon, \delta)$-DP or $(\epsilon, \delta)$-{\corrdp} on synthetic data.}
\label{fig:acc_privacy2}
\end{figure*}

\subsection{Adult dataset \citep{adult}}\label{app:adult-detail}

\paragraph{Gradient Descent.} We categorize all features as follows:
\begin{table}[!htb]
    \centering
    \caption{Feature Categorization of Adult Dataset}
    \label{tab:my_label}
    \begin{tabular}{c|cc}
    \toprule
         &  Sensitive & Insensitive\\
         \midrule
      Continuous   & \textsf{age, educational-num, hours-per-week} & \textsf{fnlwgt} \\
      Discrete & \textsf{martial-status, relationship} & \textsf{race, gender, workclass}\\
      \bottomrule
    \end{tabular}
\end{table}

We optimize the standard logistic regression loss $\ell(\theta;(x, y)) = -(y\log \sigma(\theta^{\top}x)  + (1-y) \log(1 - \sigma(\theta^{\top}x)))$ for binary $y \in \{0, 1\}$, where $\sigma(u) = 1/(1 + \exp(-u))$. For each private or non-private algorithm, we apply the full gradient descent to the whole dataset at each step with a fixed step size $\alpha = 0.05$. We set the total iteration number $T = 5000$.

\paragraph{TV Distance Estimation.} For two neighboring databases, we permit changes to only one insensitive feature. This introduces two additive contributions to the TV distance: one from the continuous sensitive features and one from the discrete sensitive features.  For a particular insensitive feature $x_u, x_u'$, we take a rough upper bound for calculating the sum of the conditional distance with respect to the sensitive continuous features $X^{\Sscr,c}$ and discrete features $X^{\Sscr,d}$:
\[\max_{x_u, x_u'}TV(\P_{X^{\Sscr}|x_u}, \P_{X^{\Sscr}|x_u'}) \leq \max\{\max_{x_u, x_u'} TV(\P_{X^{\Sscr,c}|x_u}, \P_{X^{\Sscr,c}|x_u'}) + \max_{x_u, x_u'} TV(\P_{X^{\Sscr,d}|x_u}, \P_{X^{\Sscr,d}|x_u'}), 1\}\]
\begin{itemize}
    \item For discrete insensitive features $u \in \{\textsf{race, gender, workclass}\}$: 
    \begin{itemize}
        \item To obtain the discrete distribution $\P_{X^{\Sscr, d}|x_u}$, we apply the multiclass logistic regression (`multinomial') and estimate the probability of all the value combinations of all the discrete variables. Then we calculate the TV distance of these two discrete distributions by its formula and find the maximum $x_u, x_u'$ to obtain $TV(u)$.
        \item To obtain the continuous distribution $\P_{X^{\Sscr, c}|x_u}$, we use the empirical distribution with data points with $X_u = x_u$. Then we calculate the TV distance by histogram approximation as \Cref{ex:histogram}.
    \end{itemize}
    \item For continuous insensitive features $u \in \{\textsf{fnlwgt}\}$:
    \begin{itemize}
        \item To obtain the discrete distribution $\P_{X^{\Sscr, d}|x_u}$, we apply the multiclass logistic regression (`multinomial') and estimate the probability of all the value combinations of all the discrete variables. Then we calculate the TV distance by its formula and find the maximum pair $x_u, x_u'$ to obtain $TV(u)$.
        \item To obtain the continuous distribution $\P_{X^{\Sscr, c}|x_u}$, we model them as Gaussian and apply Equation \eqref{eq:posterior-update} to estimate the posterior, with $x_u = 0, 1$ (since we do MinMaxScaler and each resulting feature lies in [0, 1]). Then we calculate the TV distance by integral following \Cref{ex:guassian-sensitivity}.
    \end{itemize}
\end{itemize}
Additionally, we assign identical noise levels to all one-hot encoding columns corresponding to the same raw feature. The resulting privacy noise scales for each insensitive feature are reported in \Cref{tab:privacy_scale_dataset}.

\begin{table}[!h]
    \centering
    \caption{Relative privacy noise scale for insensitive features in Adult dataset (Note that 1 means the same relative privacy noise scale as sensitive features)}
    \label{tab:privacy_scale_dataset}
    \begin{tabular}{c|cccc}
    \toprule
       Feature Name  & \textsf{race} & \textsf{gender} & \textsf{workclass} & \textsf{fnlwgt} \\
         \midrule
        Scale  & 0.23 & 0.90 & 1 & 0.22 \\
         \bottomrule
    \end{tabular}
\end{table}

Notably, \textsf{workclass} is highly correlated with the features \textsf{educational-num}, \textsf{hours-per-week}. Consequently, despite not being classified as a sensitive feature, \textsf{workclass} requires a relatively high level of privacy protection owing to its correlation with the sensitive features.

\subsection{Sepsis dataset \citep{sepsis}}\label{app:sepsis-detail}

\paragraph{Gradient Descent.} For the neural network, we use a batch size of 1024 and constant step size of 0.005, with a total of $T = 200$ iterations. In each iteration, we clip the gradient norm to 5. The private scale only applies to the insensitive components $\Uscr$ of weight and the bias in the first layer. Denote $\hat y = \sigma_2 (w_2^{\top}\sigma_1(w_1 x + b_1) + b_2)$. Then for $w_1 \in \R^{5\times 3}, b \in \R^{5}$, and the privacy scale only applies to the parameter estimate of $(w_1)_{i,j}, b_i, \forall i \in [5], j \in \Uscr$.

\paragraph{TV Distance Estimation.} Recall that we \textsf{sex} and  \textsf{episode number} are considered insensitive features, and \textsf{age} is considered the sensitive feature, and that \textsf{sex} and \textsf{episode number} are categorical, with 2 and 5 categories respectively. For two neighboring databases, we allow changes of both insensitive features. Therefore, we calculate the max TV distance between the conditional distribution of \textsf{age} on all 10 joint parameter combinations of \textsf{sex} and \textsf{episode number}. We obtain $TV_{\max} = 0.3, \forall i \in \{\textsf{sex}, \textsf{episode number}\}$ using histogram approximation.

\subsection{Credit Card dataset \citep{credit-card}}\label{app:credit-card-medical-cost}

\paragraph{Gradient Descent.} For the logistic regression model,  we apply the full gradient descent to the whole dataset at each step, with a fixed step size $\alpha = 0.005$ and a total of $T = 2000$ iterations.

\paragraph{TV Distance Estimation.} Recall that the insensitive features are \textsf{sex}, \textsf{education}, \textsf{marriage}, and \textsf{age}.  We use the histogram estimate to compute $TV(\textsf{sex}) = 0.103, TV(\textsf{education}) = 0.8113, TV(\textsf{marriage}) = 0.8295, TV(\textsf{age}) = 1$. All discrete features are encoded using one-hot encoding.

\subsection{Medical Cost dataset~\citep{medical-cost}} 
\paragraph{Gradient Descent.} For the linear regression model, we apply full gradient descent over the entire dataset at each step with a fixed step size $\alpha = 0.001$ and $T = 5000$ iterations. For the neural network, we use a batch size of 1000, a constant step size of 0.005, and $T = 1000$ iterations, with gradient norms clipped to 10 at each step. As before, the private scale only applies to the insensitive components $\Uscr$ of weight and the bias in the first layer. 

\paragraph{TV Distance Estimation.} We set \textsf{age}, \textsf{sex}, \textsf{region} as insensitive features, and \textsf{bmi}, \textsf{children}, \textsf{smoker} as sensitive features. After one-hot encoding, the dataset contains 8 features in total and compute $TV_{\max} = 0.36$.

\end{document}